\definecolor{blue}{rgb}{0,0, 0.9}
\definecolor{red}{rgb}{0.9,0,0}
\definecolor{green}{rgb}{0, 0.9,0}
\theoremstyle{definition}
\newtheorem{thm}{Theorem}[section]
\newtheorem{defi}[thm]{Definition}
\newtheorem{rmk}[thm]{Remark}
\newtheorem{prp}[thm]{Proposition}
\newtheorem{lem}[thm]{Lemma}
\def\[{\lf[} \def\]{\ri]}   \def\lf{\left} \def\ri{\right}
\def\inprod#1#2{\langle #1,\, #2\rangle}
\newcommand{\brr}{\mathbb{R}}
\newcommand{\e}{\mathbb{E}}
\newcommand{\p}{\mathbb{P}}
\newcommand{\sn}{\sum_{i=1}^n}
\renewcommand{\S}{\mathbb{S}}
\renewcommand{\cS}{\mathbb{S}}
\renewcommand{\baselinestretch}{1.1}
\numberwithin{equation}{section}
\begin{document}
\title{\huge {Max-Norm Optimization for Robust Matrix Recovery}}
\author{Ethan X. Fang\thanks{Department of Statistics, Department of Industrial and Manufacturing Engineering, Pennsylvania State University, University Park, PA 16802, USA. E-mail: {\tt
xxf13@psu.edu}}
~~~Han Liu\thanks{Department of Operations Research and Financial Engineering, Princeton University, Princeton, NJ 08544, USA. E-mail: {\tt
hanliu@princeton.edu}}
~~~Kim-Chuan Toh\thanks{National University of Singapore, 10 Lower Kent
Ridge Road, Singapore 119076. Research supported in part by Ministry of Education
Academic Research Fund R-146-000-194-112. E-mail:
{\tt
mattohkc@nus.edu.sg}}
 ~~~Wen-Xin Zhou\thanks{Department of Operations Research and Financial Engineering, Princeton University, Princeton, NJ 08544, USA. E-mail: {\tt
wenxinz@princeton.edu}}
 }
 \date{}
\maketitle
\begin{abstract}

This paper studies the matrix completion problem under arbitrary sampling schemes. We propose a new estimator incorporating both max-norm and nuclear-norm regularization, based on which we can conduct efficient low-rank matrix recovery using a random subset of entries observed with additive noise under general non-uniform and unknown sampling distributions. 
This method significantly relaxes the uniform sampling assumption imposed for the widely used nuclear-norm penalized approach, and makes low-rank matrix recovery feasible in more practical settings. Theoretically, we prove  that the proposed estimator {achieves} fast rates of convergence under different settings. Computationally, we propose an alternating direction method of multipliers algorithm to efficiently compute the estimator, which bridges a gap between theory and practice of machine learning methods with max-norm regularization. Further, we provide thorough numerical studies to 
{evaluate} the proposed method using both simulated and real~datasets.

\end{abstract}


\section{Introduction} \label{sec:int}
We consider the matrix completion problem, which aims to reconstruct an unknown matrix based on a small number of entries contaminated by additive noise. This problem has drawn significant attention over the past decade due to its wide  applications, including collaborative filtering (the well-known Netflix problem) \citep{Netflix,bennett2007netflix}, multi-task learning \citep{abernethy2009new,amit2007uncovering,argyriou2008convex}, sensor-network localization 
\citep{toh2006sensor}
and system identification \citep{liu2009interior}. 
Specifically, our goal is to recover an unknown matrix $M^0 \in \RR^{d_1\times d_2}$ based on a subset of its entries observed with noise, say $\{Y_{i_t,j_t}\}_{t=1}^n$. In general, the problem of recovering a partially observed matrix is ill-posed, {as the unobserved entries can take any values without further assumption}. However, in many applications mentioned above, it is natural to impose the condition that the target matrix is of either exact or approximately low-rank, which {avoids the ill-posedness} and makes the recovery possible.

To obtain a low-rank estimate of the matrix, a straightforward approach is to consider the rank minimization problem
\begin{equation}\label{eqn:rank}
\min_{M\in\RR^{d_1\times d_2}} \text{rank}(M), \text{ subject to }\|Y_\Omega - M_\Omega\|_F\le \delta,
\end{equation}
where $\Omega = \{(i_t, j_t): t = 1,\ldots,n\}$ is the index set of observed entries, and $\delta>0$ is a tuning parameter. 
This method directly searches for a matrix of the lowest rank with reconstruction error controlled by $\delta$. However, the optimization problem \eqref{eqn:rank} is computationally intractable due to its nonconvexity. A commonly used alternative is the following convex relaxation of \eqref{eqn:rank}:
\begin{equation}\label{eqn:nuclear}
\min_{M\in\RR^{d_1\times d_2}} \|M\|_*, \text{ subject to }\|Y_\Omega - M_\Omega\|_F\le \delta,
\end{equation}
where $\|\cdot \|_*$ denotes the nuclear-norm (also known as the trace-norm, Ky Fan-norm or Schatten 1-norm), and it  is defined as the sum of singular values of a matrix. Low-rank matrix recovery based on nuclear-norm regularization has been extensively studied in both noiseless and noisy cases \citep{candes2009exact,candes2010power,recht2010guaranteed,koltchinskii2011nuclear,rohde2011estimation,recht2011simpler,keshavan2010matrix,negahban2012restricted}. Furthermore, various computational algorithms have been proposed to solve this  problem. For example, \cite{cai2010singular} propose a singular value thresholding algorithm which is equivalent to the gradient method for solving the dual of a regularized version of \eqref{eqn:nuclear}; \cite{toh2010accelerated} propose an accelerated proximal gradient
method to solve {a least squares version} of \eqref{eqn:nuclear};  \cite{liu2009interior} exploit an interior-point method; \cite{chen2012matrix} adopt an alternating direction method of multipliers approach to solve \eqref{eqn:nuclear}. 

Though significant progress has been made, it {remains} unclear whether the nuclear-norm is the best convex relaxation for the rank minimization problem \eqref{eqn:rank}. Recently, some disadvantages of the nuclear-norm regularization have been noted. For instance, the theoretical guarantee of the nuclear-norm regularization relies on an
assumption that the  indices of the observed entries  are uniformly sampled. That is, each entry is equally likely to be observed as illustrated in Figure \ref{fig:net}(a). This assumption is restrictive in applications. Taking the well-known Netflix problem as an example, our goal is to reconstruct a movie-user rating matrix, in which each row represents a user and each column represents a movie. The $(k, \ell )$-th entry of the rating matrix represents the $k$-th user's rating for the $\ell$-th movie. In practice, we only observe a small proportion of the entries.
In this example, the uniform sampling assumption is arguably violated due to the following reasons: (1) Some users are more active than others, and they rate more movies than others. (2) Some movies are more popular than others and are rated by more users.  As a consequence,  the entries from certain columns or rows are more likely to be observed. See Figure \ref{fig:net}(b) for a simple illustration. To sum up, the sampling distribution can be highly non-uniform in real world applications.

\begin{figure}[ht]
 \centering
\includegraphics[width=0.8\textwidth]{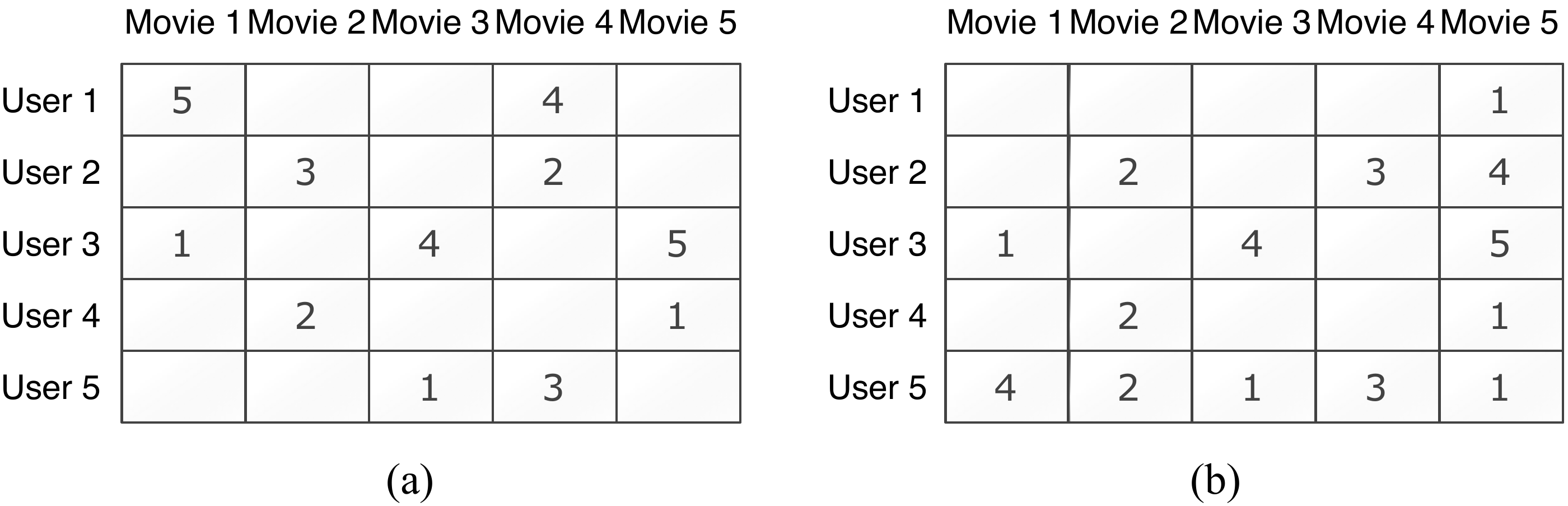}
\caption{(a) The theoretical guarantee of the nuclear-norm estimator assumes each entry is equally likely to be observed. (b) In practice, some entries 
related to some popular movies or some active users, such as Movie 5 or User 5, are more likely to be sampled than others. Thus, the uniform sampling assumption is violated.}
\label{fig:net}
\end{figure}

To relax or even avoid the unrealistic uniform sampling assumption, several recent papers propose to use the matrix max-norm as a convex surrogate for the rank. \cite{srebro2010collaborative} observe from empirical comparisons that the max-norm regularized approach outperforms the nuclear-norm based one for matrix completion and collaborative filtering under non-uniform sampling schemes. \cite{lee2010practical} and \cite{jalali2012clustering} demonstrate the advantage of using max-norm regularizer over nuclear-norm in some other applications. More recently, \cite{cai2013matrix} prove that the max-norm regularized estimator is minimax rate-optimal (over a class of approximately low-rank matrices) {under non-uniform sampling schemes}.

Though the max-norm approach possesses attractive theoretical properties, efficiently solving large-scale max-norm optimization problem remains challenging and prevents the wide adoption of max-norm regularizer.  As we shall see later, despite the fact that the max-norm is  a convex regularizer and can be formulated as a semidefinite programming problem, classical methods such as interior-point methods are only scalable to moderate dimensions, while the problem of {practical interest is of large dimensions. In recent work, 
\cite{lee2010practical} and \cite{shen2014online} propose first-order algorithms for a nonconvex relaxation of the problem.
 However, these methods are sensitive to  the choice of  initial points and stepsizes, and are only 
{capable of producing stationary solutions,}
whose  statistical properties remain open due to the  nonconvexity. 
Meanwhile, although the max-norm estimator is adaptive to general sampling schemes,  it was shown in \cite{cai2013matrix} that if the target matrix is of exact low-rank, and the sampling scheme is uniform, the max-norm estimator only achieves a sub-optimal rate compared to the nuclear-norm estimator. Specifically, letting $\hat{M}_{\max}$ and $\hat{M}_*$ be  the estimators using max-norm and nuclear-norm regularizers,  we have $$(d_1 d_2)^{-1} \|\hat{M}_{\max} - M^0\|_F^2 = \cO_{\PP}( n^{-1/2} \sqrt{rd} ) ~\text{and} ~~ (d_1 d_2)^{-1} \|\hat{M}_* - M^0 \|_F^2 = \cO_{\PP}  (n^{-1}rd \log d),$$ where $r$ is the rank of $M^0$  and $d=d_1 + d_2$. To compare, under the uniform sampling scheme, the nuclear-norm regularized method  achieves the optimal rate of convergence (up to a logarithmic factor) and is computationally more scalable. 

To achieve the advantages of both regularizers,  we propose a new estimator using a hybrid regularizer. 
Meanwhile, we propose an efficient alternating direction method of multipliers (ADMM) algorithm to solve the optimization problem.  Our method includes the max-norm regularizer as a special case, and the proposed algorithm is scalable to  {modestly} large dimensions.
The contribution of this paper is  two-fold: First, we propose an estimator for matrix completion under genearal sampling scheme, which achieves optimal rate of convergence in the exact low-rank case and is adaptive to different sampling schemes. Second, we provide an efficient algorithm to solve the corresponding max-norm plus nuclear-norm penalized optimization problem.  We illustrate  {the} efficiencies of the proposed {methods and algorithms}  by  numerical experiments on both simulated and real datasets.


\vspace{0.1in}
\noindent {\bf Notation.} Throughout this paper, we adopt the following notations. For any positive integer $d$, $[d]$ denotes the set of integers $\{1,2,\ldots,d\}$. For a vector $v = (v_1,\ldots,v_d)^T \in \RR^d$ and a positive number $p\in(0,\infty)$, we denote $\|u\|_p$ 
as  the $\ell_p$-norm, i.e., $\|u\|_p  = \big(\sum_{i=1}^{d}|u_i|^p\big)^{1/p}$. Also, we let $\|u \|_\infty = \max_{i=1,\ldots,d}|u_i|$. For a matrix $M=(M_{k \ell}) \in\RR^{d_1\times d_2}$, let $\|M\|_F = \big({\sum_{k=1}^{d_1} \sum_{\ell=1}^{d_2}M_{k \ell}^2}\big)^{1/2}$ be the Frobenius-norm, and we denote the matrix elementwise $\ell_\infty$-norm by $\|M\|_\infty = \max_{k,l}|M_{k \ell}|$. Given the $\ell_p$ and $\ell_q$ norms on $\RR^{d_1}$ and $\RR^{d_2}$, we define the corresponding $\|\cdot\|_{p,q}$ operator-norm, where $\|M\|_{p,q} = \sup_{\|x\|_p = 1} \|Mx\|_q$. For examples, $\|M\| = \|M\|_{2,2}$ is the spectral-norm, and $\|M\|_{2,\infty} = \max_{k=1,\ldots,d_1} \big({\sum_{\ell=1}^{d_2}M_{k \ell}^2}\big)^{1/2}$ is the maximum row norm of $M$. {We denote by $a\asymp b$ if $c_1b\le a \le c_2b$ for two constants $c_1$ and $c_2$.}

\vspace{0.1in}

\noindent {\bf Paper Organization.} The rest of this paper is organized as follows. In Section \ref{sec:pre}, we review the max-norm approach and formulate the problem. In Section \ref{sec:alg}, we propose the algorithm. In Section \ref{sec:thm}, we provide  theoretical analysis of the estimator. 
We provide extensive numerical studies in Section \ref{sec:sim}, and we conclude the paper in Section \ref{sec:con}.

\section{Preliminaries and  Problem Formulation} \label{sec:pre}

In this section, we first introduce the concept of the matrix max-norm \citep{linial2007complexity}. Next, we  propose a new estimator which involves both max-norm and nuclear-norm regularizers.

\begin{defi} \label{def:maxnorm}
The max-norm of a matrix $M\in\RR^{d_1\times d_2}$ is defined as 
$$
\|M\|_{\max} = \min_{M=UV^T} \|U\|_{2,\infty} \|V\|_{2,\infty},
$$
where the minimum is over all factorizations $M=UV^T$ for $U\in\RR^{d_1\times k}$, $V\in\RR^{d_2\times k}$ for $k = 1, \ldots , \min(d_1, d_2)$, and $\|U\|_{2,\infty}$, $\|V\|_{2,\infty}$ denote the operator-norms {of} $U:\ell_2^k\rightarrow\ell_\infty^{d_1}$ and $V:\ell_2^k\rightarrow\ell_\infty^{d_2}$.
\end{defi} 

We briefly compare the max-norm and nuclear-norm regularizers. We refer to \cite{srebro2005rank} and \cite{cai2013matrix} for more detailed discussions. Recall that the nuclear-norm of the matrix $M$ is defined as
$$
\|M\|_* = \min\Big\{\sum_{j}|\sigma_j|: M = \sum_{j}\sigma_ju_jv_j^T,u_j\in\RR^{d_1}, v_j\in\RR^{d_2}, \|u_j\|_2 = \|v_j\|_2 = 1\Big\}.
$$
From the definition, the nuclear-norm encourages low-rank approximation with factors in the $\ell_2$-space. On the other hand, it is known \citep{jameson1987summing} that the max-norm has a similar interpretation by replacing the {constraints} in the $\ell_2$-space by those in the $\ell_\infty$-space:
$$
\|M\|_{\max} \asymp \min\Big\{\sum_{j}|\sigma_j|: M = \sum_{j}\sigma_ju_jv_j^T,u_j\in\RR^{d_1}, v_j\in\RR^{d_2}, \|u_j\|_\infty = \|v_j\|_\infty = 1\Big\},
$$
where the factor of equivalence is the Grothendieck's constant $K\in (1.67,1.79)$. {More specifically, a consequence of Grothendieck's inequality is that $K_{{\rm G}}^{-1} \| M \|_{1\to \infty} \leq \| M \|_{\max} \leq \| M \|_{1\to \infty}$ \citep{srebro2005rank}, where $\| M \|_{1\to \infty} := \max_{ u \in \RR^{d_2}: \| u \|_1 \leq 1} \| M u \|_\infty$ for any $M\in \RR^{d_1 \times d_2}$.} This gives some intuition on why the max-norm regularizer could  outperform the nuclear-norm regularizer when the matrix entries  are uniformly bounded. This scenario indeed {stands} in many  applications. For example, in the Netflix problem or the  low-rank correlation matrix estimation problem, the entries of the unknown matrix are either ratings or correlation coefficients, and  are uniformly bounded.

As mentioned in Section \ref{sec:int}, the advantages of {using the} max-norm over {the} nuclear-norm  are well illustrated in the literature from both theoretical and practical perspectives. Specifically, we consider the matrix completion problem in a general sampling scheme. Let $M^0\in\RR^{d_1\times d_2}$ denote the unknown matrix to be recovered. {Assume that we are given a random  index set $\Omega$ of size $n$:}
$$
\Omega = \big\{  (i_t,j_t):t=1,\ldots,n  \big\} \subset\big( [d_1]\times[d_2]   \big)^n,
$$
{where $[d_i]=\{1,2,\ldots,d_i\}$ for $i=1,2$.}
We further assume that the samples of the indices are drawn independently from a general sampling distribution ${\Pi} =  \{ \pi_{k \ell}  \}_{k\in[d_1],  \ell  \in[d_2]}$ on $[d_1]\times[d_2]$. Note that we consider the sampling scheme with replacement, i.e., we assume $\PP \{ (i_t,j_t) = ( k, \ell ) \} =\pi_{k \ell}$ for all $t\in [n]$ and all $(k, \ell )\in [d_1]\times[d_2]$. For example, the sampling scheme is uniform if $\pi_{k \ell} =  ( d_1d_2 )^{-1}$ for all $(k, \ell )\in [d_1]\times [d_2]$. 
Given the sampled index set $\Omega$, we further observe noisy entries $\{Y_{i_t,j_t}\}_{t\in [n]}$: 
$$
Y_{i_t,j_t} = M^0_{i_t,j_t} + \sigma\xi_t, \text{ for }t = 1, \ldots , n,
$$
where $\sigma>0$ denotes the noise level, and $\xi_t$'s are independent and identically distributed random variables  with $\EE(\xi_t) = 0$ and $\EE(\xi_t^2) = 1$.

Using the max-norm regularization, \cite{cai2013matrix} propose to construct an estimator 
\begin{equation}\label{eqn:mhat}
\hat{M}_{\max} = \argmin_{M\in\RR^{d_1\times d_2}} \frac{1}{n}\sum_{t=1}^n \big( Y_{i_t,j_t} - M_{i_t,j_t}  \big)^2, \text{ subject to }M\in \cK(\alpha,R),
\end{equation}
where $\cK(\alpha,R) = \big\{M\in\RR^{d_1\times d_2}: \|M\|_\infty \le \alpha, \|M\|_{\max}\le R \big\}$ with $\alpha$ {being} a prespecified upper bound for the elementwise $\ell_\infty$-norm of $M^0$ and $R>0$ a tuning parameter. Note that, in {many real world} applications, we have a tight upper bound on the magnitudes of all the entries of $M^0$ in advance. This condition enforces that $M^0$ should not be too ``spiky'', and a 
{loose upper} 
bound may jeopardize {the} 
 estimation accuracy \citep{negahban2012restricted}. Also, the recent work by \cite{lee2010practical} argues {that} 
the max-norm regularizer {produces} better empirical results on low-rank matrix recovery for uniformly bounded data. 

\cite{cai2013matrix} provide  theoretical guarantees for the max-norm regularizer \eqref{eqn:mhat}. Specifically, {under the approximately low-rank assumption that $\| M^0 \|_{\max } \leq R$},  we have, 
$$
\frac{1}{d_1d_2}\|\hat{M}_{\max} - M^0\|_F^2  = \cO_{\PP}\Big(\sqrt{ \frac{R^2 d}{n}} \,\Big),
$$
where $d = d_1+d_2$. This rate matches the minimax lower bound over all approximately low-rank matrices even under  non-uniform sampling schemes. 
See \cite{cai2013matrix} for more  details.

The optimization problem  \eqref{eqn:mhat}  is computationally challenging.
\cite{cai2013matrix} employ a first-order method proposed in \cite{lee2010practical}. {In particular, \cite{lee2010practical} and \cite{shen2014online} consider first-order methods based on rewriting problem \eqref{eqn:mhat} into the following form:
$$
\min_{U, V} \frac{1}{n}\sum_{t=1}^n(U_{i_t}^TV_{j_t} - Y_{i_t,j_t})^2, \text{ subject to } \|U\|_{2,\infty}^2 \le R,\ \|V\|_{2,\infty}^2 \le R,\ \max_{(k,\ell)\in[d_1]\times[d_2]}|U_k^TV_\ell|\le \alpha,
$$
where $U_i$ and $V_j$ denote the $i$-th row of $U$ and the $j$-th row of $V$, respectively. Then, \cite{lee2010practical} and \cite{shen2014online} consider different efficient first-order methods to solve this problem. 
 However, the problem is nonconvex, and the convergence behaviors  of those methods on such a nonconvex problem are generally sensitive to  
the  choice of the initial point and stepsize selection. More seriously, the algorithms mentioned can only guarantee  local stationary solutions, which may not necessarily possess the nice theoretical properties for the solution to problem 
\eqref{eqn:mhat}.}
More recently, \cite{orabona2012prisma} solve the optimization problem  \eqref{eqn:mhat} without the uniform-boundedness constraint. However, it is unclear 
how to extend their algorithms to solve the  problem \eqref{eqn:mhat} with the  $\ell_\infty$-norm constraint. 

In the next section, we aim to solve the {max-norm penalized} optimization problem
\begin{equation}	\label{eqn:maxnormp}
\min_{M\in\RR^{d_1\times d_2}} \frac{1}{n}\sum_{t=1}^n \big( Y_{i_t,j_t} - M_{i_t,j_t}  \big)^2 + \lambda\|M\|_{\max}, \text{ subject to }\|M\|_\infty\le \alpha,
\end{equation}
where $\lambda>0$ is a tuning parameter. 
By convexity and strong duality, the problem \eqref{eqn:maxnormp} is equivalent to \eqref{eqn:mhat} for a properly chosen $\lambda$.  Specifically, for any $R$ specified in \eqref{eqn:mhat}, there exists a $\lambda$ such that the solutions to the two problems coincide.


As discussed in Section \ref{sec:int}, a major drawback of the max-norm penalized estimator \eqref{eqn:mhat} is that 
 if the {underlying} true matrix $M^0$ is of exact low-rank, and when the sampling scheme is indeed uniform, the max-norm regularizer  does not perform as well as  {the} nuclear-norm regularizer. Since the underlying structure of $M^0$ and the sampling scheme are unknown, it is difficult to choose the better approach in practice. To overcome this issue, we propose the following hybrid estimator which is expected to be more flexible and adaptive: 
\begin{equation} \label{eqn:hybrid}
\widehat{M}: = \argmin_{M \in \RR^{d_1\times d_2} }  \frac{1}{n}\sum_{t=1}^n \big( Y_{i_t,j_t} - M_{i_t,j_t}  \big)^2 + \lambda\|M\|_{\max} + \mu \|M\|_*, \text{ subject to }\|M\|_\infty\le \alpha
\end{equation}
{where $\mu$ is  a nonnegative tuning parameter. The addition of the nuclear-norm penalization
is motivated by the fact that the nuclear-norm also serves as a convex surrogate for the rank of the estimator. Thus, the addition of the nuclear-norm encourages the estimator to be low rank or approximately low rank as compared to the max-norm estimator in \eqref{eqn:maxnormp}. However, note that our primary goal here is not to find a low-rank estimator but one which approximates the underlying matrix $M^0$ at near optimal recovery and is robust against the unknown sampling scheme. It is worth mentioning that the use of the sum of two norms in matrix recovery has been considered in other contexts. For example, in robust principal component analysis \cite{Candes2009RPCA}, the sum of the nuclear and $\ell_1$ norms is used in  the recovery of the low-rank and sparse components of a given superposition. In \cite{DoanVavasis2013}, a similar combination of the two norms (denoted as $\norm{\cdot}_{1,*} := \norm{X}_1+ \theta \norm{X}_*$ for a given matrix $X$ and a parameter $\theta$) is used to find hidden sparse rank-one matrices in a given matrix. The geometry of the unit $\norm{\cdot}_{1,*}$-norm ball is further analyzed in  \cite{drusvyatskiy2015extreme}. It is interesting to note that \eqref{eqn:hybrid} is the first time that the sum of the max-norm and nuclear norm is considered in matrix recovery.

In Section \ref{sec:alg}, we propose an efficient algorithm to solve \eqref{eqn:hybrid}, which includes \eqref{eqn:maxnormp} as a special case by taking $\mu = 0$. Section \ref{sec:thm} provides theoretical justification for the hybrid estimator $\widehat{M}$ in \eqref{eqn:hybrid}. In particular, it achieves fast rate of convergence under the ``ideal" situation, and is robust against non-uniform sampling schemes. To sum up, this estimator possesses {the}
advantages of both {the} max-norm and nuclear-norm regularizers. Section \ref{sec:sim} provides empirical results of the algorithm.
}

\section{Algorithm} \label{sec:alg}
In this section, we propose a new algorithm to solve the  problem \eqref{eqn:hybrid}. 
The key step is to reformulate the problem to expose the structure.

\subsection{Algorithmic Framework}
We first review that the max-norm regularized problem \eqref{eqn:maxnormp} can be equivalently formulated as a semidefinite programming (SDP) problem.  By Definition~\ref{def:maxnorm},  it is unclear  how to efficiently compute the max-norm of a given matrix. 
By \cite{srebro2004maximum}, the max-norm of a matrix $A$ can be computed via solving the following SDP problem:
$$
\|A\|_{\max} = \min R, \text{ subject to }
\begin{pmatrix}
W_1 & A\\
A^T & W_2
\end{pmatrix}
\succeq 0,\ 
\|\diag(W_1)\|_\infty\le R,\ \|\diag(W_2)\|_\infty \le R.
$$
Thus, the max-norm penalized problem \eqref{eqn:maxnormp} can be formulated as an SDP problem that 
\begin{equation}\label{eqn:sdp0}
\begin{aligned}
\min_{Z\in \RR^{d \times d}} &~ \frac{1}{2}\sum_{t=1}^n(Y_{i_t,j_t} - Z^{12}_{i_t,j_t})^2 + \lambda\, { \|\diag(Z)\|_\infty},
\\
\text{subject to }&~\|Z^{12}\|_\infty\le \alpha,\ Z\succeq 0,
\end{aligned}
\end{equation}
where $d = d_1 + d_2$, and
$$
Z = \begin{pmatrix}
Z^{11}&Z^{12}\\
(Z^{12})^T&Z^{22}
\end{pmatrix}, \ Z^{11}\in\RR^{d_1\times d_1}, \ Z^{12}\in\RR^{d_1\times d_2}\text{ and }Z^{22}\in\RR^{d_2\times d_2}.
$$
{One may observe that the problem \eqref{eqn:sdp0}  does not explicitly 
encourage the optimal solutions to be low-rank matrices, although such a property is 
desirable in many practical applications such as collaborative filtering. Thus, we propose
to add the regularization term involving $\inprod{I}{Z}$, which is the convex 
surrogate for the rank of the  positive semidefinite matrix $Z$,  to the objective function in 
\eqref{eqn:sdp0} to obtain the following hybrid optimization problem:}
\begin{equation}\label{eqn:sdp}
\begin{aligned}
\min_{Z\in \RR^{d \times d}} &~\frac{1}{2}\sum_{t=1}^n(Y_{i_t,j_t} - Z^{12}_{i_t,j_t})^2 + \lambda\, { \|\diag(Z)\|_\infty} + {\mu \inprod{I}{Z}},
\\
\text{subject to }&~ \|Z^{12}\|_\infty\le \alpha,\ Z\succeq 0,
\end{aligned}
\end{equation}
where $\mu \geq 0$ is a tuning parameter. {Note that the estimator in \cite{cai2013matrix} is constructed by solving a special case of this problem by setting $\mu = 0$.}

\begin{rmk} 
The problem \eqref{eqn:sdp} is equivalent to the problem \eqref{eqn:hybrid}. To see this, by Lemma 1 of \cite{fazel2001rank}, there exists an SDP formulation of the trace-norm
{such} that $\|M\|_*\le t$ if and only if there exist matrices $Z^{11}\in\RR^{d_1\times d_1}$, $Z^{12} \in \RR^{d_1\times d_2}$ and $Z^{22}\in\RR^{d_2\times d_2}$ 
{satisfying}
$$
\begin{pmatrix}
Z^{11}&Z^{12}\\
(Z^{12})^T&Z^{22}
\end{pmatrix}
\succeq 0,
\quad\text{and }\text{Trace}\big(Z^{11}\big) + \text{Trace}\big(Z^{22}\big) \le 2t.
$$
\end{rmk}

The optimization problem \eqref{eqn:sdp} is computationally challenging. Directly solving the problem by  generic interior-point method based SDP solvers is not {computationally} scalable. This is because the problem of interest is often of high dimensions, and the 
 $\ell_\infty$-norm constraint in \eqref{eqn:sdp} induces a large number of constraints in the SDP.  In addition,  the feasible set is in a very complex form as it involves both the positive semidefinite and $\ell_\infty$-norm constraints. 
Although gradient projection methods are the most straightforward methods to use, the complicated feasible set {also} makes them 
difficult to {be applied}.
This is because applying such a  method requires projecting the intermediate solution to the feasible set, but it is unclear how to efficiently {compute} the projection. 

{

To solve the problem efficiently, we  consider an equivalent form of 
 \eqref{eqn:sdp} below. As we shall see immediately, this formulation is crucial for efficiently solving the problem:
\begin{equation} \label{eqn:XZ}
\min_{X,Z} ~ \cL(Z) + \mu\inprod{I}{X},\text{ subject to } X\succeq 0,\ Z\in\cP,\ X-Z = 0,
\end{equation}
where  the function $\cL(Z)$ 
 and the set $\cP$ are defined as follows:
\begin{eqnarray} \label{eqn:L}
\cL(Z) = \frac{1}{2}\sum_{t=1}^n(Y_{i_t,j_t} - Z^{12}_{i_t,j_t})^2 
+ \lambda\,  \|\diag(Z)\|_\infty, \quad
\cP = \{Z\in\cS^{d} : \|Z^{12}\|_\infty \le \alpha\},
\end{eqnarray}
and $\S^{d}$ denotes the set of symmetric matrices in $\RR^{d\times d}$. 
}

Intuitively, the advantage of formulating the problem \eqref{eqn:sdp} into the form of \eqref{eqn:XZ} is that we ``split" the complicated feasible set of \eqref{eqn:XZ} into two parts. In particular, $X$ and $Z$  in \eqref{eqn:XZ} enforce the positive semidefinite constraint and the $\ell_\infty$-norm constraints, respectively. The motivation of this splitting is that though projection onto the feasible set of \eqref{eqn:sdp}, which contains both the semidefinite and $\ell_\infty$-norm constrains, is difficult, we can efficiently 
{compute the}
projection onto the positive semidefinite set or the $\ell_\infty$-constraint set {individually}. As a result, adopting {an alternating direction approach}, in each step, we only need to project $X$ onto the positive semidefinite cone, 
and control the $\ell_\infty$-norm of $Z$. {
Meanwhile, we impose an additional constraint $X-Z=0$ to ensure the feasibility of both $X$ and $Z$ to the problem \eqref{eqn:sdp}.

To solve \eqref{eqn:XZ}, we consider the augmented Lagrangian function of \eqref{eqn:XZ} 
{defined by}
$$
L(X,Z;W) = \cL(Z) +{\mu\inprod{I}{X}}
+ \langle W,X-Z\rangle + \frac{\rho}{2}\|X-Z\|_F^2, \text{ $X\in\S^{d}_+$, $\ Z\in\cP$},
$$
where $W$ is the dual variable, and $\S_+^d = \{A\in\cS^d: A\succeq 0 \}$ is the positive semidefinite cone.

Then, we apply the ADMM algorithm to solve the problem \eqref{eqn:XZ}. The algorithm runs iteratively, at the $t$-th iteration, we update $(X,Z;W)$ by 
\begin{equation} \label{eqn:admm}
\begin{aligned}
X^{t+1} &= \argmin_{X\in\S^d_+} L(X,Z^t;W^t) 
= \Pi_{\S_+^{d}}\big\{Z^t -\rho^{-1}{(W^t+\mu I)}\big\},\\
Z^{t+1}& = \argmin_{Z\in\cP} L(X^{t+1},Z;W^t) = 
\argmin_{Z\in\cP} ~\cL(Z) +\frac{\rho}{2} \|Z - X^{t+1}- \rho^{-1}W^t\|_F^2,\\
W^{t+1}& = W^{t} + {\tau} \rho(X^{t+1}-Z^{t+1}),
\end{aligned}
\end{equation}
where $\tau \in (0,(1+\sqrt{5})/2)$ is a step-length parameter which is 
typically chosen to be $1.618$. Here,
{$\Pi_{\S_+^d}(A)$ denotes the projection of the matrix $A\in\S^{d}$} onto the semidefinite cone $\S_+^d$. The worst-case $\cO(t^{-1})$ rate of convergence of ADMM method  is shown, {for example}, in \cite{fang2014gadm}.

\subsection{Solving Subproblems}
For fast implementations of the algorithm \eqref{eqn:admm}, it is important to solve the $X$- and $Z$-subproblems of \eqref{eqn:admm} efficiently. For the $X$-subproblem, we have that the minimizer is obtained by truncating all the negative eigenvalues of the matrix $Z^t-\rho^{-1}W^t$ to 0's by Eckart-Young Theorem \citep{trefethen1997numerical}. Moreover, the following proposition provides a  solution to the $Z$-subproblem in \eqref{eqn:admm}, which can be computed efficiently.

\begin{prp}
Let  $\Omega = \{(i_t,j_t)\}_{t=1}^n$ be the index set of observed entries in $M^0$.
 For a given matrix $C\in\RR^{d\times d}$, we have
$$
\cZ(C) = \argmin_{Z\in\cP} ~ \cL(Z) + \frac{\rho}{2} \|Z-C\|_F^2,
$$
where
\begin{equation} \label{eqn:ZC1}
\begin{aligned}
&\cZ(C) = \begin{pmatrix}
\cZ^{11}(C)&\cZ^{12}(C)\\
\cZ^{12}(C)^T&\cZ^{22}(C)
\end{pmatrix},\\
&\cZ_{k \ell}^{12}(C) =
\begin{cases}
\Pi_{[-\alpha,\alpha]} \Big(\frac{Y_{k \ell} + \rho C_{k \ell}^{12}}{1+\rho}\Big), &\quad\text{if }(k, \ell )\in \Omega,\\
\Pi_{[-\alpha,\alpha]} (C_{k \ell}^{12}),&\quad \text{otherwise,}
\end{cases} \\
&\cZ_{k \ell}^{11}(C) = C_{k \ell}^{11}\quad \text{ if } k\neq \ell, \quad \cZ_{k \ell}^{22}(C) =  C_{k \ell}^{22} \quad\text{ if } k\neq \ell, \\
& \diag\big\{\cZ(C)\big\} = \argmin_{z\in\RR^d} ~ \lambda\|z\|_\infty + \frac{\rho}{2}\|\diag(C) - z\|^2_2,
\end{aligned}
\end{equation}
and $\Pi_{[a,b]}(x) = \min\{b,\max(a,x) \}$ projects $x\in\RR$ to the interval $[a,b]$.
\end{prp}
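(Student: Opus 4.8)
The plan is to solve the $Z$-subproblem by exploiting the fact that the objective $\cL(Z) + \frac{\rho}{2}\|Z - C\|_F^2$ separates almost completely across the entries of $Z$, subject only to the coupling introduced by the $\ell_\infty$-norm on the diagonal. First I would write $Z$ in block form matching $C$, and observe that the constraint set $\cP$ only restricts the block $Z^{12}$ (via $\|Z^{12}\|_\infty \le \alpha$) and that $\cL(Z)$ depends on $Z$ only through $Z^{12}$ (through the data-fitting term) and through $\diag(Z)$ (through the $\lambda\|\diag(Z)\|_\infty$ term). Hence the off-diagonal entries of $Z^{11}$ and $Z^{22}$ appear only in the quadratic penalty $\frac{\rho}{2}\|Z-C\|_F^2$ and are unconstrained, so they are minimized by simply matching $C$: this gives $\cZ^{11}_{k\ell}(C) = C^{11}_{k\ell}$ and $\cZ^{22}_{k\ell}(C) = C^{22}_{k\ell}$ for $k \ne \ell$, using symmetry to handle both triangles at once.

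Next I would treat the block $Z^{12}$. Since neither $\cL$ nor $\cP$ couples distinct entries of $Z^{12}$, the minimization decouples entrywise: for each $(k,\ell)$ we minimize $\frac{1}{2}(Y_{k\ell} - z)^2 \mathbf{1}\{(k,\ell)\in\Omega\} + \frac{\rho}{2}(z - C^{12}_{k\ell})^2$ over $z \in [-\alpha,\alpha]$. The unconstrained minimizer of a sum of two quadratics is the precision-weighted average: when $(k,\ell)\in\Omega$ it is $(Y_{k\ell} + \rho C^{12}_{k\ell})/(1+\rho)$, and when $(k,\ell)\notin\Omega$ the data term is absent so it is just $C^{12}_{k\ell}$. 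Because the objective in $z$ is a one-dimensional strictly convex quadratic, projecting the unconstrained minimizer onto the interval $[-\alpha,\alpha]$ gives the constrained minimizer; this yields the stated formula with $\Pi_{[-\alpha,\alpha]}$.

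Finally I would handle the diagonal $z := \diag(Z)$, which is the only genuinely coupled piece because of the term $\lambda\|z\|_\infty$. Collecting the diagonal contributions of the quadratic penalty gives exactly $\frac{\rho}{2}\|\diag(C) - z\|_2^2$ (the diagonals of all three blocks $Z^{11}, Z^{12}, Z^{22}$ contribute, but are all freely choosable subject to no constraint from $\cP$, since the $\ell_\infty$ bound is only on the full block $Z^{12}$ and not its diagonal — one should note here that the diagonal of $Z^{12}$ need not be nonnegative or otherwise special, it is just another coordinate). Thus $z$ solves $\min_{z\in\RR^d} \lambda\|z\|_\infty + \frac{\rho}{2}\|\diag(C)-z\|_2^2$, which is the advertised expression; I would leave this as a proximal operator of the $\ell_\infty$-norm, solvable in $O(d\log d)$ time by sorting, and not expand it further. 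To close the argument I would note that the full objective is the sum of these separated pieces over disjoint sets of variables, that each piece is strictly convex (the diagonal piece strongly convex, the others strictly convex in the relevant coordinate when $(k,\ell)\in\Omega$ and determined by feasibility otherwise), so the concatenation of the individual minimizers is the unique global minimizer over $\cP$.

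The main obstacle — really the only subtlety — is bookkeeping the decomposition of $\frac{\rho}{2}\|Z-C\|_F^2$ into diagonal and off-diagonal parts across the three blocks and verifying that the diagonal entries of $Z^{12}$ are genuinely unconstrained by $\cP$ (the constraint $\|Z^{12}\|_\infty\le\alpha$ does bound them, so one must check that the $\Pi_{[-\alpha,\alpha]}$ already applied to the $Z^{12}$ entries is consistent with treating $\diag(Z^{12})$ as part of $z$ in the last display). I would resolve this by observing that the diagonal of $Z^{12}$ has no interaction with the $\lambda\|z\|_\infty$ term, so it can be optimized as an ordinary $Z^{12}$ entry with its box constraint, and then reconciled: in fact the entry $Z^{12}_{kk}$ appears both in $\|Z^{12}\|_\infty$ (as one coordinate of the block) — so strictly the last display should be understood with the diagonal of $Z^{12}$ handled as in the $Z^{12}$ formula; the statement as written is consistent once one interprets $\diag(C)$ and $z$ blockwise.
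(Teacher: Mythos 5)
Your route is the same as the paper's: exploit separability of $\cL(Z)+\frac{\rho}{2}\|Z-C\|_F^2$ over the blocks of $Z$, solve the $Z^{12}$ entries as one-dimensional box-constrained quadratics (projection of the weighted average onto $[-\alpha,\alpha]$), copy $C$ on the off-diagonal entries of $Z^{11}$ and $Z^{22}$, and leave the diagonal as the proximal problem for $\lambda\|\cdot\|_\infty$ (which the paper then solves in closed form in Lemma \ref{lem:sub}). In that respect the core of your argument is correct and matches the paper's proof.

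The problem is your closing paragraph, which you present as ``the only subtlety'': it rests on a misreading of the block structure. In the $d\times d$ matrix $Z$ with $d=d_1+d_2$, the block $Z^{12}$ occupies rows $1,\dots,d_1$ and columns $d_1+1,\dots,d_1+d_2$, so every entry of $Z^{12}$ --- including $Z^{12}_{kk}$, which sits at position $(k,d_1+k)$ of $Z$ --- lies strictly off the main diagonal of $Z$. Hence $\diag(Z)$ consists only of the diagonals of $Z^{11}$ and $Z^{22}$; the term $\lambda\|\diag(Z)\|_\infty$ never involves $Z^{12}$, and the constraint $\|Z^{12}\|_\infty\le\alpha$ never touches $\diag(Z)$. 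Your claim that ``the diagonals of all three blocks $Z^{11}, Z^{12}, Z^{22}$ contribute'' to $\diag(Z)$ is false, and the reconciliation you attempt (interpreting $\diag(C)$ and $z$ ``blockwise'') is unnecessary: there is no overlap between the box constraint and the $\ell_\infty$ proximal step, and the statement needs no reinterpretation. Delete that discussion and the proof is clean. A genuinely worthwhile bookkeeping point, which you silently pass over (and which the paper's own display \eqref{eqn:equiv} also treats loosely), is that $Z\in\cS^d$, so each entry of $Z^{12}$ appears twice in $\|Z-C\|_F^2$; taken literally with symmetric $C$ this gives a per-entry penalty $\rho(z-C^{12}_{k\ell})^2$ rather than $\frac{\rho}{2}(z-C^{12}_{k\ell})^2$, and correspondingly $(Y_{k\ell}+2\rho C^{12}_{k\ell})/(1+2\rho)$ instead of $(Y_{k\ell}+\rho C^{12}_{k\ell})/(1+\rho)$. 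You should state explicitly which counting convention for the symmetric off-diagonal block you are using so that the constant in your $\cZ^{12}$ formula is consistent with the Frobenius norm actually being minimized.
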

\begin{proof}
By the definition of $\cL(Z)$ in \eqref{eqn:sdp}, we have
$$
\cZ(C) = \argmin_{Z\in\cP} ~ \frac{1}{2}\sum_{t=1}^n(Z^{12}_{i_t,j_t} - Y_{i_t,j_t})^2+ \lambda\|\diag(Z)\|_\infty^2 + \frac{\rho}{2} \|Z-C\|_F^2.
$$
This optimization problem is equivalent to 
\begin{equation}\label{eqn:equiv}
\begin{aligned}
&\min_{Z^{12}} \Big\{\frac{1}{2}\sum_{t=1}^n(Z^{12}_{i_t,j_t} - Y_{i_t,j_t})^2  + \rho \|Z^{12}-C^{12}\|_F^2 :\|Z^{12}\|_\infty\le \alpha  \Big\}\\
+&\min_{Z^{11}}\Big\{ \frac{\rho}{2} \|Z^{11}_{k \ell} -C^{11}_{k \ell} \|_F^2 : k\neq \ell \Big\}+ \min_{Z^{22}}\Big\{ \frac{\rho}{2} \|Z^{22}_{k \ell} -C^{11}_{k \ell} \|_F^2 : k\neq \ell \Big\}\\
+&\min_{\diag(Z)}\Big\{\lambda \|\diag(Z)\|_\infty+ \frac{\rho}{2} \|\diag(C)-\diag(Z) \|_2^2 \Big\}.
\end{aligned}
\end{equation}

For the first term of the above optimization problem, utilizing its separable structure, it is equivalent to
$$
\sum_{(j,k)\in S} \min_{|Z^{12}_{k \ell}|\le \alpha} \Big\{   \frac{1}{2}  \big(Z_{k \ell}^{12} - Y_{k \ell}\big)^2 + \rho\big(Z_{k \ell}^{12} - C_{k \ell}^{12}\big)^2\Big\} + \rho\sum_{(j,k)\not\in S} \min_{|Z_{k \ell}^{12}|\le \alpha}  \big(Z_{k \ell}^{12}-C_{k \ell}^{12}\big)^2 ,
$$
from which we see that its minimizer is given by $\cZ^{12}(C)$.

 In addition, the optimality of $\cZ^{11}(C)$ and $\cZ^{22}(C)$ are obtained by considering the 
{remaining}
terms of \eqref{eqn:equiv}, which concludes the proof.
\end{proof}

Note that in \eqref{eqn:equiv}, we need to solve the following optimization problem
\begin{equation}\label{eqn:sub}
\min_{z\in\RR^d} ~ \beta \|z\|_\infty + \frac{1}{2}\|c - z\|^2_2,
\end{equation}
where $c = (c_1, \ldots ,c_d)^T = \diag(C)$ and $\beta = \lambda/\rho$. 
A direct approach to solve this problem is {to reformulate} it into a linearly {constrained} quadratic programming problem. In the next lemma, we show that 
it actually admits a  closed-form solution.  
For  ease of presentation, we assume without loss of generality that $c_1\geq c_2\geq \ldots\geq c_d\geq 0$.

\begin{lem} \label{lem:sub}
Suppose that $c_1\geq c_2\geq \ldots\geq c_d\geq 0$. 
The solution to the optimization problem \eqref{eqn:sub} is of the form 
$$
z^* = (t^*,\dots,t^*,c_{k^*+1},\dots, c_d)^T,
$$
where $t^* = \frac{1}{k^*} \sum_{i=1}^{k^*} (c_i-\beta)$
and $k^*$ is the index such that 
$c_{k^*+1} <   \frac{1}{k^*} \left(\sum_{i=1}^{k^*} c_i-\beta\right) \leq c_{k^*}$. If no such $k^*$ exists, {then} $z^* = (t^*,\ldots,t^*)^T$, where $t^*=\frac{1}{d}\sum_{i=1}^d(c_i-\beta)$.
\end{lem}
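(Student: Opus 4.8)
The plan is to use strong convexity to force the minimizer of \eqref{eqn:sub} into a ``water‑filling'' shape, and then reduce the problem to a one‑variable minimization. Write $F(z)=\beta\|z\|_\infty+\tfrac12\|c-z\|_2^2$. Since the quadratic part is $1$‑strongly convex and the norm is convex, $F$ is coercive and strictly convex, so it has a unique minimizer $z^*$, characterized by $c-z^*\in\beta\,\partial\|z^*\|_\infty$. I will use the standard fact that $\partial\|z\|_\infty$ is the convex hull of $\{\mathrm{sign}(z_i)e_i:\ |z_i|=\|z\|_\infty\}$ when $z\neq0$ (and the whole $\ell_1$ unit ball when $z=0$); in particular every subgradient is supported on the coordinates of maximal modulus, and on a block of equal positive coordinates it is $\beta$ times a point of the probability simplex on that block.

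First I would establish the clipping identity $z^*_i=\min(c_i,t^*)$ with $t^*:=\|z^*\|_\infty\ge0$. Nonnegativity $z^*_i\ge0$ is clear because $c_i\ge0$: truncating a negative coordinate to $0$ strictly lowers the quadratic term and cannot raise $\|z\|_\infty$. Now fix all coordinates but the $j$‑th and use coordinatewise optimality. If $c_j<t^*$, then the maximum is attained at another coordinate, so on $[-t^*,t^*]$ the term $\|z\|_\infty$ is constant and $F$ reduces in $z_j$ to a pure quadratic, forcing $z^*_j=\Pi_{[-t^*,t^*]}(c_j)=c_j$ (the sign of the one‑sided derivative $\beta+t^*-c_j>0$ rules out $z^*_j=t^*$, and $z_j>t^*$ is never improving). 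If $c_j\ge t^*$, the same ``constant‑norm'' argument on $[-t^*,t^*]$ gives $z^*_j=\Pi_{[-t^*,t^*]}(c_j)=t^*$. Since $c_1\ge\cdots\ge c_d\ge0$, the set $\{j:c_j\ge t^*\}$ is an initial block $\{1,\dots,k^*\}$, so $z^*=(t^*,\dots,t^*,c_{k^*+1},\dots,c_d)^T$, which is exactly the asserted form.

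It then remains to pin down $(t^*,k^*)$. Substituting this shape into $F$ gives $\beta t^*+\tfrac12\sum_{i=1}^{k^*}(c_i-t^*)^2$; since $t^*$ is the conditional minimizer over the level value once the block is fixed, the first‑order condition gives $t^*=t_{k^*}$ with $t_k:=\tfrac1k\big(\sum_{i=1}^{k}c_i-\beta\big)$. The level‑block structure is consistent precisely when $c_{k^*+1}<t_{k^*}\le c_{k^*}$ (with $c_{d+1}:=-\infty$), which is exactly the defining condition for $k^*$ in the statement; if no such index exists among $1,\dots,d-1$, the block is forced to be all of $\{1,\dots,d\}$ and $t^*=t_d=\tfrac1d\big(\sum_{i=1}^{d}c_i-\beta\big)$. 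To finish I would verify directly that the displayed $z^*$ satisfies $c-z^*\in\beta\,\partial\|z^*\|_\infty$: on $\{1,\dots,k^*\}$ the residuals $c_i-t^*$ are nonnegative (as $c_i\ge c_{k^*}\ge t_{k^*}$) and sum to $\beta$ by the definition of $t_{k^*}$, hence are $\beta$ times a simplex point supported on the active block, while for $i>k^*$ the residual vanishes; together with uniqueness of the minimizer this certifies optimality.

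The only genuinely delicate point is the existence and uniqueness of the crossing index $k^*$, and the proof that it is compatible with the block having positive common value (one needs $t_{k^*}\le c_{k^*}$, which in turn forces $c_1-t_{k^*}\le\beta$, so no coordinate of the block ``wants'' to exceed $t^*$); I would obtain both from monotonicity of the partial averages of $c_1\ge\cdots\ge c_d$, showing that the inequality $t_k\le c_k$ can switch on only once as $k$ increases. Alternatively, and more quickly, one can read off the formula from Moreau's decomposition: $z^*=\mathrm{prox}_{\beta\|\cdot\|_\infty}(c)=c-\Pi_B(c)$ with $B=\{w:\|w\|_1\le\beta\}$, because the conjugate of $\beta\|\cdot\|_\infty$ is the indicator of $B$; since $c\ge0$, the Euclidean projection onto $B$ is the soft‑thresholding $\Pi_B(c)_i=(c_i-\theta)_+$ with $\theta\ge0$ chosen so that $\sum_i(c_i-\theta)_+=\beta$ (and $\Pi_B(c)=c$, i.e. $z^*=0$, when $\|c\|_1\le\beta$), whence $z^*_i=c_i-(c_i-\theta)_+=\min(c_i,\theta)$ and the equation $\sum_{i\le k^*}(c_i-\theta)=\beta$ recovers $\theta=t_{k^*}=t^*$. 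Either way, the crux is the bookkeeping over the ordered coordinates $c_1\ge\cdots\ge c_d\ge0$.
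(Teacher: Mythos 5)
Your argument is correct, and it reaches the paper's answer by a related but genuinely different route. The paper first restricts to ordered nonnegative $z$, rewrites \eqref{eqn:sub} with an epigraph variable $t$ as the smooth problem $\min\{\beta t + \tfrac12\|c-z\|^2 : 0\le z_i\le t\}$, and then simply exhibits multipliers $(\mu^*,\hat\mu^*)$ with $\hat\mu_i^*=c_i-t^*$ on the block and checks the KKT system for the guessed point. You instead work with the nonsmooth objective directly: strict convexity gives uniqueness, coordinatewise optimality yields the clipping structure $z_i^*=\min(c_i,t^*)$ (so the block form is derived rather than posited), and optimality is certified by the inclusion $c-z^*\in\beta\,\partial\|z^*\|_\infty$, the residuals $c_i-t^*$ being $\beta$ times a simplex point on the active block — these residuals are exactly the paper's multipliers $\hat\mu_i^*$, so the two certificates coincide, but your derivation replaces the KKT bookkeeping with the subdifferential of the sup-norm, and your Moreau-decomposition shortcut ($z^*=c-\Pi_B(c)$ with $B$ the $\ell_1$-ball of radius $\beta$) identifies the solution as the complement of soft-thresholding, which the paper does not mention. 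Your route is also slightly more careful on the degenerate case: when $\sum_i c_i\le\beta$ the true minimizer is $z^*=0$, which your $\ell_1$-projection argument gives correctly, whereas the paper's fallback $k^*=d$, $t^*=t_d$ can produce a negative (infeasible) value; likewise you consistently use $t^*=\frac{1}{k^*}\bigl(\sum_{i=1}^{k^*}c_i-\beta\bigr)$, which is what the paper's proof (and the defining inequality for $k^*$) actually requires, rather than the slightly garbled expression in the lemma statement. The one place to tighten your write-up is the claim that for $c_j<t^*$ "the maximum is attained at another coordinate": as you note parenthetically, this should be phrased as a one-sided derivative argument ruling out $z_j^*=t^*$, after which the constant-norm reduction applies; but since you finish with the global subgradient certificate plus uniqueness, the proof does not actually lean on that intermediate remark.
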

\begin{proof} Let $z=(z_1,\ldots,z_d)^T$. By the assumption that 
$c_1\geq c_2\geq \ldots\geq c_d\geq 0$, one can prove by contradiction that 
the optimal solution to \eqref{eqn:sub} must satisfy the property that
$z_1\geq z_2\geq \ldots \geq z_d \geq 0$. It is clear that \eqref{eqn:sub} 
is equivalent to the following convex minimization problem:
\begin{eqnarray}
 \min_{z,t} \Big\{ \beta t + \frac{1}{2}\norm{c-z}^2 :   z_i\ge 0,\;  z_i - t \leq 0,\; i=1,\dots, d\Big\},
\end{eqnarray}
whose KKT optimality conditions are given by
\begin{eqnarray*}
\begin{array}{l}
 z- c - \mu + \hat{\mu}=0,  \\[5pt]
 \beta - \sum_{j=1}^d \hat{\mu}_j = 0, \\[5pt]
 \mu_j \geq 0, \;z_j \geq 0,\; \mu_jz_j =0, \; j=1,\dots,d,\\[5pt]
 \hat{\mu}_j \geq 0,\; z_j -t \leq 0, \;\hat{\mu}_j(z_j-t) =0, \; j=1,\dots,d.
 \end{array}
\end{eqnarray*}
Define
$$
 t_k = \frac{1}{k} \left(\sum_{i=1}^k c_i-\beta\right),\quad k=1,\dots,d.
$$
Let $k^*$ be the index such that 
$c_{k^*+1} < t_{k^*} \leq c_{k^*}$.
If no such $k^*$ exists, i.e., {$c_i< t_i$} for all $i=1,\dots,d$,  {then} set $k^*=d$.
{Now} one can verify that the point $(z^*,t^*,\mu^*,\hat{\mu}^*)$ defined below satisfies the
KKT conditions: 
\begin{eqnarray*}
& \mu^* = 0, \;\;t^* = t_{k^*},\;\;
z_i^* = \left\{ \begin{array}{ll}
 t^* &\mbox{for $i=1,\dots,k^*$} ,
\\[5pt]
 c_i  &\mbox{for $i=k^*+1,\dots,d$,} 
\end{array}\right. \;\;
\hat{\mu}_i^* = \left\{ \begin{array}{ll}
 c_i-t^* &\mbox{for $i=1,\dots,k^*$}, 
\\[5pt]
 0 &\mbox{for $i=k^*+1,\dots,d$.} 
\end{array}\right. 
\end{eqnarray*}
Hence $z^*$ is the optimal solution to \eqref{eqn:sub}. This completes the proof. 
\end{proof}

\begin{rmk}
We avoid presenting the general case of $c = (c_1, \ldots ,c_d)^T$ for {simplicity}. The solution in the general case can be derived similarly, and we implement the algorithm for the general case in  later numerical studies.
\end{rmk}




The algorithm for solving problem \eqref{eqn:sdp} is summarized in Algorithm 1.

 \begin{algorithm}[htb]\label{alg:maxnorm}
\caption{Solving max-norm optimization problem (\ref{eqn:sdp}) by the ADMM}
\begin{algorithmic}
\REQUIRE $X^0$, $Z^0$, $W^0$, $Y_\Omega$, $\lambda$, $\mu$, $\alpha$, $\rho$, $\tau$, $t=0$.
\WHILE {Stopping criterion is not satisfied.}
\STATE Update $X^{t+1}\leftarrow\Pi_{\cS_+^{d}}\big\{Z^t -\rho^{-1}{(W^t+\mu I)}\big\}$.
\STATE Update $Z^{t+1}\leftarrow\cZ(X^{t+1} +\rho^{-1}W^t)$ by \eqref{eqn:ZC1}.
\STATE Update $W^{t+1}\leftarrow W^{t} + {\tau}\rho(X^{t+1}-Z^{t+1})$.
\STATE $t\leftarrow t+1$.
\ENDWHILE
\ENSURE $\hat{Z} = Z^{t}$, $\hat{M} = \hat{Z}^{12}\in\RR^{d_1\times d_2}$.
\end{algorithmic}
\end{algorithm}

\begin{rmk}
Taking a closer look at Algorithm 1, 
{we  see that the equivalent reformulation \eqref{eqn:XZ} of the original problem}
 \eqref{eqn:sdp}   brings us computational efficiency. {In particular, all sub-problems can be solved efficiently.} Among them,
the most computationally expensive step is the $X$-update step as we need to compute  
{an} eigenvalue decomposition of the matrix $Z^t -\rho^{-1}W^t$, which has the complexity
{of} $\cO(d^3)$. {Meanwhile, we point out that if a rank-$r$ solution to the $X$-subproblem is desired, the computational complexity can be reduced to $\cO(rd^2)$. }
\end{rmk}

\begin{rmk}
{Note that if the user requires an exact low rank solution, solving the $X$-subproblem can be further accelerated. In particular, we can apply the Eckart-Young Theorem and project the solution onto the nearest face for the target rank. See, for example, \cite{oliveira2015admm}, where this idea is applied to the SDP relaxation of the quadratic assignment problem with nonnegativity constraints added.}
\end{rmk}

In addition to the algorithm for the regularized max-norm minimization problem \eqref{eqn:maxnormp}, we also provide the algorithm for solving the constrained version \eqref{eqn:mhat} in Appendix \ref{app:con}. We focus our discussions on the regularized version since it is computationally more challenging.

\subsection{Stopping Conditions}

In this section, we discuss the stopping conditions for Algorithm 1. 
Denote by $\delta_{\cal C}(\cdot)$ the indicator function over
a given set ${\cal C}$ such that $\delta_{\cal C}(x) = 0$ if $x\in {\cal C}$
and $\delta_{\cal C}(x) = \infty $ if $x\not\in {\cal C}$.
The optimality conditions for 
\eqref{eqn:XZ} {are given as follows:}
\begin{eqnarray}
  0 &\in &     \partial \delta_{\cS^d_+}(X)  + \mu I + W, \quad
  0 \in \partial  \delta_{\cP}(Z) +  \nabla \cL(Z)  - W, \quad
 X- Z =0,
\end{eqnarray}
where $W$ is the Lagrangian multiplier associated with the equality constraint
$X-Z=0$. {Here $\partial \delta_{\cS^d_+}(X)$ denotes the subdifferential 
of $\delta_{\cS^d_+}(\cdot)$ at $X$; similarly for $\partial  \delta_{\cP}(Z)$.}

By {the} 
optimality conditions of $X^{t+1}$ and $Z^{t+1}$ in \eqref{eqn:admm}, we have that
$$
0  \;\in\;  \partial \delta_{\cS^d_+}(X^{t+1}) + \rho \big\{X^{t+1} - Z^t + \rho^{-1}(W^t + \mu I)\big\}
$$
if and only if
$$
 \rho (Z^{t}-Z^{t+1}) + W^{t+1}-\widetilde{W}^{t+1}  
\;\in\;  \partial \delta_{\cS^d_+}(X^{t+1})  + \mu I 
+ W^{t+1},
$$
and 
$$
0 \;\in\;  \partial \delta_\cP(Z^{t+1}) + 
\nabla \cL (Z^{t+1}) + \rho (Z^{t+1}-X^{t+1}-\rho^{-1} W^t)
$$
if and only if 
$$
\widetilde{W}^{t+1} - W^{t+1}  \;\in\;  \partial \delta_\cP(Z^{t+1}) +  \nabla \cL (Z^{t+1}) 
- W^{t+1},
$$
where $\widetilde{W}^{t+1} = W^t + \rho (X^{t+1}-Z^{t+1})$. 
Observe that the iterate $(X^{t+1},Z^{t+1},W^{t+1})$ generated from 
 Algorithm 1 is an accurate approximate optimal solution to 
\eqref{eqn:admm} if the residual
$$
\eta^{t+1} := \max\{ R_P^{t+1}, R_D^{t+1}  \} 
$$
is small, where 
$$
R_P^{t+1} = \norm{X^{t+1}-Z^{t+1}}_F,\quad
R_D^{t+1} = \max\big\{ \norm{\rho (Z^{t}-Z^{t+1}) + W^{t+1}-\widetilde{W}^{t+1} }_F, \; \norm{W^{t+1}-\widetilde{W}^{t+1}}_F\big\},
$$
denote the primal and dual residuals.
 In the practical implementation, we let the algorithm stop when
$\eta^{t+1} \leq 10^{-4}$ or when the number of iterations exceeds $200$.

\subsection{Practical Implementations}

We should mention that tuning  the parameter $\rho$ properly in the ADMM 
method is critical for the method to 
{converge at a reasonable rate.}
In our implementation, starting with the initial value of 
$0.1$ for $\rho$,  we adaptively tune the parameter at every tenth iterations
based on the following criterion: 
\begin{eqnarray*}
 \left\{ \begin{array}{ll} 
\mbox{set } \rho \leftarrow 0.7\rho  &\mbox{if $\norm{R_P^{t+1}} < 0.5\norm{R_D^{t+1}}$},
\\[5pt]
\mbox{set } \rho \leftarrow 1.3\rho  &\mbox{if $\norm{R_D^{t+1}} < 0.5\norm{R_P^{t+1}}$.}
\end{array} \right.
\end{eqnarray*}
The basic idea is to balance the progress of $\norm{R_P^{t+1}}$ and 
$\norm{R_D^{t+1}}$ so that the stopping criterion $\eta^{t+1}\leq 10^{-4}$ can be 
attained within  a small number of iterations. 

Another important computational issue which we need to address is to cut down the 
cost of computing the full eigenvalue decomposition 
in the $X$-update step in Algorithm 1. 
Given a matrix $G\in \cS^d$, 
we observe that to compute the projection $\Pi_{\cS^d_+}(G)$, we  need only
the eigen-pairs corresponding to the positive eigenvalues of $G$. Thus in our implementation, 
we use the LAPACK subroutine {\tt dsyevx.f} to compute only a 
partial eigenvalue decomposition of $G$ if we know that the number
of positive eigenvalues of $G$ is substantially smaller than $d$, say less than 
 $10\%$ of $d$. Such a partial eigenvalue decomposition is typically 
cheaper than a full eigenvalue decomposition when the number 
of eigenvalues of interest is much smaller than the dimension $d$. 
For Algorithm 1, at the $(t+1)$-th iteration, we estimate the 
potential number of positive eigenvalues of $G^t:= Z^t-\rho^{-1}(W^t+\mu I)$ (and hence
the rank of $X^{t+1}$)
based on the rank of the previously computed iterate $X^t$.  
Such an estimation is usually  accurate when the sequence of iterates
$\{(X^t,Y^t,Z^t)\}$ starts to converge. During the initial phase of
Algorithm 1, we do not have a good estimate on the rank of $X^{t+1}$,
and we compute the projection based on the full eigenvalue decomposition
of $G^t$.

To further reduce the cost of computing
$X^{t+1}$ in Algorithm 1, we employ a heuristic strategy to truncate the 
small positive eigenvalues of $G^t$ to 0's. That is, if there is a
group of positive eigenvalues of $G^t$ with magnitudes  which are significantly larger
than the remaining positive eigenvalues, we compute $X^{t+1}$ using only the 
eigen-pairs corresponding to the large positive eigenvalues of $G^t$. 
Such a strategy can significantly reduce the cost of computing $X^{t+1}$ since
the number of large eigenvalues of $G^t$ is typically  small 
in a low-rank matrix completion problem. A surprising bonus of
adopting such a cost cutting heuristic is that the recovery error can
actually become 30--50\% smaller, despite the fact that 
the computed $X^{t+1}$ now is only an approximate solution of the
$X$-update subproblem in Algorithm 1. One possible explanation for 
such a phenomenon is that the truncation of small positive eigenvalues
of $X^{t+1}$ to 0's actually has  a debiasing effect to eliminate the attenuation
of the singular values of the recovered matrix due to the presence of the 
convex regularization term. In the case of compressed sensing, 
such a debiasing effect has been explained in \cite{SJWright}.


\section{Theoretical Properties} \label{sec:thm}
In this section, we provide theoretical guarantees for the hybrid estimator \eqref{eqn:hybrid}. To facilitate our discussions, we introduce the following notations.
Let $X_1,\ldots, X_n$ be i.i.d. copies of a random matrix $X$ with distribution $\Pi=(\pi_{k \ell})_{k\in [d_1], \ell \in [d_2]}$ on the set $\mathcal{X} = \left\{  e_k(d_1) {e_\ell(d_2)^T} , k=1,\ldots, d_1, \ell = 1, \ldots, d_2  \right\}$, i.e., $\p\{X=e_k(d_1) e_\ell(d_2)^T \} = \pi_{k \ell}$, where $e_k(d)$ are the canonical basis vectors in $\brr^d$. By definition, 
\begin{align}
	\| A \|_{L_2(\Pi)}^2 = \e \langle A, X \rangle^2  = \sum_{k=1}^{d_1} \sum_{\ell=1}^{d_2} \pi_{k \ell} A_{k \ell}^2
\end{align}
for all matrices $A= (A_{k\ell})_{1\leq k\leq d_1, 1\leq \ell \leq d_2}\in \brr^{d_1 \times d_2}$. Moreover, let
$$
	\pi_{k\cdot } = \sum_{\ell=1}^{d_2} \pi_{k \ell} \ \ \mbox{ and } \ \  \pi_{ \cdot  \ell} = \sum_{k=1}^{d_1} \pi_{k \ell}  
$$
be, respectively, the probabilities of observing an element from the $k$-th row and the $\ell$-th column.

Considering the exact low-rank matrix recovery, i.e., $\text{rank}(M^0) \le r_0$, the first part of the next theorem shows that the estimator \eqref{eqn:hybrid} achieves a fast rate of convergence under some ``ideal" situations, and the second part indicates that it is also robust against non-uniform sampling schemes. For ease of presentation, we conduct the  analysis by considering a constrained form of \eqref{eqn:hybrid}, {namely},
\begin{equation} \label{eqn:hybrid2}
\hat{M}: = \argmin_{M\in\cK(\alpha,R)} \frac{1}{n}\sum_{t=1}^n \big( Y_{i_t,j_t} - M_{i_t,j_t}  \big)^2  + \mu \|M\|_*, 
\end{equation}
{where } $\cK(\alpha, R) = \{M \in \brr^{d_1 \times d_2} : \|M\|_\infty \le \alpha\text{ and }\|M\|_{\max}\le R\}$. {Our proof partly follows the arguments in \cite{cai2013matrix}. The major technical challenge here is to carefully balance the tuning parameters $R$ and $\mu$ in \eqref{eqn:hybrid2} to achieve the desired recovery results for both the uniform and non-uniform sampling schemes. }

\begin{thm}  \label{thm4.1}
Assume that $\| M^0\|_\infty \leq \alpha $, $\mbox{rank}\,(M^0) \leq r_0$ and that $\xi_1,\ldots, \xi_n$ are i.i.d. $N(0,1)$ random variables. The sampling distribution $\Pi$ is such that $\min_{(k,\ell)\in [d_1]\times [d_2]}\pi_{k\ell} \geq (\nu d_1 d_2)^{-1}$ for some $\nu \geq 1$. Choose $R \geq \alpha r_0^{1/2}$ in \eqref{eqn:hybrid2} and write $d=d_1 + d_2$.

\begin{itemize}
\item[(i)] Let $\mu = c_1 (dn)^{-1/2}$ for some constant $c_1>0$. Then, for a sample size $2 < n \leq  d_1 d_2$, the estimator $\hat{M}$ given at \eqref{eqn:hybrid2} satisfies
\begin{align}
 \frac{1}{d_1 d_2} \| \hat{M}  - M^0 \|_F^2 \lesssim \max  \left\{ \nu^2 \frac{r_0 d}{n} + \nu(\alpha \vee \sigma) \sqrt{ \frac{R^2 d}{n} },\, \nu \alpha^2 \frac{\log d}{n} \right\} \label{rate.2}
\end{align}
with probability at least $1-3d^{-1}$.

\item[(ii)] Let $\mu = c_2 \, \sigma \max_{(k,\ell) \in [d_1] \times [d_2]} (  \pi_{k\cdot} \vee \pi_{\cdot \ell } )^{1/2} $ for some constant $c_2>0$. Then, for a sample size $2 < n \leq  d_1 d_2$, the estimator $\hat{M}$ given at \eqref{eqn:hybrid2} satisfies
\begin{align}
 & \frac{1}{d_1 d_2}\| \hat{M}  - M^0 \|_F^2 \nonumber \\
 &  \lesssim \max  \left\{   \nu^2 ( \alpha \vee \sigma)^2  \max_{k,\ell} (\pi_{k\cdot} \vee \pi_{\cdot \ell}) \frac{r_0 d_1 d_2 \log d }{n},\, \nu \alpha^2 \sqrt{\frac{\log d}{n} } \right\}  \label{rate.1}
\end{align}
with probability at least $1-3d^{-1}$.
\end{itemize}
\end{thm}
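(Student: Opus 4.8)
The plan is to follow the standard program for analyzing penalized least-squares matrix estimators: use the basic inequality coming from optimality of $\hat M$, control the stochastic term (the inner product of the noise/empirical-process error with $\hat M - M^0$) via symmetrization and matrix-deviation bounds, and then convert a bound on the empirical prediction error $\frac1n\sum_t (\hat M - M^0)_{i_t,j_t}^2$ into a bound on the population error $\|\hat M - M^0\|_{L_2(\Pi)}^2$ using a restricted-strong-convexity / uniform-deviation argument, finally invoking $\pi_{k\ell}\ge(\nu d_1d_2)^{-1}$ to pass from $\|\cdot\|_{L_2(\Pi)}$ to $(d_1d_2)^{-1}\|\cdot\|_F^2$. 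Concretely, since $\hat M$ minimizes $\frac1n\sum_t(Y_{i_t,j_t}-M_{i_t,j_t})^2 + \mu\|M\|_*$ over $\cK(\alpha,R)$ and $M^0\in\cK(\alpha,R)$ (using $R\ge\alpha r_0^{1/2}\ge\|M^0\|_{\max}$), expanding $Y=M^0+\sigma\xi$ gives
\begin{align}
\frac1n\sum_{t=1}^n(\hat M-M^0)_{i_t,j_t}^2 \le \frac{2\sigma}{n}\sum_{t=1}^n \xi_t (\hat M-M^0)_{i_t,j_t} + \mu\big(\|M^0\|_* - \|\hat M\|_*\big). \nonumber
\end{align}
Write $\Delta=\hat M-M^0$, which satisfies $\|\Delta\|_\infty\le 2\alpha$ and $\|\Delta\|_{\max}\le 2R$.

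The next step is to bound the two terms on the right. For the noise term, I would introduce the random matrix $\Sigma_\xi=\frac1n\sum_t\sigma\xi_t X_t$ (with $X_t=e_{i_t}e_{j_t}^T$), so that $\frac1n\sum_t\xi_t\Delta_{i_t,j_t}=\langle\Sigma_\xi,\Delta\rangle\le \|\Sigma_\xi\|_2\|\Delta\|_*$; a matrix Bernstein / non-commutative Khintchine bound (as in Koltchinskii–Lounici–Tsybakov and used in \cite{cai2013matrix}) gives $\|\Sigma_\xi\|_2\lesssim \sigma\max_{k,\ell}(\pi_{k\cdot}\vee\pi_{\cdot\ell})^{1/2} + \sigma\sqrt{\log d}/n$ with high probability, which is exactly why $\mu$ is chosen as in part (ii). Combined with $\mu(\|M^0\|_*-\|\hat M\|_*)$ and a decomposition of $\|\Delta\|_*$ relative to the rank-$r_0$ space of $M^0$ (so that $\|M^0\|_*-\|\hat M\|_*\le\|\Delta_{\mathcal A}\|_* \le \sqrt{2r_0}\,\|\Delta\|_F$, where $\Delta_{\mathcal A}$ is the projection onto the at-most-rank-$2r_0$ "on-support" part), the right-hand side becomes $\lesssim \mu\sqrt{r_0}\|\Delta\|_F$ plus lower-order terms. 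For part (i), I would instead absorb the whole noise contribution through the max-norm: bound $\langle\Sigma_\xi,\Delta\rangle\le\|\Sigma_\xi\|_2\cdot\|\Delta\|_*$ is wasteful there, so use $|\langle \Sigma_\xi,\Delta\rangle|\lesssim (\text{something}) \cdot \|\Delta\|_{\max}$ via the max-norm dual (Grothendieck) together with $\mu=c_1(dn)^{-1/2}$ handling the nuclear part; the $\nu^2 r_0 d/n$ term then comes from the rank-$r_0$ structure and the $\nu\alpha^2\log d/n$ floor from the uniform-deviation step below.

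The heart of the argument — and the step I expect to be the main obstacle — is the passage from the empirical norm $\frac1n\sum_t\Delta_{i_t,j_t}^2$ back to $\|\Delta\|_{L_2(\Pi)}^2$, uniformly over the class $\{M-M^0: M\in\cK(\alpha,R)\}$. This requires a peeling/contraction argument bounding $\sup\big|\frac1n\sum_t\Delta_{i_t,j_t}^2 - \|\Delta\|_{L_2(\Pi)}^2\big|$ over matrices with $\|\Delta\|_\infty\le2\alpha$, $\|\Delta\|_{\max}\le 2R$, and $\|\Delta\|_{L_2(\Pi)}\le$ a given radius, for which the key input is the bound on the Rademacher complexity of the max-norm ball, $\mathbb E\sup_{\|\Delta\|_{\max}\le R}\frac1n\sum_t\varepsilon_t\Delta_{i_t,j_t}\lesssim R\sqrt{d/n}$ (Srebro–Shraibman). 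This yields, with high probability, $\|\Delta\|_{L_2(\Pi)}^2 \lesssim \frac1n\sum_t\Delta_{i_t,j_t}^2 + \alpha(\alpha\vee\sigma)\sqrt{R^2 d/n} + \alpha^2\frac{\log d}{n}$ — the source of the $\sqrt{R^2 d/n}$ term in \eqref{rate.2} and the $\sqrt{\log d/n}$ term in \eqref{rate.1}. Carefully tracking the $\nu$ dependence (it enters both when lower-bounding $\pi_{k\ell}$ and, in part (ii), through $\max_{k,\ell}\pi_{k\cdot}\vee\pi_{\cdot\ell}$ which can be as large as $\nu/d_{\min}$), and balancing $R$ against $\mu$ so that one argument simultaneously gives the fast rate under near-uniform sampling (i) and the robust rate under arbitrary sampling (ii), is the delicate bookkeeping the authors flag as the main technical challenge. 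I would close by combining the basic inequality with the uniform-deviation bound, solving the resulting quadratic inequality in $\|\Delta\|_F$ (using $ab\le \frac12(\epsilon a^2+\epsilon^{-1}b^2)$), and dividing by $d_1d_2$ after invoking $\|\Delta\|_{L_2(\Pi)}^2\ge (\nu d_1d_2)^{-1}\|\Delta\|_F^2$.
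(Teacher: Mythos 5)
Your proposal is correct and follows essentially the same route as the paper: the basic inequality from optimality, trace duality with the rank-$r_0$ projection bound $\|M^0\|_*-\|\hat M\|_*\le\sqrt{2r_0}\|\hat M-M^0\|_F$, a spectral-norm (matrix Bernstein) bound on $\Sigma_\xi$ dictating $\mu$ in part (ii), the max-norm Rademacher complexity bound for part (i), an empirical-to-population comparison, and finally AM-GM plus $\pi_{k\ell}\ge(\nu d_1d_2)^{-1}$. The only difference is that the paper assembles these ingredients by citing existing results (inequalities (6.6) and (6.13) of Cai and Zhou, Theorem 3 and Lemmas 5, 12, 13 of Klopp, and Corollary 8.2 of Mackey et al.) rather than re-deriving the symmetrization/contraction/peeling bounds you sketch.
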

\begin{proof}[Proof of Theorem~\ref{thm4.1}]
{Recall that $Y_{i_t,j_t} = M^0_{i_t,j_t} + \sigma \xi_t = 
\inprod{X_t}{M^0} + \sigma\xi_t$ for $t=1,\ldots,n$.}
By the {optimality} of $\hat{M}$ in \eqref{eqn:hybrid2}, {we have that}
\begin{align}
	\frac{1}{n} \sn  \langle X_i, \hat{M} - M^0 \rangle^2 \leq \frac{2\sigma}{n} \sn \xi_i \langle X_i, \hat{M}  - M^0 \rangle + \mu ( \| M^0 \|_* -  \| \hat{M} \|_* ) .   \label{ineq.1}
\end{align}

For each matrix $A\in \brr^{d_1\times d_2}$, denote by $u_j(A)$ and $v_j(A)$ the left and right orthonormal singular vectors of $A$, i.e., $A=\sum_{j=1}^{r} \sigma_j(A) u_j(A) v^T_j(A)$, where $r={\rm rank}(A)$ and $\sigma_1(A) \geq \cdots \geq \sigma_r(A)>0$ are the singular values of $A$. Let $S_1(A)$ and $S_2(A)$ be, respectively, the linear span of $\{u_j(A)\}$ and $\{v_j(A)\}$. Consequently, following the proof of Theorem~3 in \cite{klopp2014noisy} we have
\begin{align}
	\| M^0 \|_* -  \| \hat{M} \|_* \leq \| \mathcal{P}_{M^0}(\hat{M} - M^0) \|_*  - \| \mathcal{P}_{M^0}^{\bot} (\hat{M} - M^0) \|_* ,   \label{ineq.2}
\end{align}
where $\mathcal{P}_A(B) = P_{S_1(A)} B + P_{S_1(A)}^{\bot}  B P_{S_2(A)} $, 
{$\mathcal{P}_A^\perp(B) = P_{S_1(A)}^{\bot} B P_{S_2(A)}^{\bot}$}, 
and $P_S$ denotes the projector onto the linear  subspace $S$.

\medskip
\noindent

(i) Looking at the inequality \eqref{ineq.1}, it follows from (6.6) in \cite{cai2013matrix} that with probability greater than $1-d^{-1}$,
\begin{align}
\frac{1}{n} \sn  \langle X_i, \hat{M} - M^0 \rangle^2 \leq 24 \sigma  \sqrt{\frac{R^2 d}{n}} + \mu \sqrt{2r_0} \| \hat{M} - M^0 \|  , \nonumber
\end{align}
which, together with (6.13) of \cite{cai2013matrix}, implies that with probability at least $1-3d^{-1}$,
\begin{align}
   \frac{1}{2 \nu d_1 d_2 }\| \hat{M} - M^0  \|_F^2
  &  \leq  \frac{1}{2}\| \hat{M} - M^0  \|_{L_2(\Pi)}^2  \nonumber \\
  & \leq \max\left\{  \mu \sqrt{2 r_0} \| \hat{M} - M^0 \|_F  +  C_1 ( \alpha + \sigma) \sqrt{\frac{R^2 d}{n}}    ,\,   C_2  \alpha^2 \frac{\log d}{n}  \right\} \nonumber \\
  & \leq \max\left\{   \frac{1}{4\nu d_1 d_2} \| \hat{M} - M^0 \|_F^2 + 2 \nu r_0 d_1 d_2 \mu^2  +  C_1 ( \alpha + \sigma)\sqrt{\frac{R^2 d}{n} }   , \,  C_2  \alpha^2 \frac{\log d}{n}  \right\} , \nonumber
\end{align}
where $C_5, C_6>0$ are absolute constants. This proves \eqref{rate.2} by rearranging the constants.

(ii) First we assume that the regularization parameter $\mu$ satisfies $\mu \geq 3 \| \Sigma_\xi \|$, where $\Sigma_\xi : = n^{-1} \sn \xi_i X_i \in \brr^{d_1 \times d_2}$. By \eqref{ineq.2} and the inequality $|\langle A, B \rangle |\leq \| A \|  \cdot \| B \|_* $ {which holds for all matrices $A$ and $B$ where $\norm{A}$ is the spectral norm}, the right-hand side of \eqref{ineq.1} is bounded by
\begin{align}
	& ( 2\sigma \| \Sigma_\xi \| + \mu )   \| \mathcal{P}_{M^0} (\hat{M} - M^0) \|_*  + ( 2\sigma \| \Sigma_\xi \|  - \mu )  \| \mathcal{P}_{M^0}^{\bot} (\hat{M} - M^0) \|_*  \nonumber \\
	&  \leq \frac{5}{3} \mu \| \mathcal{P}_{M^0} (\hat{M} - M^0) \|_* \leq \frac{5}{3} \mu \sqrt{ 2 r_0 } \|  \hat{M} - M^0 \|_F 
\end{align}
whenever $\mu \geq 3\sigma \| \Sigma_\xi \|$, where $r_0={\rm rank}(M^0)$. 

Let $\varepsilon_1, \ldots, \varepsilon_n$ be i.i.d. Rademacher random variables. Then, it follows from Lemmas~12 and 13 in \cite{klopp2014noisy} that with probability greater than $1-2d^{-1}$,
\begin{align}
 & \frac{1}{2\nu d_1 d_2 } \| \hat{M} - M^0 \|_F^2  \nonumber \\
 & \leq  \frac{1}{2} \| \hat{M} - M^0 \|_{L_2(\Pi)}^2 \nonumber  \\
 &  \leq \max\left\{ \frac{5}{3} \mu    \sqrt{2 r_0} \| \hat{M} - M^0 \|_F + C_3 \nu \alpha^2 r_0 d_1 d_2 (\e \| \Sigma_{\varepsilon} \| )^2 ,   C_4 \alpha^2 \sqrt{ \frac{ \log d}{n} }\right\}  \nonumber \\
 & \leq \max\left[ \frac{1}{4\nu d_1 d_2 } \| \hat{M} - M^0 \|_F^2 + \nu r_0 d_1 d_2\left\{ 6   \mu^2   + C_3  \alpha^2  (\e \| \Sigma_{\varepsilon} \| )^2 \right\}  ,   C_4 \alpha^2 \sqrt{\frac{ \log d}{n} } \right]  , \nonumber
\end{align}
where $C_3, C_4>0$ are absolute constants and $\Sigma_\varepsilon : = n^{-1} \sn \varepsilon_i X_i$.

It remains to consider the quantities $\| \Sigma_\xi \| $ and $\e \| \Sigma_\varepsilon \| $. For $\| \Sigma_\xi\|$ with Gaussian multipliers $\xi_1,\ldots, \xi_n$, applying Lemma~5 in \cite{klopp2014noisy} yields that, for every {$n >0 $},
\begin{align}
	\| \Sigma_\xi \| \leq C_5 \max_{(k,\ell) \in [d_1] \times [d_2]} (  \pi_{k\cdot} \vee \pi_{\cdot \ell } )^{1/2} \sqrt{\frac{ \log d}{n} } +  C_6 \frac{ (\log d)^{3/2} }{n} \nonumber
\end{align}
holds with probability at least $1-d^{-1}$, where $C_5,C_6>0$ are absolute constants. Furthermore, by Corollary~8.2 in \cite{mackey2014matrix},
\begin{align}
 	\e \| \Sigma_{\varepsilon} \| \leq  \max_{(k,\ell) \in [d_1] \times [d_2]} (  \pi_{k\cdot} \vee \pi_{\cdot \ell } )^{1/2}  \sqrt{\frac{3\log d}{n} } + \frac{\log d }{n}. \nonumber
\end{align}
Together, the previous three displays prove \eqref{rate.1}.
\end{proof}

\begin{remark}  \label{rmk4.1}
{\rm
It is known that both the trace-norm and the max-norm serve as semidefinite relaxations of the rank. In the context of approximately low-rank matrix reconstruction, we consider two types of convex {relaxations} for low-rankness. For any $\alpha, R >0$, define {the} matrix classes
\begin{align}
	\mathcal{M}_{\max}(\alpha, R)  = \left\{ M \in \brr^{d_1\times d_2} : \| M \|_\infty \leq \alpha, \| M \|_{\max} \leq R \right\} \nonumber 
\end{align}
and
\begin{align}
	 \mathcal{M}_{{\rm tr}}(\alpha, R)  = \left\{ M \in \brr^{d_1\times d_2} : \| M \|_\infty \leq \alpha,  (d_1 d_2)^{-1/2}\| M \|_* \leq R \right\} . \nonumber
\end{align}
For any integer $1\leq r\leq \min(d_1, d_2)$, set $\mathcal{M}_{{\rm r}}(\alpha, r)  = \left\{ M \in \brr^{d_1\times d_2} : \| M \|_\infty \leq \alpha, \mbox{rank}\,(M) \leq r \right\}$ and note that $\mathcal{M}_{{\rm r}}(\alpha, r) \subset  \mathcal{M}_{\max}(\alpha, \alpha r^{1/2}) \subset \mathcal{M}_{{\rm tr}}(\alpha, \alpha r^{1/2})$. The following results \citep{cai2013matrix} provide recovery guarantees for approximately low-rank matrix completion in the sense that the target matrix $M^0$ either belongs to $\mathcal{M}_{\max}(\alpha, R)$ or $\mathcal{M}_{{\rm tr}}(\alpha, R)$ for some $\alpha, R>0$. As before, set $d=d_1 + d_2$.
\begin{itemize}
\item[(i)] Assume that $M^0 \in \mathcal{M}_{\max}(\alpha, R)$ and $\xi_1,\ldots, \xi_n$ are i.i.d. $N(0,1)$ random variables. Then, for a sample size $2<n\leq d_1 d_2$, the max-norm constrained least squares estimator $\hat{M}_{\max} := \argmin_{M \in \mathcal{K}(\alpha, R)} \frac{1}{n}\sum_{t=1}^n \big( Y_{i_t,j_t} - M_{i_t,j_t}  \big)^2$ satisfies 
\begin{align*}
\| \hat{M}_{\max} -  M^0  \|_{L_2(\Pi)}^2 \lesssim  (\alpha \vee \sigma ) \sqrt{ \frac{R^2 d}{n} } +   \frac{\alpha^2\log d}{n}  \nonumber
\end{align*}
with probability at least $1-3d^{-1} $.

\item[(ii)] Assume that $M^0 \in \mathcal{M}_{{\rm tr}}(\alpha, R)$, $\xi_1,\ldots, \xi_n$ are i.i.d. $N(0,1)$ random variables and that the sampling distribution $\Pi$ is uniform on $\mathcal{X}$. Then, for a sample size $2<n\leq d_1 d_2$, the trace-norm penalized estimator $\hat{M}_{{\rm tr}} := \argmin_{M : \| M \|_\infty \leq \alpha } \frac{1}{n}\sum_{t=1}^n \big( Y_{i_t,j_t} - M_{i_t,j_t}  \big)^2+ \mu \| M \|_*$ with $\mu \asymp \sigma \sqrt{ \frac{\log d}{d n} }$ satisfies
\begin{align}
	\frac{1}{d_1 d_2} \| \hat{M}_{{\rm tr}} -  M^0  \|_F^2 \lesssim  (\alpha \vee \sigma ) \sqrt{\frac{R^2 d \log d}{n} } + \frac{ \alpha^2 \log d}{n} \nonumber
\end{align}with probability at least $1-3d^{-1}$.

\end{itemize}}
\end{remark}

\begin{remark}  \label{rmk4.2}
{\rm
When the underlying matrix has exactly rank $r$, i.e., $M^0 \in \mathcal{M}_{{\rm r}}(\alpha,r)$, it is known that using the trace-norm regularized approach leads to a mean square error of order $\cO(  n^{-1} rd \log d )$. Under the uniform sampling scheme, the trace-norm regularized method is the most preferable one as it achieves optimal rate of convergence (up to a logarithmic factor) and is computationally attractive, although from a practical point of view, the uniform sampling assumption is controversial.  {In comparison with the result in \cite{cai2013matrix}, which is suboptimal under the uniform sampling scheme and exact low-rank setting, here  we established near optimal  recovery results (up to a logarithmic factor) under such a setting, and we can still guarantee recoveries under non-uniform sampling schemes.}

An important message we wish to convey is that, when learning in a non-uniform world, the underlying sampling distribution also contributes to the recovery guarantee. More specifically, Part (ii) of Theorem~\ref{thm4.1} sheds light on how  the sampling distribution affects the recovery error bound.  The optimal rate of convergence in the class of low-rank matrices is also achieved by $\hat{M}$ when the sampling scheme is  uniform. From \eqref{rate.2} and \eqref{rate.1}, we see that the actual performance of the hybrid estimator $\hat{M}$ depends heavily on the sampling distribution and so is the optimal choice of the regularization parameter $\mu$.
}
\end{remark}

\section{Numerical Experiments} \label{sec:sim}
We  compare 
{the} nuclear-norm, max-norm and hybrid regularizers for matrix completion on an iMac with Intel i5 Processor at 2.7GHz with 16GB memory. We test different methods on simulated and real datasets.  All the  tuning parameters are chosen by data splitting.


\subsection{Simulated Datasets}
We first test the methods on simulated data, where we consider three sampling schemes. In the first scheme, the indices of observed entries are uniformly sampled  without replacement, while in the other two schemes, the indices are sampled non-uniformly.
Specifically, in  all  three schemes, we let the target matrix $M^0$ be generated by $M^0 = M_LM_R^T$, where  $M_L$ and $M_R$ are two $d_t\times r$ matrices, and each entry is sampled independently from a standard normal distribution $N(0,1)$. Thus, {$M^0\in \RR^{d_t\times d_t}$} is a rank $r$ matrix. In all  three settings, as listed in Tables \ref{tab:comadm1}, \ref{tab:comadm2} and \ref{tab:comadm3}. we consider different combinations of dimensions, ranks and sampling ratios (SR), where  {$\text{SR} = n/d_t^2$.}
{We compare the matrix recovery results using the nuclear-norm, max-norm penalized estimators and the hybrid estimator. For the nuclear-norm approach, we compute the estimator by adopting the accelerated proximal-gradient method discussed in \cite{toh2010accelerated}. For the max-norm and hybrid approaches, we compute the estimator by solving problem \eqref{eqn:sdp} using Algorithm \ref{alg:maxnorm}, where $\mu$ in \eqref{eqn:sdp} is set to 0 when we compute the max-norm penalized estimator.
}

In Scheme 1, we uniformly sample the entries. In Schemes 2 and 3, we conduct non-uniform sampling schemes in the following way. Denote by $\pi_{k\ell}$ the probability that the $(k,\ell)$-th entry is sampled. 
For each $(k,\ell) \in [d_t]\times [d_t]$, let $\pi_{k \ell} = p_kp_\ell$, where we let $p_k$ (and $p_\ell$) be
$$
p_k =
\begin{cases}
2p_0 \quad &\text{ if }k\le \frac{d_t}{10}\\
4p_0 \quad &\text{ if }\frac{d_t}{10}<k\le \frac{d_t}{5}\\
p_0 \quad &\text{ otherwise},
\end{cases}
\text{ for Scheme 2,\;\; and }\;\;
p_k =
\begin{cases}
3p_0 \quad &\text{ if }k\le \frac{d_t}{10}\\
9p_0 \quad &\text{ if }\frac{d_t}{10}<k\le \frac{d_t}{5}\\
p_0 \quad &\text{ otherwise},
\end{cases}
\text{ for Scheme 3},
$$
and $p_0$ is a normalizing constant such that $\sum_{k=1}^{d_t}p_k = 1$. 


In the implementation of Algorithm 1, we set
the tuning parameter $\lambda$  {to be proportional to $\norm{Y_{\Omega}}_F$},
as suggested by \cite{toh2010accelerated},
{where $Y_{\Omega}$ denotes the partially observed matrix.
From the theoretical analysis in Section \ref{sec:thm},  we have that 
the parameter $\mu$ should  be smaller than $\lambda$ by a factor of about
${({d_1d_2}})^{-1/2}=d_t^{-1}$ in the hybrid approach. 
}

To evaluate the matrix recovery results, we adopt the metric of relative error (RE) 
{defined by}
$$
\text{RE} = \frac{\|\hat{M} - M^0\|_F}{\|M^0\|_F},
$$
where $\hat{M}$ is the output solution by the algorithm. We consider different settings of $d_t$, $r$ and SR. We run simulations under each setting  for five {different instances}. 
{We first consider the noiseless cases.
The averaged relative errors and running times are summarized in the upper halves of Tables \ref{tab:comadm1}, \ref{tab:comadm2} and \ref{tab:comadm3}, corresponding to Schemes 1, 2 and 3, respectively.} 
In Table \ref{tab:comadm1}, where uniformly sampled data is considered, we find {that}
the nuclear-norm approach obtains the best 
recovery results. 
Meanwhile, we find {that} the hybrid approach performs significantly better than the 
{pure}
max-norm approach. This observation is 
{consistent}
with the existing theoretical result that max-norm regularization does not perform as well as nuclear-norm {regularization}
if the observed entries are indeed uniformly sampled, and the proposed hybrid approach significantly boosts the performance of the max-norm regularized method without specifying data generating schemes.
In Tables \ref{tab:comadm2} and \ref{tab:comadm3}, where non-uniform sampling distributions are considered, we observe that both max-norm regularized and hybrid approaches significantly outperform the nuclear-norm approach, especially when the sampling ratio is low.
This observation matches the theoretical analysis in Section \ref{sec:thm} and
 \cite{cai2013matrix}.
 We also find that the hybrid approach always outperforms the max-norm approach. This is because, while the max-norm approach is robust, the additional nuclear norm penalization helps {to} fully utilize the underlying  low-rank 
{structure} in our generating schemes.

Next, we consider settings with noises, where we use the same sampling schemes as in Schemes 1, 2 and 3, and for each sampled entry, we observe a noisy sample:
$$
Y_{i_t,j_t}  = M^0_{i_t,j_t} + \sigma \xi_t\cdot \|M^0\|_\infty ,\text{ where }\sigma = 0.01 \text{ and }\xi_t\sim N(0,1).
$$
We report the averaged relative errors and running times in the lower halves of Tables  \ref{tab:comadm1}, \ref{tab:comadm2} and \ref{tab:comadm3}. As expected, under non-uniform sampling schemes,  the max-norm and hybrid approaches produce better recovery results than the nuclear-norm approach, and the hybrid approach outperforms the max-norm approach. Surprisingly, we find that under the uniform sampling scheme, the max-norm and hybrid approaches also outperform the nuclear-norm approach in the noisy setting. These observations provide  further  evidences that the max-norm and hybrid approaches are more robust to noises and sampling schemes than the nuclear-norm {approach} in practice.

{In addition, for the noisy setting, we plot how the relative errors decrease as sampling ratios increase under the three schemes. Specifically, for $\text{SR} = 0.08, 0.10,..., 0.22$, $r = 3, 5, 10$ and $d_t = 500$ and $1000$, we plot the averaged relative errors over five repetitions in Figures \ref{fig:s1}, \ref{fig:s2} and \ref{fig:s3}. Under the uniform sampling scheme, Figure \ref{fig:s1} shows that the nuclear-norm approach provides the best recovery results while the hybrid approach performs
much better than the max-norm approach.  Under non-uniform sampling schemes, Figures \ref{fig:s2} and \ref{fig:s3} demonstrate that the hybrid approach has the best performance, while the nuclear-norm approach gives the poorest results.
}

 
  \begin{figure}[ht!]
 \centering
\subfigure{
\includegraphics[scale = 0.27]{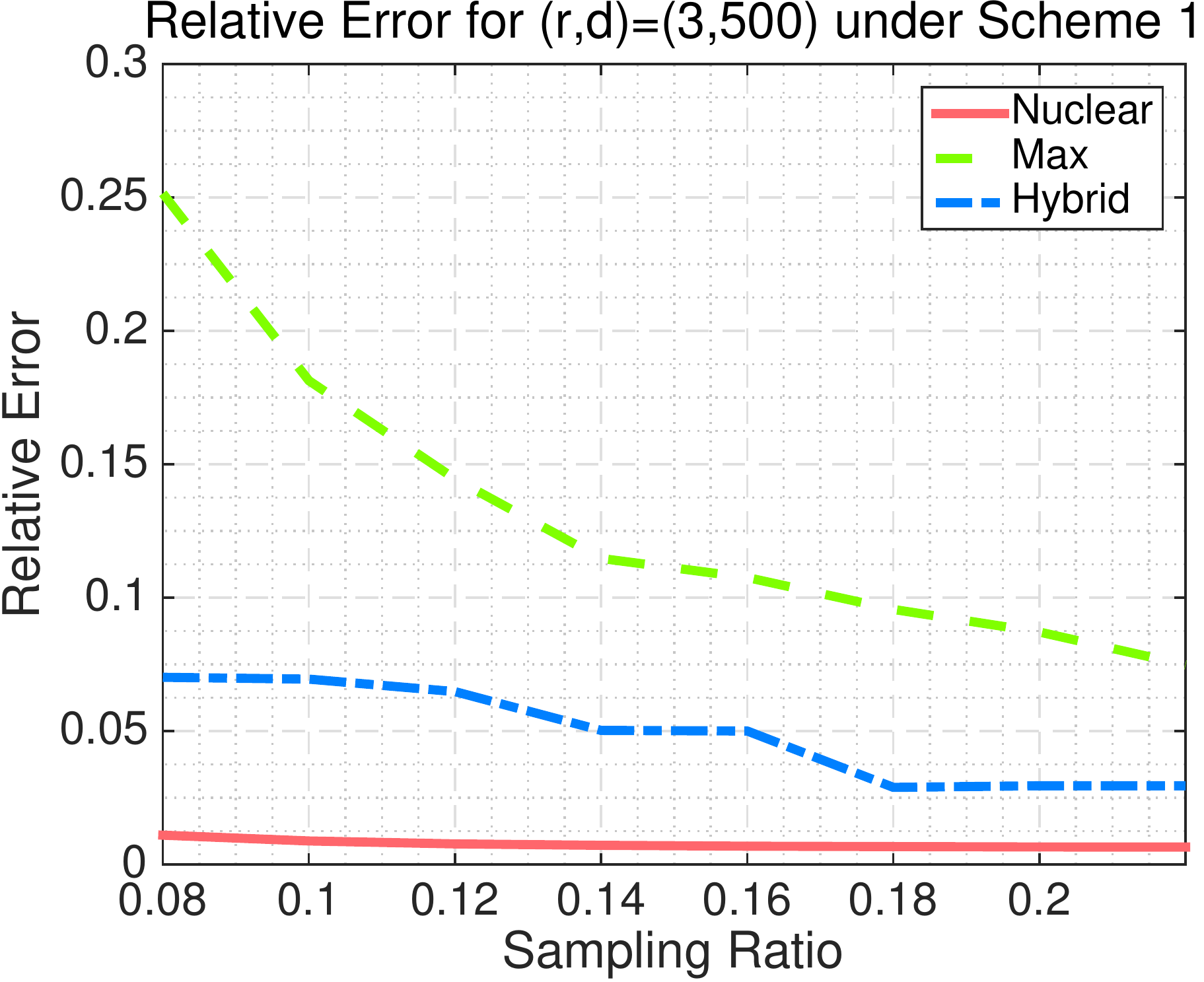}
}
\subfigure{
\includegraphics[scale = 0.27]{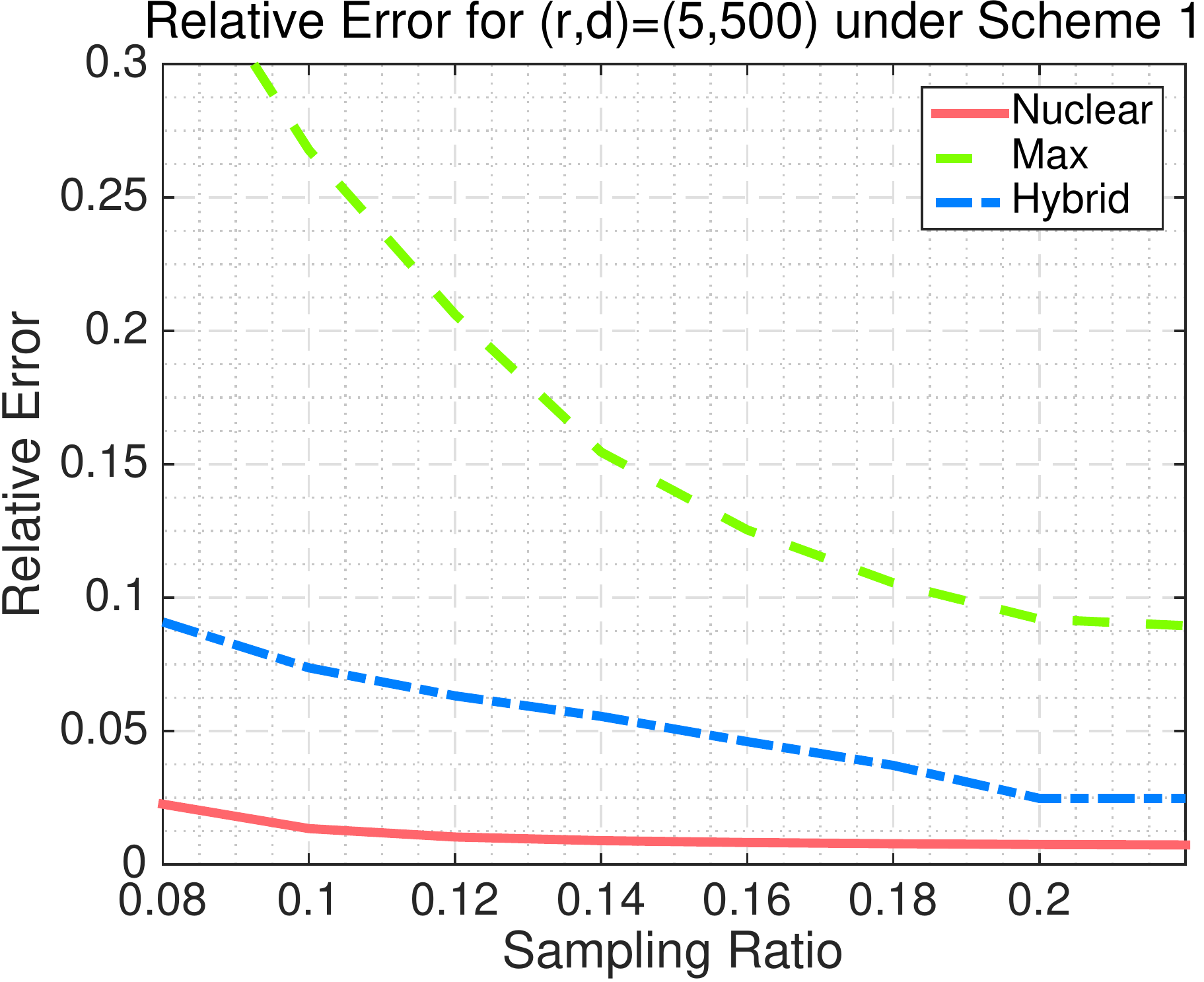}
}
\subfigure{
\includegraphics[scale = 0.27]{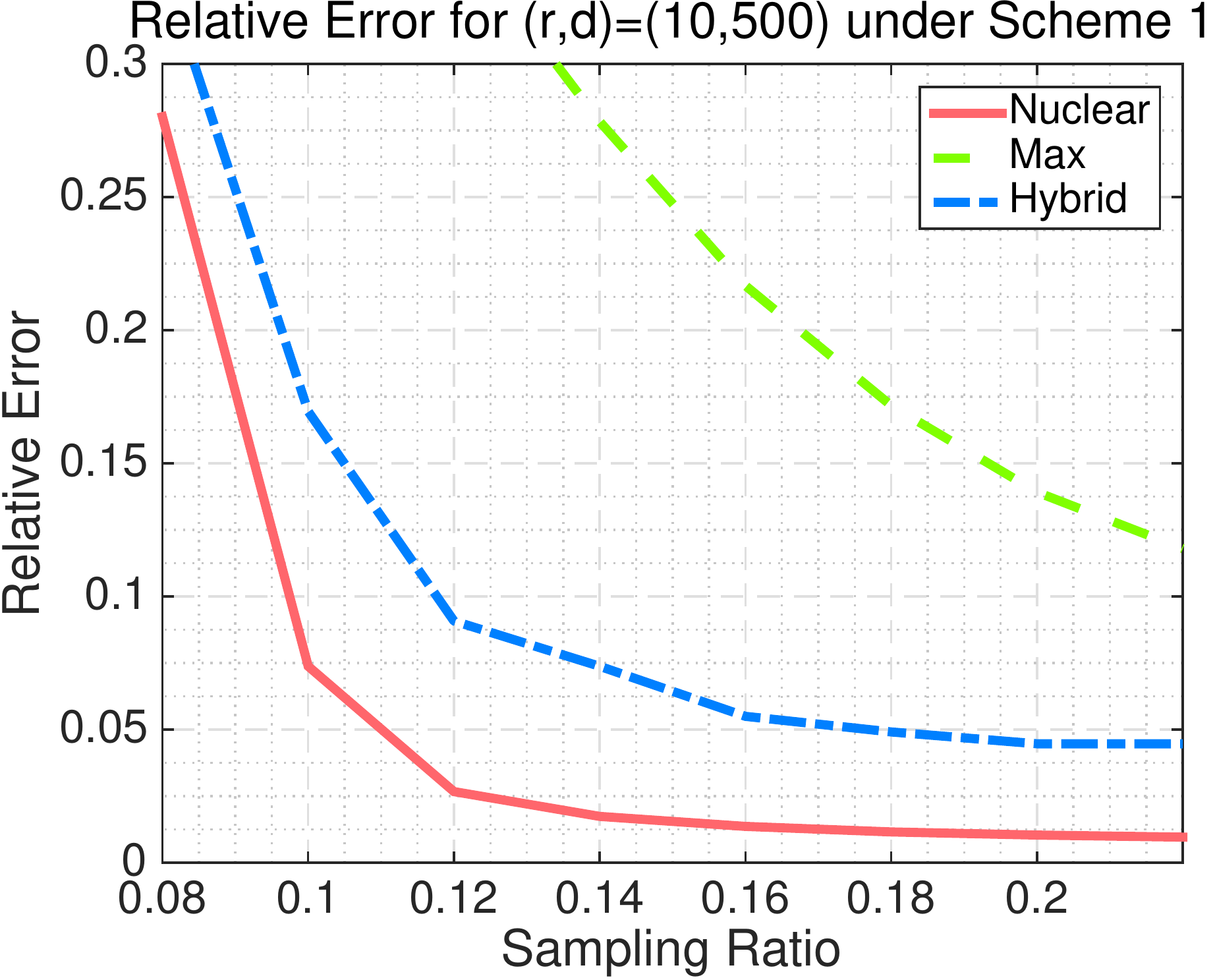}
}
\subfigure{
\includegraphics[scale = 0.27]{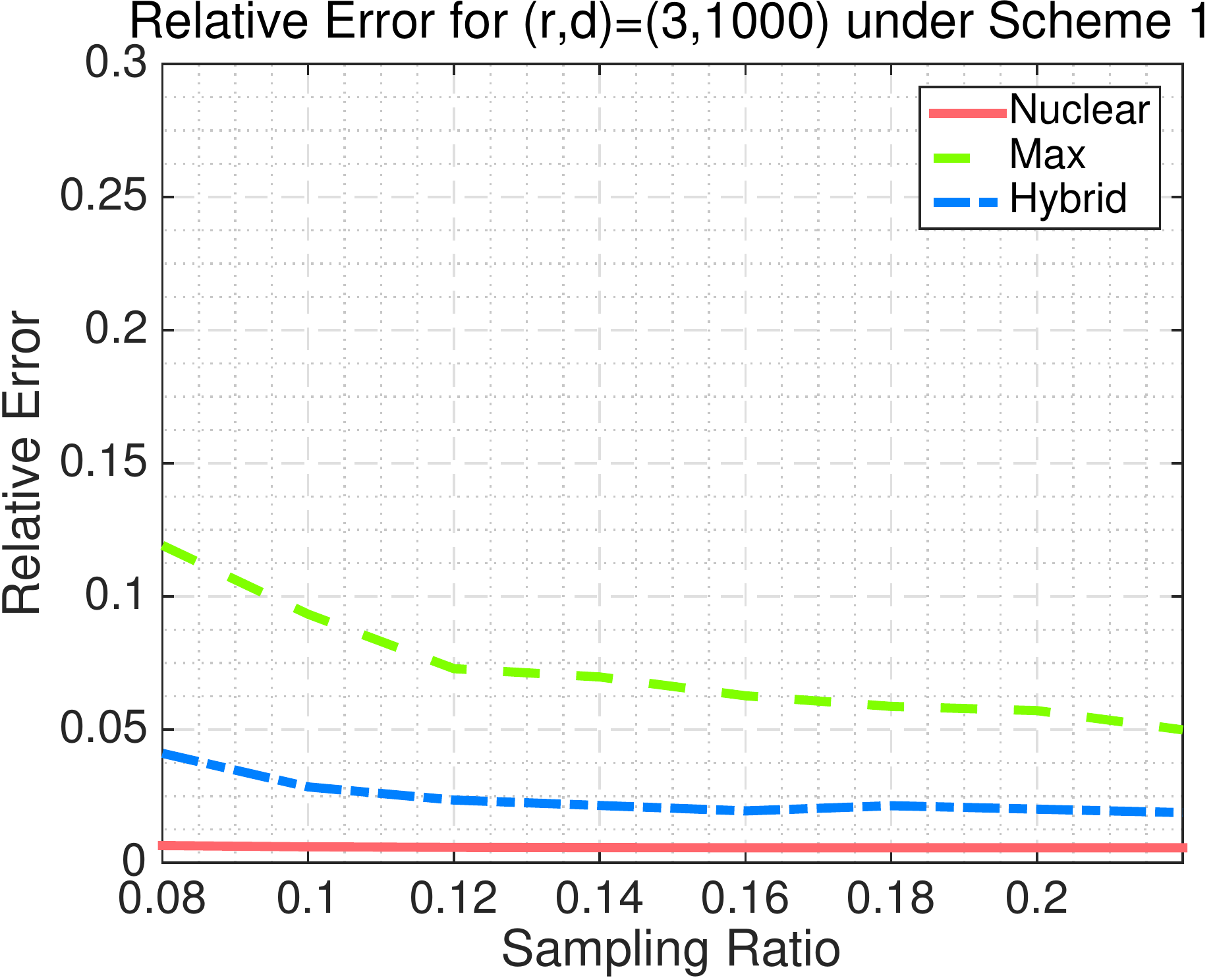}
}
\subfigure{
\includegraphics[scale = 0.27]{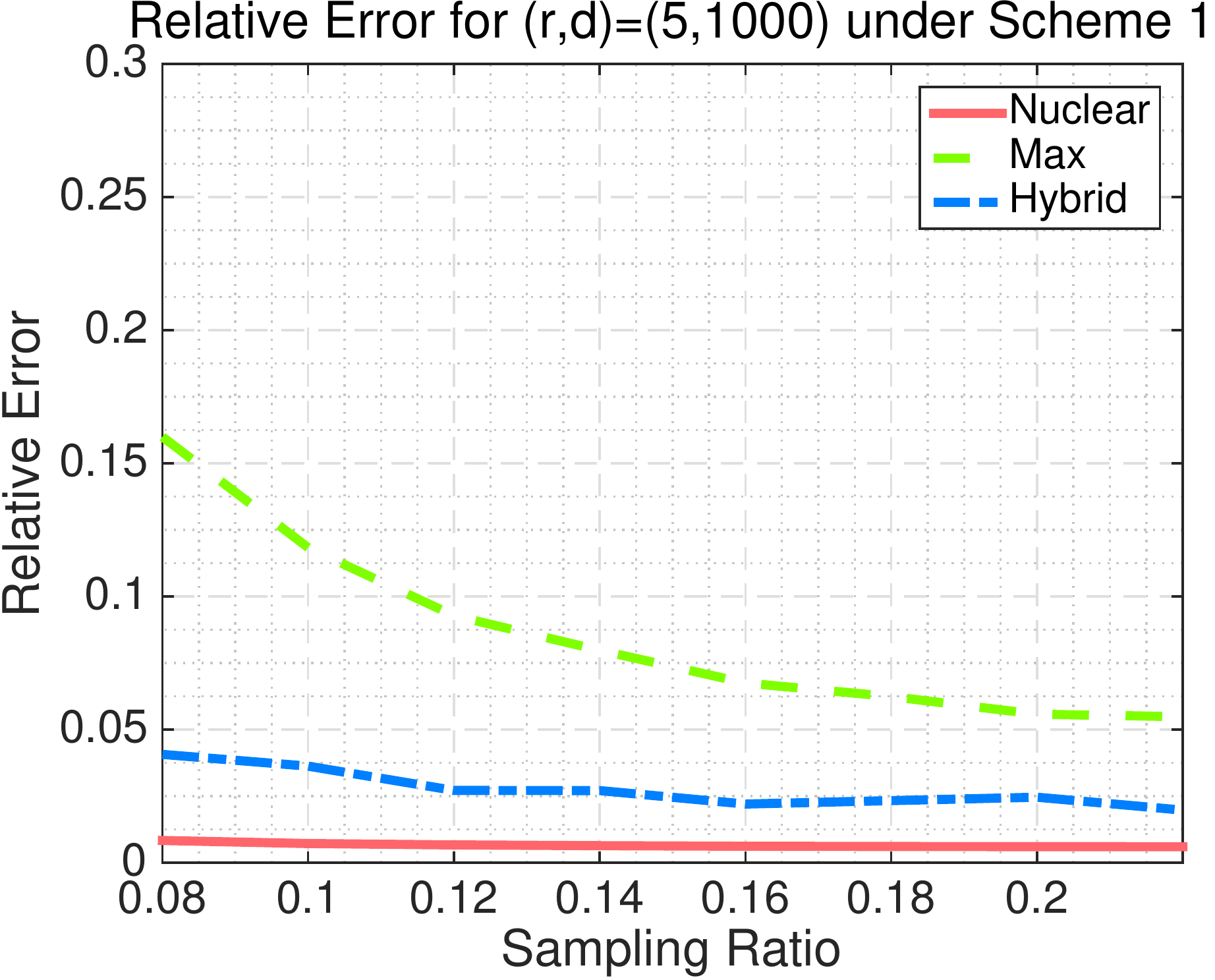}
}
\subfigure{
\includegraphics[scale = 0.27]{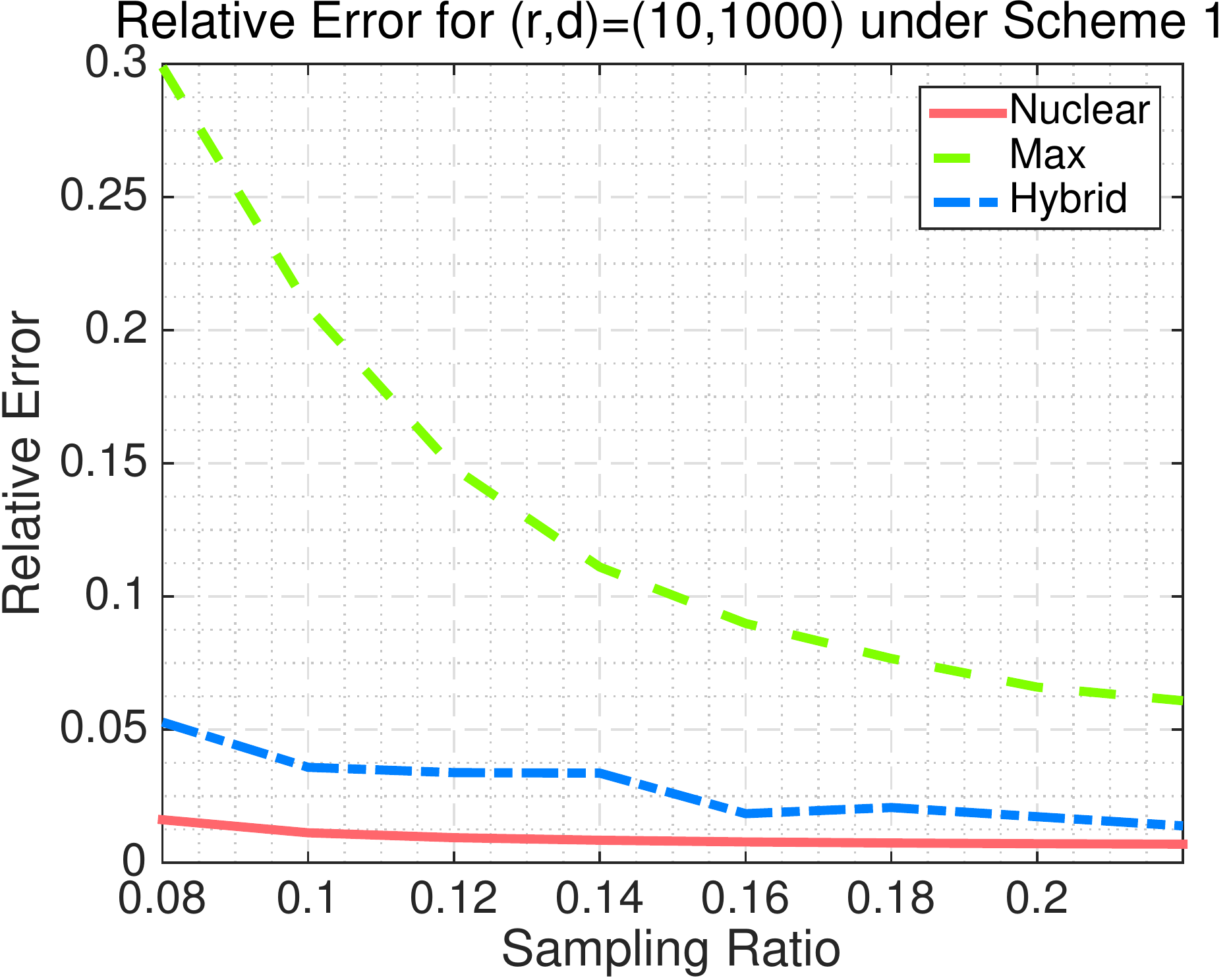}
}
\caption{Relative Errors under different settings for the noisy case under Scheme 1 (uniform sampling). }
\label{fig:s1}
\end{figure}

 \begin{figure}[ht!]
 \centering
\subfigure{
\includegraphics[scale = 0.27]{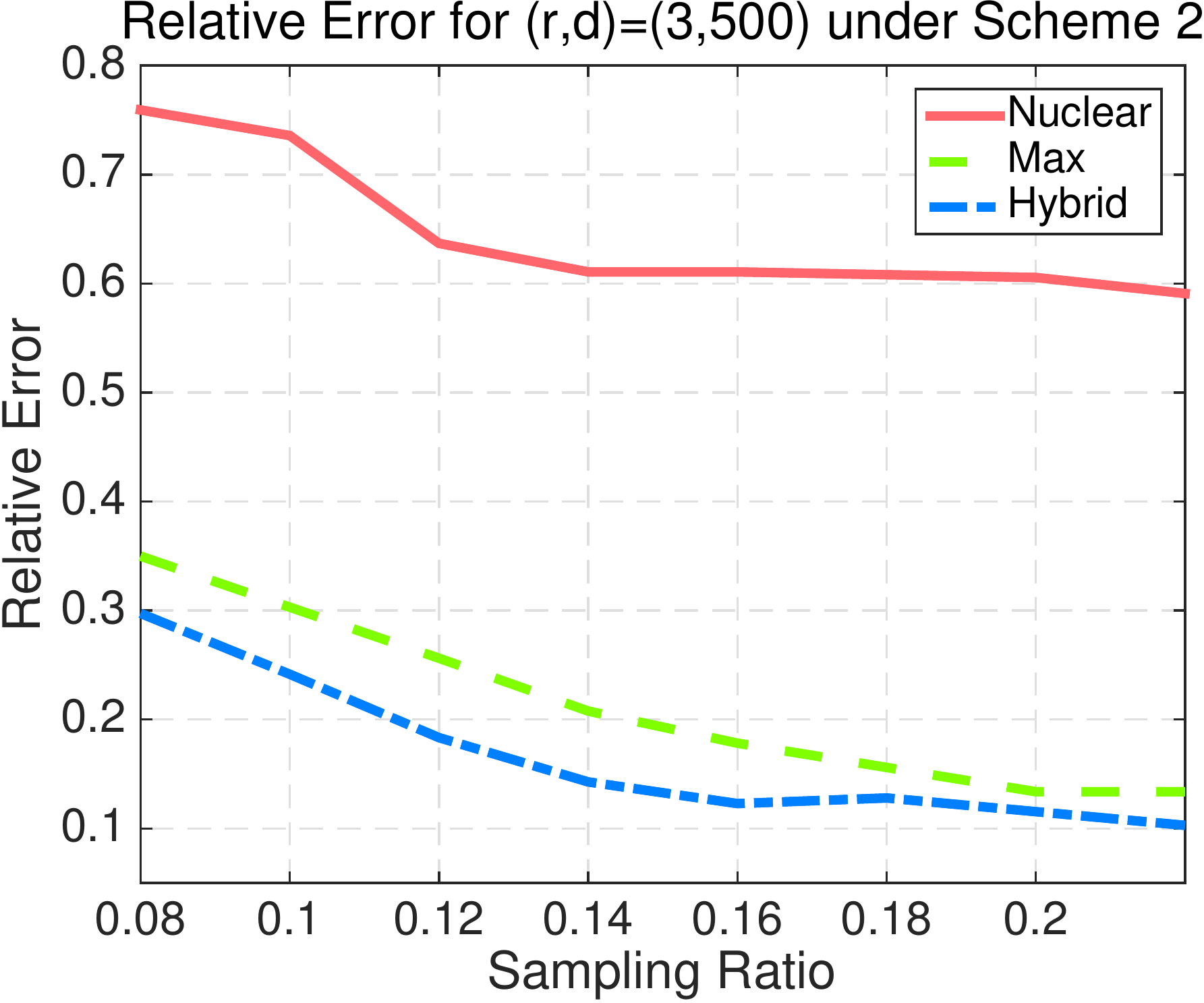}
}
\subfigure{
\includegraphics[scale = 0.27]{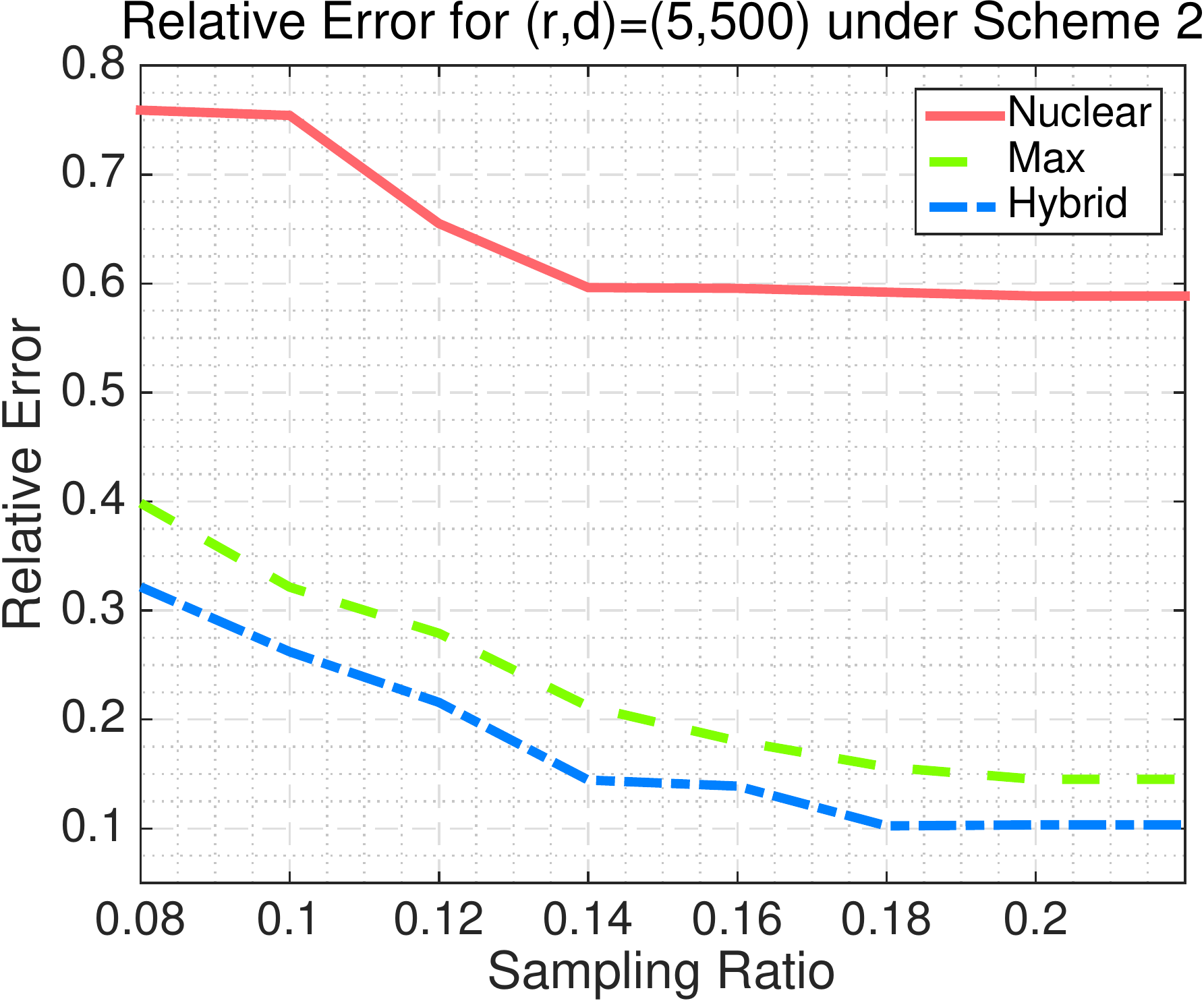}
}
\subfigure{
\includegraphics[scale = 0.27]{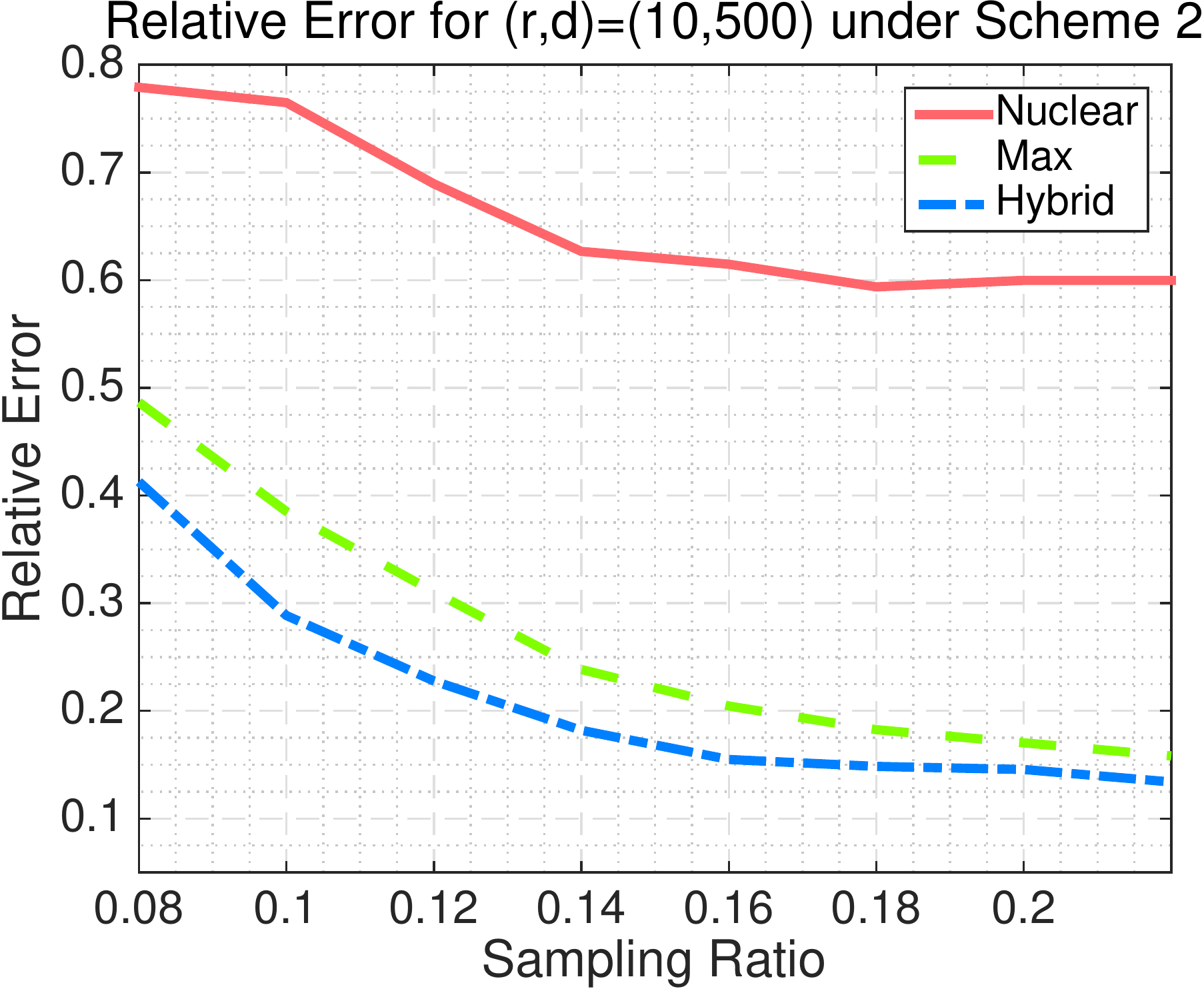}
}
\subfigure{
\includegraphics[scale = 0.27]{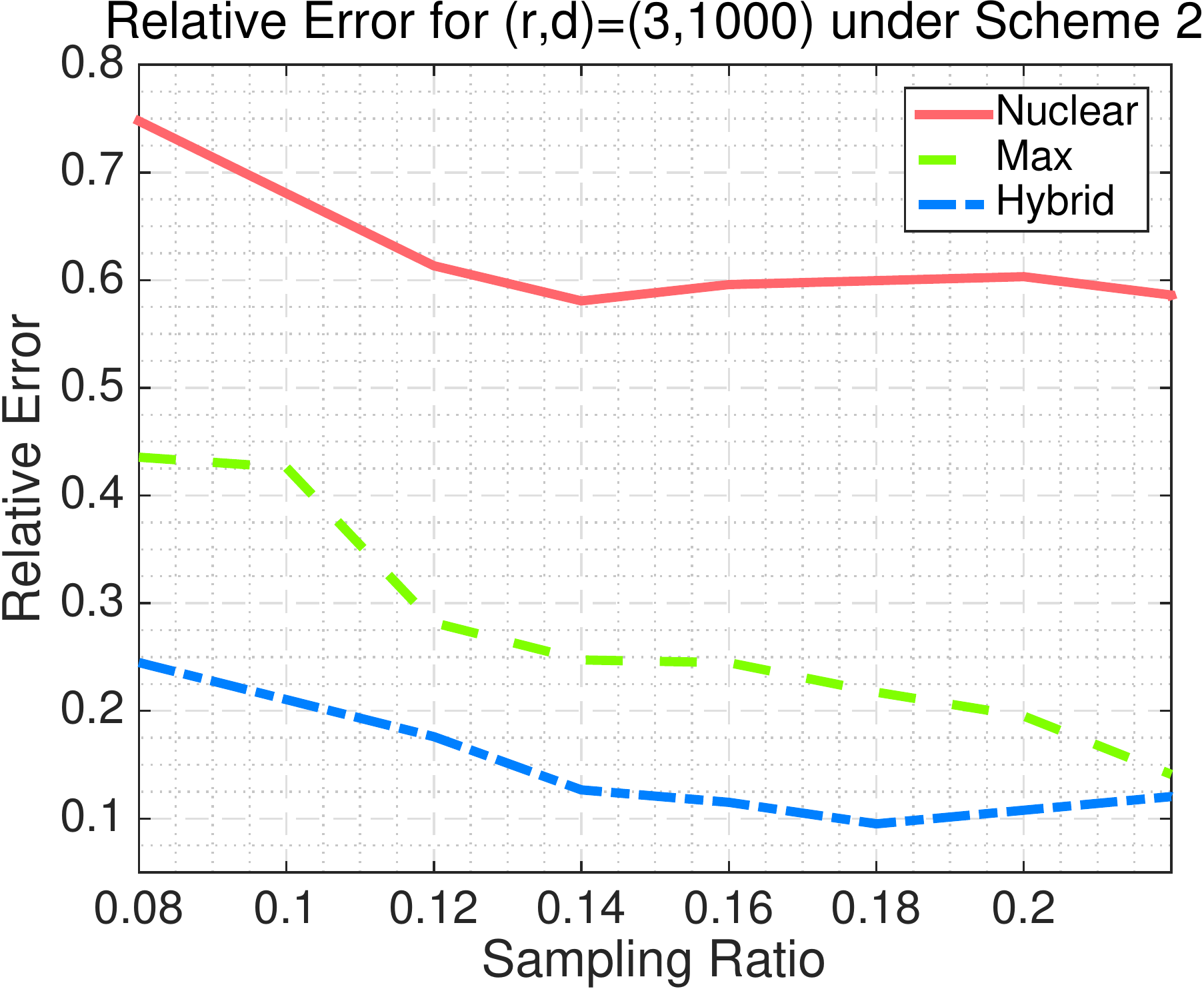}
}
\subfigure{
\includegraphics[scale = 0.27]{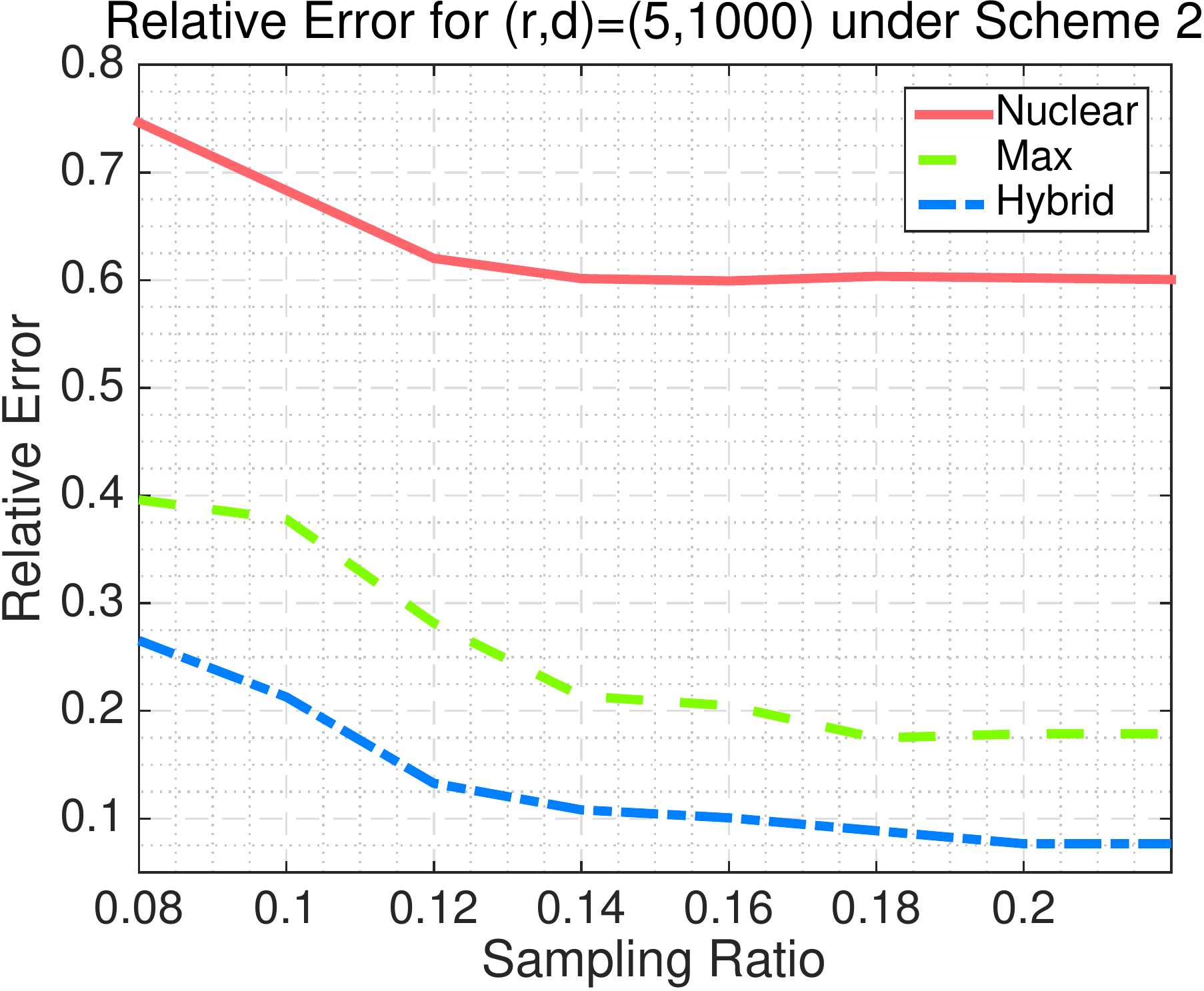}
}
\subfigure{
\includegraphics[scale = 0.27]{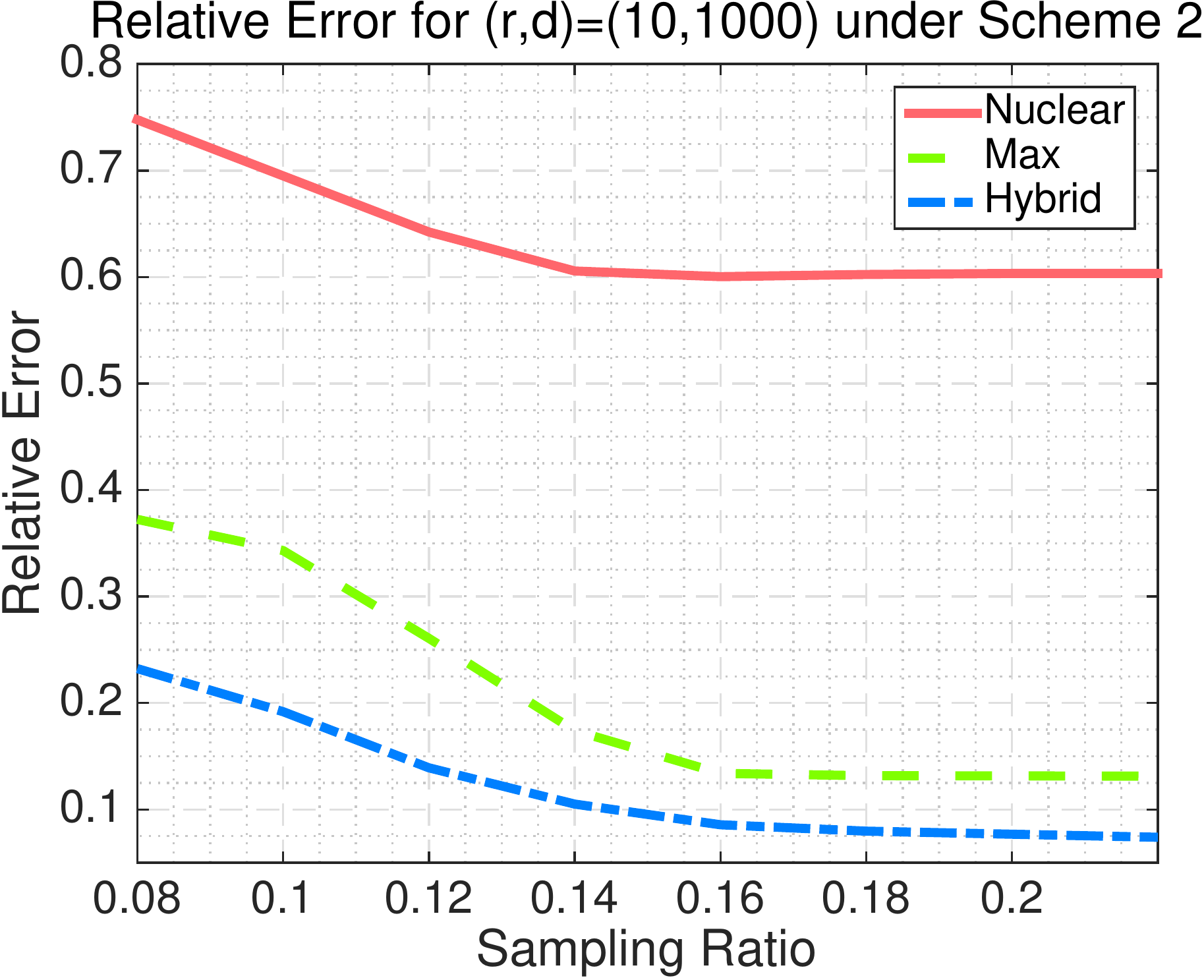}
}
\caption{Relative Errors under different settings for the noisy case under Scheme 2 (non-uniform sampling). }
\label{fig:s2}
\end{figure}

 \begin{figure}[ht!]
 \centering
\subfigure{
\includegraphics[scale = 0.27]{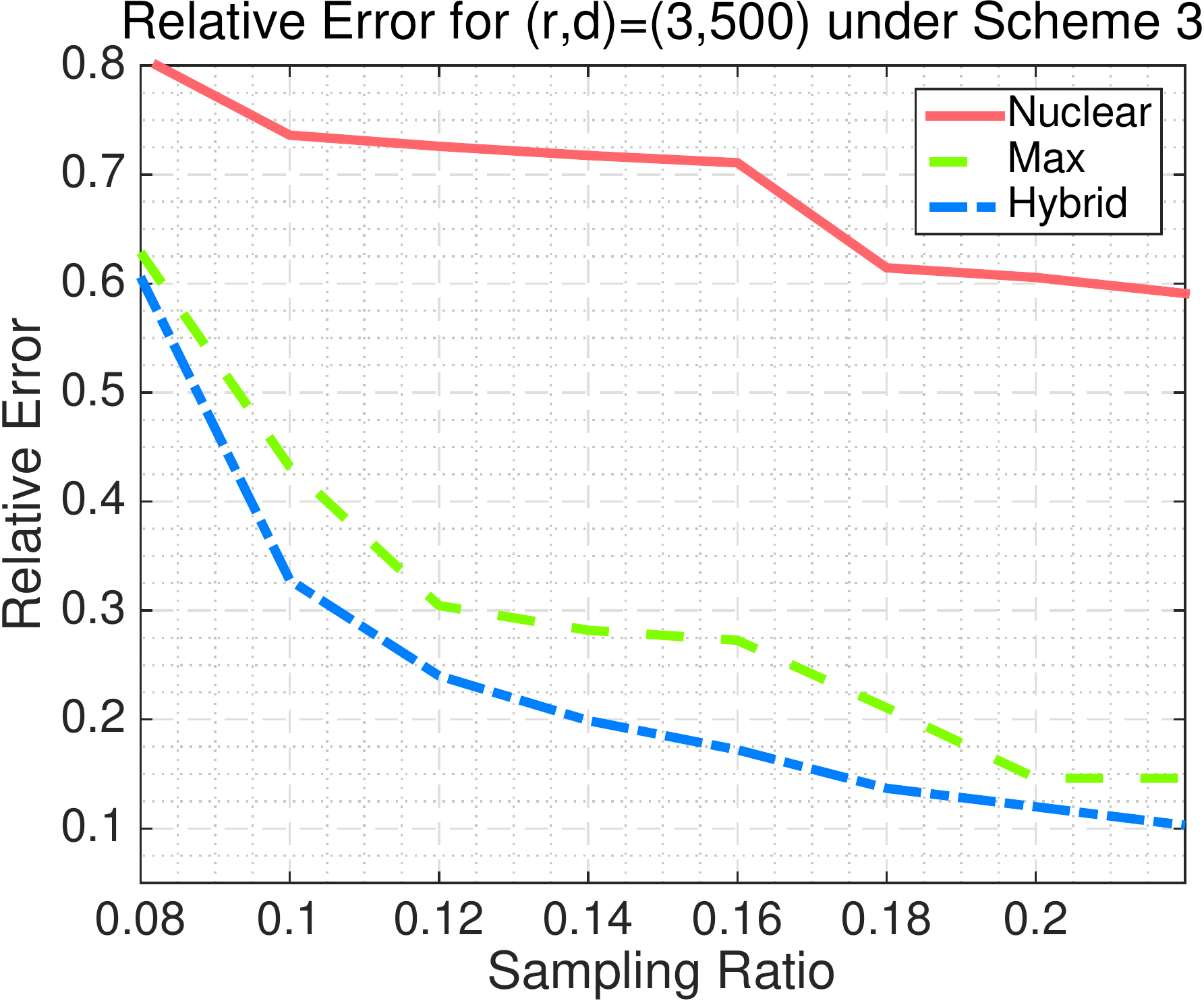}
}
\subfigure{
\includegraphics[scale = 0.27]{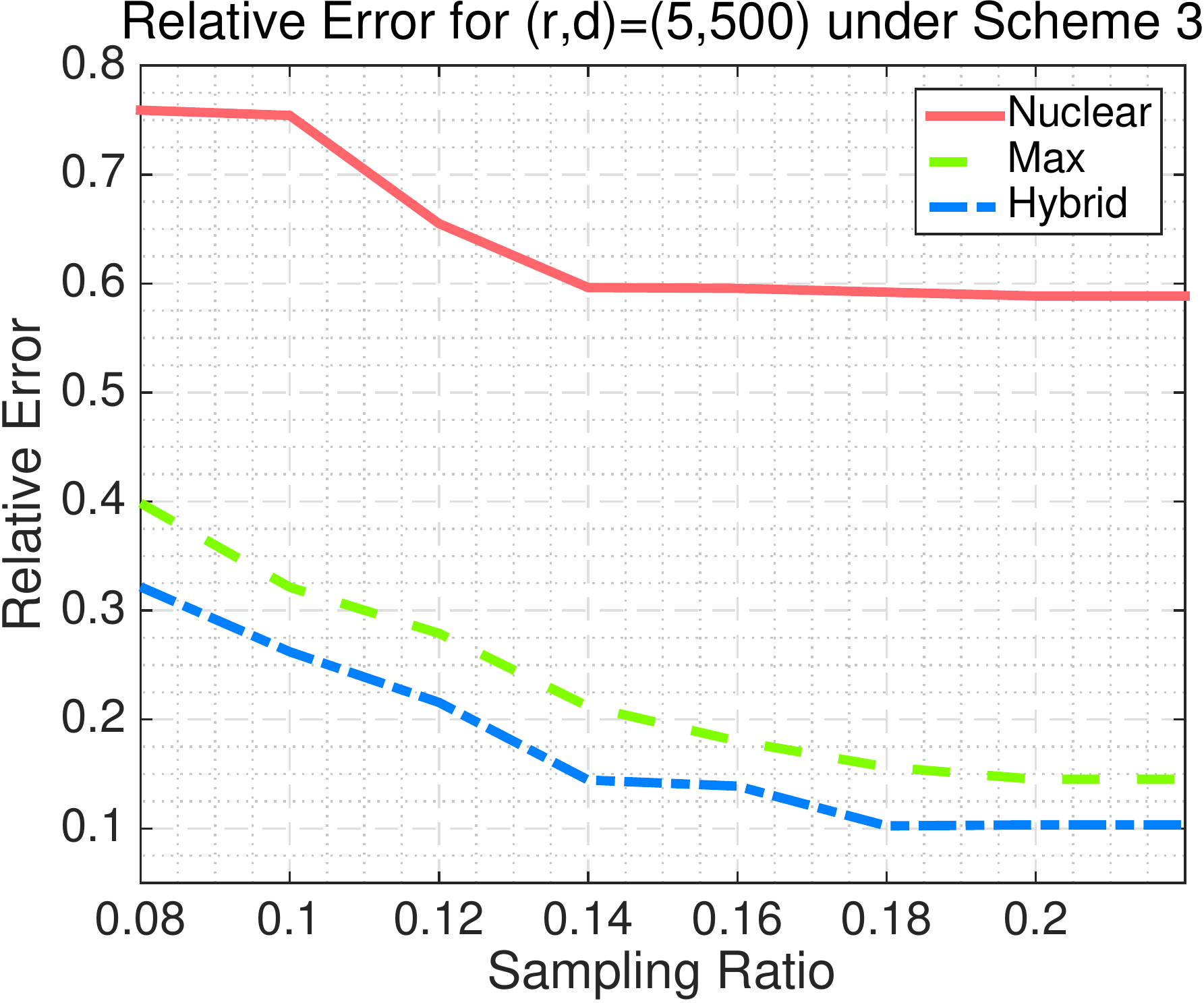}
}
\subfigure{
\includegraphics[scale = 0.27]{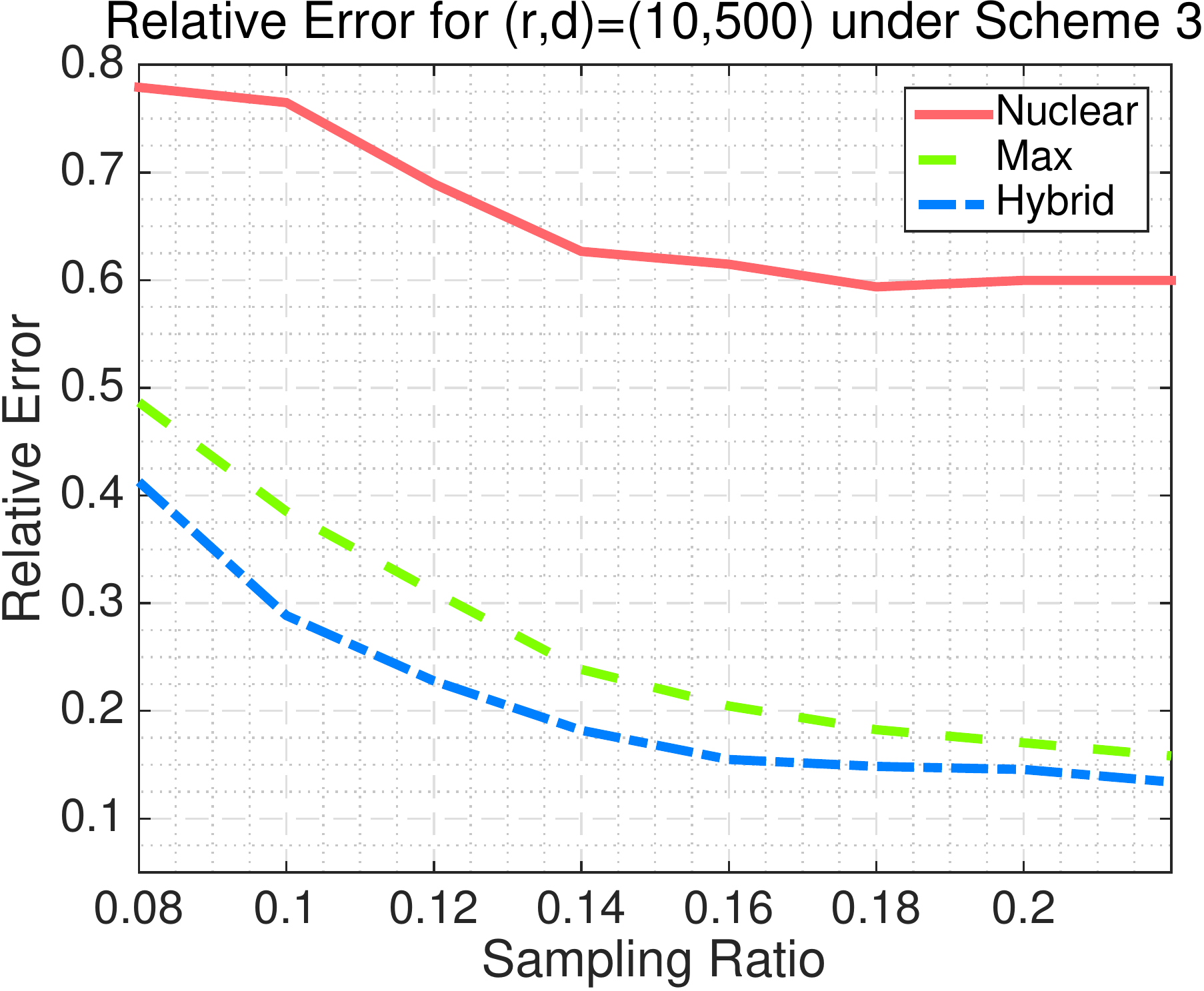}
}
\subfigure{
\includegraphics[scale = 0.27]{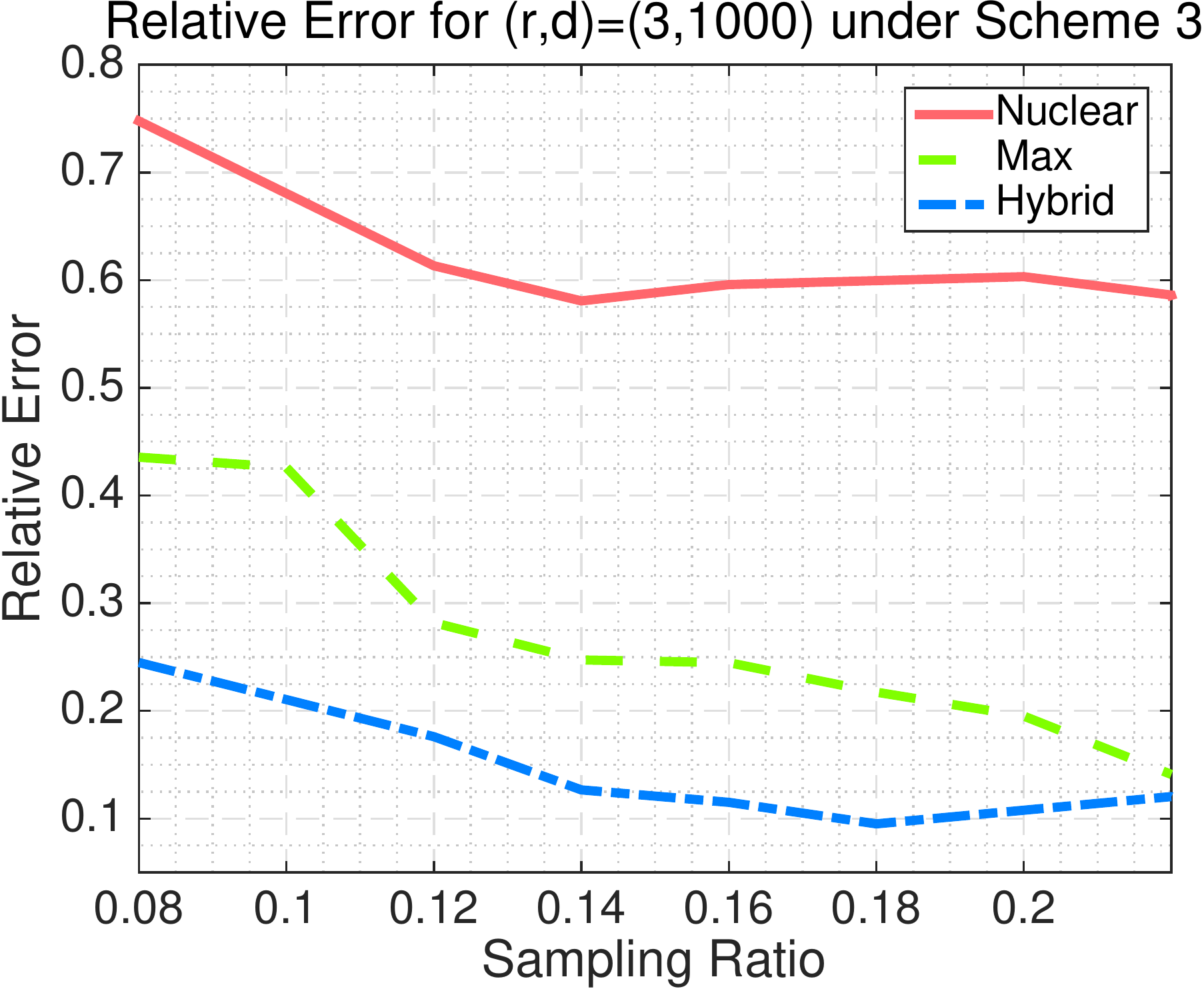}
}
\subfigure{
\includegraphics[scale = 0.27]{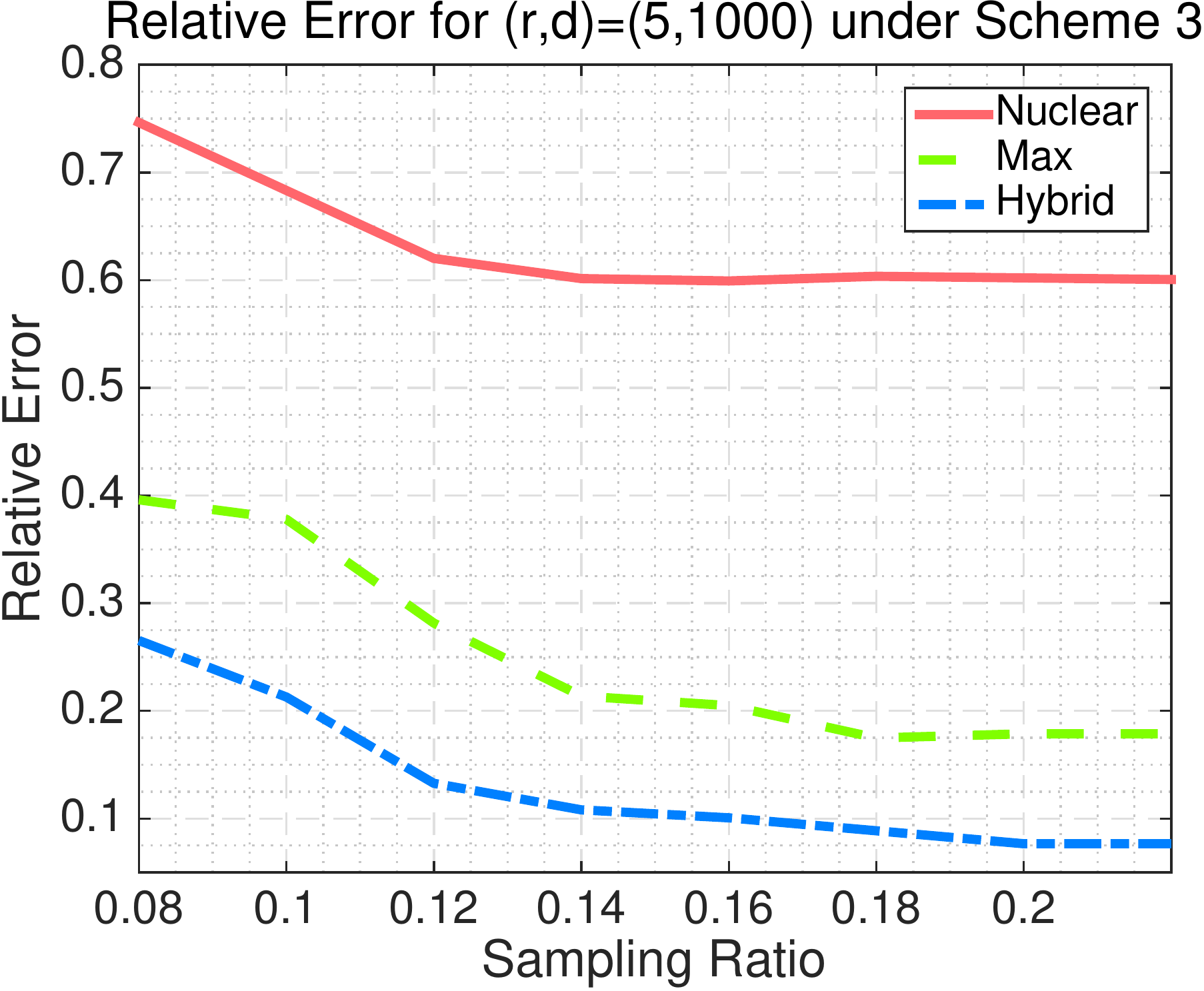}
}
\subfigure{
\includegraphics[scale = 0.27]{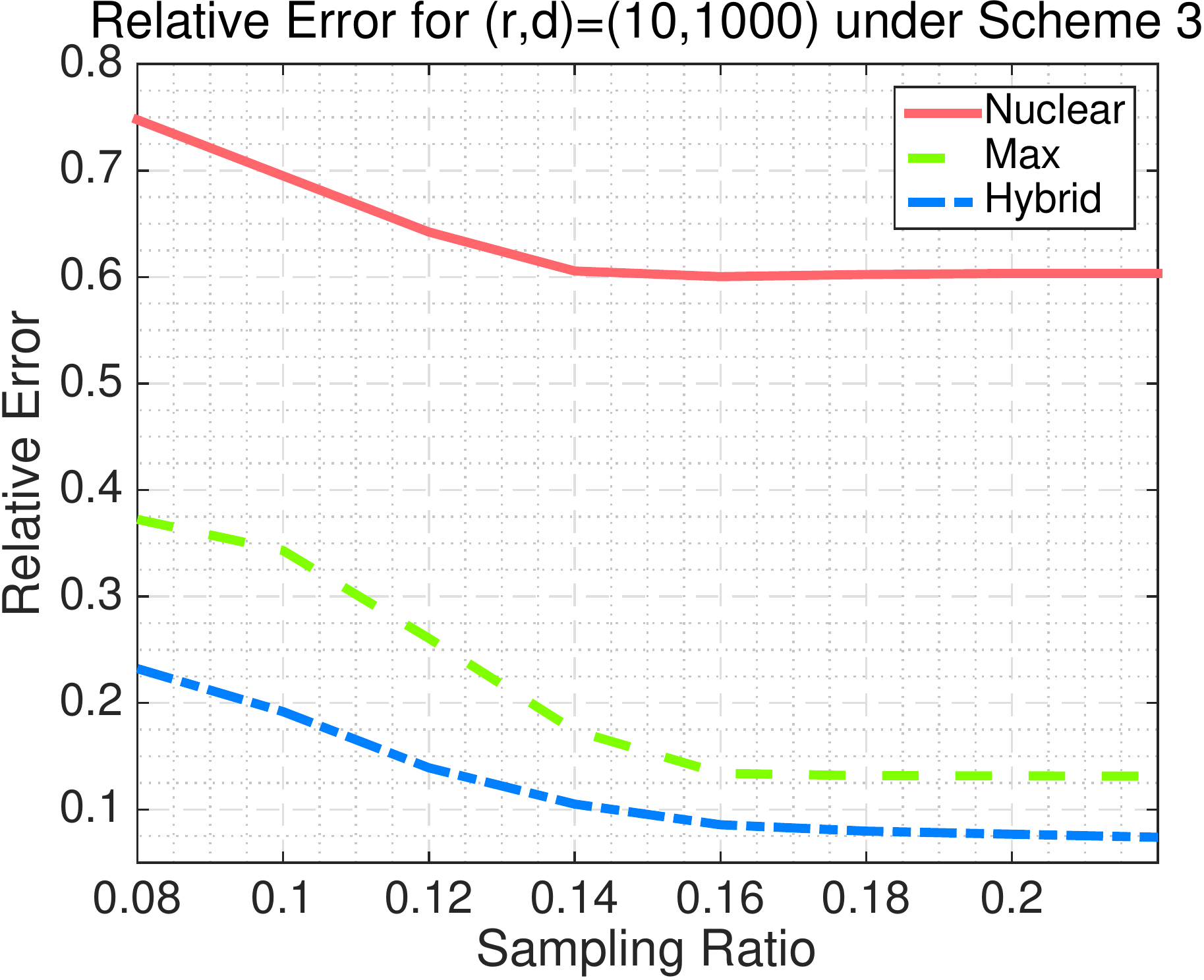}
}
\caption{Relative Errors under different settings for the noisy case under Scheme 3 (non-uniform sampling). }
\label{fig:s3}
\end{figure}
 
\subsection{Real Datasets}
In this subsection, we test our methods using some real datasets. We first consider the well-known Jester  joke dataset. This dataset contains more than 4.1 million ratings for 100 jokes from 73,421 users, and it is publicly available through \url{http://www.ieor.berkeley.edu/~goldberg/jester-data/}.  
The whole Jester joke dataset contains three sub-datasets, which are: (1) jester-1: 24,983 users who rate 36 or more jokes; (2) jester-2: 23,500 users who rate 36 or more jokes; (3) jester-3: 24,938 users who rate between 15 and 35 jokes. More  detailed descriptions can be found in \cite{toh2010accelerated} and \cite{chen2012matrix}, where the nuclear-norm based approach is used to study this dataset.

 Due to the large number of users, as in \cite{chen2012matrix}, we randomly select $n_u$ users' ratings from the datasets. Since many entries are unknown, we cannot compute the relative error as we did for the simulated data. Instead, we take the metric of the normalized mean absolute error (NMAE) to measure the accuracy of the estimator $\hat{M}$:
 $$
 \text{NMAE} = \frac{\sum_{(j,k) \not\in\Omega} |\hat{M}_{jk} - M^0_{jk}|  }{|\Omega| (r_{\max}-r_{\min})},
 $$
 where $r_{\min}$ and $r_{\max}$ denote the lower and upper bounds for the ratings, respectively. In the Jester joke dataset, the range is $[-10,10]$. Thus, we have  $r_{\max}-r_{\min} = 20$. 
 
 In each iteration, we first randomly select $n_u$ users, and then randomly permute the ratings from the users to generate $M^0\in\RR^{n_u\times 100}$. Next, we adopt the generating scheme used in Scheme 2 in the previous subsection to generate a set of  observed indices $\Omega$. Note that we can only observe the entry $(j,k)$ if $(j,k)\in\Omega$, and $M^0_{j,k}$ is available. Thus, the actual sampling ratio is less than the input SR. We consider different settings of $n_u$ and SR, and we report the averaged NMAE and running times in Table \ref{tab:jester} after running each setting five times.
It can be seen that the max-norm and hybrid approaches outperform the nuclear-norm approach in all cases. 
 This provides strong evidences that the proposed estimator and algorithm could be useful in practice.
 
Meanwhile, we observe that the running times for solving  {the}  max-norm
penalized optimization problems are significantly longer than that for solving {the} nuclear-norm penalized problem. 
This is because solving max-norm penalized optimization problems is intrinsically more difficult than solving nuclear-norm penalized ones. Specifically, in Algorithm 1, the most computationally expensive step is to {compute}
 a full eigenvalue decomposition of a matrix of size $d_1+d_2$ by $d_1+d_2$ during the $X$-update step. As a comparison, in nuclear-norm regularized optimizations, we only need to 
{compute}
a singular value decomposition of a matrix of size $d_1$ by $d_2$. In the Jester joke dataset, since $d_2\ll d_1$, singular value decomposition takes the advantage of a small $d_2$, but the computational cost of the max-norm approach is dominated by the large $d_1$. In practical matrix completion problems, the computational efficiency is 
{sometimes not the top priority, but more attention is placed on
reducing the reconstruction error.}
 Thus, depending on the specific application, the max-norm and hybrid approaches provide useful complements to the nuclear-norm approach.
 
We also consider the MovieLens data. 
The dataset is available through \url{http://www.grouplens.org}. We first implement the proposed methods on the Movie-100K dataset, which contains 100,000 ratings for 1,682 movies by 943 users. The ratings range from $r_{\min} = 1$ to $r_{\max} = 5$. In this experiment, we first randomly permute the rows and columns of the matrix, and then sample the observed entries as in Scheme 2 in the previous subsection. Table \ref{tab:movie} reports the averaged NMAE and running times of different methods.
{
Next, we {implement} the methods on the Movie-1M dataset. This dataset contains 1,000,209 ratings of 3,900 movies made by 6,040 users. We randomly select $n$ users and $n$ movies to conduct the tests, where $n = 1500$ or 2000. We report the results in Table \ref{tab:movie2}. 
} {From} Tables \ref{tab:movie} and \ref{tab:movie2},  we observe that the max-norm and hybrid approaches lead to better matrix recovery results than the nuclear-norm approach in all cases. In addition, we observe that the differences between running times of the max-norm and nuclear-norm approaches are less significant than those in the Jester joke problem. This is because $d_1$ and $d_2$ are of the same order in the MovieLens example. 
Therefore, in practice, if the computational efficiency 
is the top
 priority, and if $d_1\ll d_2$ or $d_1\gg d_2$, the nuclear-norm approach is preferable. While if controlling the reconstruction accuracy  {attracts more concern,}
we recommend the proposed hybrid approach.

{
\begin{remark}
Note that the improvement from the hybrid and max-norm approaches over the nuclear-norm approach is about 5\%, which looks marginal. However, a 5\% improvement can be significant  in practice as the nuclear-norm approach is widely recognized as a highly efficient approach. In the earlier Netflix competition, it is seen that the results from top teams (where nuclear-norm approach is used as part of the algorithms) are all very close, and a 5\% improvement can be significant for practitioners. See \url{http://www.research.att.com/articles/featured_stories/2010_05/201005_netflix2_article.html?fbid=pgKJkRJ5mbi}. In addition,  though the nuclear-norm approach is computationally more efficient, we note that in this particular application, computation efficiency is not of the highest priority, and the modest sacrifice of computational cost is tolerable here.
\end{remark}
}

 \begin{table}[ht]
\centering
\caption{Averaged relative error and running time in seconds for different methods under uniform sampling scheme. Under noiseless and noisy settings, for the nuclear norm approach, we set $\mu = 1\times10^{-4}\|Y_\Omega\|_F$ and $2\times 10^{-4}\|Y_\Omega\|_F$. For the max-norm approach, we set $\lambda=2\norm{Y_\Omega}_F$ and $0.05\norm{Y_\Omega}_F$. For the hybrid approach, and we set  $\lambda=0.01\norm{Y_\Omega}_F$ and $0.8\norm{Y_\Omega}_F$, $\mu=0.02\lambda$ and $1\times 10^{-4}\lambda$.}
\medskip
\begin{tabular}{r*{9}{r}}
\toprule
\toprule
& & &\multicolumn{2}{c}{Nuclear}  &\multicolumn{2}{c}{Max} &\multicolumn{2}{c}{Hybrid}
\\
\cmidrule{4-5}  \cmidrule{6-7}  \cmidrule{8-9}
\multicolumn{1}{r} {$\sigma$} & \multicolumn{1}{r} {$d_t $} & \multicolumn{1}{r} {$(r,\text{SR}) $}& \multicolumn{1}{c}{RE} &  \multicolumn{1}{r}{Time} & \multicolumn{1}{c}  {RE} & \multicolumn{1}{r} {Time}  & \multicolumn{1}{c}  {RE} & \multicolumn{1}{r} {Time}\\
\midrule
0 & 500 & $(5, 0.10)$  & $1.1\times 10^{-3}$ & 6.9      & $4.1\times 10^{-2}$ & 12.0     & $4.0\times 10^{-2}$  & 12.5\\
\ & \ & $(5, 0.15)$ & $7.7\times 10^{-4}$  & 7.0     & $3.8\times 10^{-2}$ & 11.1      & $2.9\times 10^{-2}$  &  13.4\\
\ & \ & $(10, 0.10)$ & $5.5\times 10^{-2}$ & 8.0     & {$1.2\times 10^{-1}$} & 11.4      & $2.9\times 10^{-2}$  & 13.4\\
\ & \ & $(10, 0.15)$ & $1.3\times 10^{-3}$ & 8.6     & $3.8\times 10^{-2}$ & 11.7      & $2.3\times 10^{-2}$  & 12.0\\
\midrule
\ & 1000  & $(5, 0.10)$  & $8.8\times 10^{-4}$ & 44.8      & $2.9\times 10^{-2}$ & 110.4     & $1.9\times 10^{-2}$  & 115.1\\
\ & \ & $(5, 0.15)$ & $6.6\times 10^{-4}$  & 43.3     & $1.9\times 10^{-2}$ & 111.3      & $1.8\times 10^{-2}$  &  114.3\\
\ & \ &  $(5, 0.20)$ & $5.5\times 10^{-4}$ & 44.6     & $1.8\times 10^{-2}$ &  112.4     & $6.7\times 10^{-3}$  & 120.0\\
\ & \ & $(10, 0.10)$ & $1.5\times 10^{-3}$ & 44.4     & $2.9\times 10^{-2}$ & 108.7      & $2.0\times 10^{-2}$  & 121.7\\
\ & \ & $(10, 0.15)$ & $1.0\times 10^{-3}$ & 45.8     & $2.0\times 10^{-2}$ & 112.8      & $1.3\times 10^{-2}$  & 117.8\\
\ & \ & $(10, 0.20)$ & $8.3\times 10^{-4}$ & 45.5     & $1.5\times 10^{-2}$ & 110.8      & $8.9\times 10^{-3}$  & 117.3\\
\midrule
\ & 1500  & $(5, 0.10)$  & $8.1\times 10^{-4}$ & 162.8      & $2.3\times 10^{-2}$ & 385.4   & $1.2\times 10^{-2}$  & 408.2\\
\ & \ & $(5, 0.15)$ & $6.3\times 10^{-4}$  & 158.3     & $1.7\times 10^{-2}$ & 396.9      & $1.1\times 10^{-2}$  &  406.6\\
\ & \ &  $(5, 0.20)$ & $5.3\times 10^{-4}$ & 158.1     & $1.3\times 10^{-2}$ &  410.9    & $5.6\times 10^{-3}$  &   405.3 \\
\ & \ & $(10, 0.10)$ & $1.3\times 10^{-3}$ & 165.9     & $2.0\times 10^{-2}$ & 413.8      & $1.5\times 10^{-2}$  &  413.3\\
\ & \ & $(10, 0.15)$ & $9.5\times 10^{-4}$ & 160.8     & $1.4\times 10^{-2}$ & 410.1      & $1.3\times 10^{-2}$  & 423.2\\
\ & \ & $(10, 0.20)$ & $7.8\times 10^{-4}$ & 161.0     & $1.2\times 10^{-2}$ & 395.1      & $7.0\times 10^{-3}$  & 398.2\\
\midrule
\midrule
$0.01$ & 500 & $(5, 0.10)$  & $7.4\times 10^{-2}$ & 6.4      & $6.4\times 10^{-2}$ & 10.5     & $6.3\times 10^{-2}$  & 12.3\\
\ & \ & $(5, 0.15)$ & $5.4\times 10^{-3}$  & 6.4     & $4.8\times 10^{-2}$ & 11.4      & $4.3\times 10^{-2}$  &  13.1\\
\ & \ & $(10, 0.10)$ & {$1.7\times 10^{-1}$} & 6.3     & $5.2\times 10^{-2}$ & 10.9      & $6.6\times 10^{-2}$  & 11.9\\
\ & \ & $(10, 0.15)$ & {$7.8\times 10^{-2}$} & 6.5     & $4.0\times 10^{-2}$ & 11.2      & $4.8\times 10^{-2}$  & 14.2\\
\midrule
\ & 1000  & $(5, 0.10)$  & $4.8\times 10^{-2}$ & 47.1      & $3.9\times 10^{-2}$ & 101.7     & $3.6\times 10^{-2}$  & 119.8\\
\ & \ & $(5, 0.15)$ & $4.5\times 10^{-2}$  & 47.5     & $2.8\times 10^{-2}$ & 106.8      & $3.3\times 10^{-2}$  &  116.6\\
\ & \ &  $(5, 0.20)$ & $4.7\times 10^{-2}$ & 47.6     & $2.6\times 10^{-2}$ &  117.3     & $2.6\times 10^{-2}$  &   119.8\\
\ & \ & $(10, 0.10)$ & $6.2\times 10^{-2}$ & 47.1     & $4.3\times 10^{-2}$ & 106.1      & $4.2\times 10^{-2}$  &  116.7\\
\ & \ & $(10, 0.15)$ & $4.9\times 10^{-2}$ & 47.2     & $3.3\times 10^{-2}$ & 105.9      & $3.0\times 10^{-2}$  &  120.2\\
\ & \ & $(10, 0.20)$ & $4.5\times 10^{-2}$ & 47.7     & $2.7\times 10^{-2}$ & 112.2      & $3.2\times 10^{-3}$  &  120.3\\
\midrule
\ & 1500  & $(5, 0.10)$  & $4.2\times 10^{-2}$ & 161.2      & $2.9\times 10^{-2}$ & 377.9  & $2.9 \times 10^{-2}$  & 406.1\\
\ & \ & $(5, 0.15)$ & $4.1\times 10^{-2}$  &  167.5     & $2.4\times 10^{-2}$ &  408.7     & $2.8\times 10^{-2}$  &  409.3\\
\ & \ &  $(5, 0.20)$ & $4.4\times 10^{-2}$ &   153.4     & $2.1\times 10^{-2}$ &  412.9    & $2.1\times 10^{-2}$  &   415.6 \\
\ & \ & $(10, 0.10)$ & $5.0\times 10^{-3}$ &  166.9    & $3.3\times 10^{-2}$ &  397.2      & $3.3\times 10^{-2}$  &  404.6\\
\ & \ & $(10, 0.15)$ & $4.7\times 10^{-3}$ &  160.8    & $2.6\times 10^{-2}$ &  395.4      & $2.5\times 10^{-2}$  &  424.2\\
\ & \ & $(10, 0.20)$ & $4.3\times 10^{-3}$ &  150.6     & $2.1\times 10^{-2}$ &  401.9      & $2.0\times 10^{-2}$  & 380.7\\
  \bottomrule\bottomrule 
\end{tabular}
\label{tab:comadm1}
\end{table}

\begin{table}[ht]
\centering
\caption{Averaged relative error and running time in seconds for different methods using non-uniformly sampled data as in Scheme 2. For the nuclear norm approach, we set $\mu = 2\times10^{-4}\|Y_\Omega\|_F$. For the max-norm approach, we set $\lambda=0.1\norm{Y_\Omega}_F$. For the hybrid approach, and we set  $\lambda=0.2\norm{Y_\Omega}_F,\mu=2\times10^{-4}\lambda$.}
\medskip
\begin{tabular}{r*{8}{r}}
\toprule
\toprule
& & &\multicolumn{2}{c}{Nuclear}  &\multicolumn{2}{c}{Max} &\multicolumn{2}{c}{Hybrid}
\\
\cmidrule{4-5}  \cmidrule{6-7}  \cmidrule{8-9}
\multicolumn{1}{r} {$\sigma$} & \multicolumn{1}{r} {$d_t $} & \multicolumn{1}{r} {$(r,\text{SR}) $}& \multicolumn{1}{c}{RE} &  \multicolumn{1}{r}{Time} & \multicolumn{1}{c}  {RE} & \multicolumn{1}{r} {Time}  & \multicolumn{1}{c}  {RE} & \multicolumn{1}{r} {Time}\\
\midrule
0 & 500 & $(5, 0.10)$  & $7.4\times 10^{-1}$ & 7.6      & $2.2\times 10^{-1}$ & 12.5     & $1.2\times 10^{-1}$  & 15.8\\
\ & \ & $(5, 0.15)$ & $6.1\times 10^{-1}$  & 7.8     & $9.6\times 10^{-2}$ & 13.1      & $6.1\times 10^{-2}$  &  15.7\\
\ & \ & $(10, 0.10)$ & $7.7\times 10^{-1}$ & 7.5     & $2.1\times 10^{-1}$ & 12.9      & $1.6\times 10^{-1}$  & 16.1\\
\ & \ & $(10, 0.15)$ & $6.1\times 10^{-1}$ & 8.5     & $6.1\times 10^{-2}$ & 13.0      & $7.6\times 10^{-2}$  & 15.7\\
\midrule
\ & 1000  & $(5, 0.10)$  & $7.4\times 10^{-1}$ & 45.2      & $2.2\times 10^{-1}$ & 97.0     & $1.1\times 10^{-1}$  & 113.9\\
\ & \ & $(5, 0.15)$ & $6.1\times 10^{-1}$  & 48.2     & $1.2\times 10^{-1}$ & 104.0     & $4.3\times 10^{-2}$  &  113.1\\
\ & \ &  $(5, 0.20)$ & $6.2\times 10^{-1}$ & 45.4     & $1.1\times 10^{-1}$ &  105.6     & $3.5\times 10^{-2}$  & 105.0\\
\ & \ & $(10, 0.10)$ & $7.5\times 10^{-1}$ & 45.8     & $1.9\times 10^{-1}$ & 97.3      & $8.8\times 10^{-2}$  & 113.8\\
\ & \ & $(10, 0.15)$ & $6.0\times 10^{-1}$ & 47.6     & $5.9\times 10^{-2}$ & 105.2      & $4.1\times 10^{-2}$  & 109.7\\
\ & \ & $(10, 0.20)$ & $6.0\times 10^{-1}$ & 44.6     & $6.1\times 10^{-2}$ & 108.8      & $4.3\times 10^{-2}$  & 108.2\\
\midrule
\ & 1500  & $(5, 0.10)$  & $7.5\times 10^{-1}$ & 143.2     & $2.3\times 10^{-1}$ & 388.7     & $1.0\times 10^{-1}$  & 372.3\\
\ & \ & $(5, 0.15)$ & $6.0\times 10^{-1}$  & 147.2     & $1.3\times 10^{-1}$ & 398.0      & $6.2\times 10^{-2}$  &  389.0\\
\ & \ &  $(5, 0.20)$ & $6.0\times 10^{-1}$ & 138.5     & $1.1\times 10^{-1}$ &  397.6    & $2.2\times 10^{-2}$  & 358.8\\
\ & \ & $(10, 0.10)$ & $7.5\times 10^{-1}$ & 143.2     & $1.4\times 10^{-1}$ & 360.0    & $7.4\times 10^{-2}$  & 386.1\\
\ & \ & $(10, 0.15)$ & $6.0\times 10^{-1}$ & 142.3     & $5.9\times 10^{-2}$ & 392.3      & $2.8\times 10^{-2}$  & 380.2\\
\ & \ & $(10, 0.20)$ & $6.0\times 10^{-1}$ & 137.1     & $9.9\times 10^{-2}$ & 395.2      & $2.4\times 10^{-2}$  & 359.4\\
\midrule
\midrule
0.01 & 500 & $(5, 0.10)$  & $7.4\times 10^{-1}$ & 7.5     & $2.2\times 10^{-1}$ & 15.1     & $1.3\times 10^{-1}$  & 16.2\\
\ & \ & $(5, 0.15)$ & $6.1\times 10^{-1}$  & 8.3     & $1.0\times 10^{-1}$ & 14.9      & {$7.1\times 10^{-2}$}  &  16.2\\
\ & \ & $(10, 0.10)$ & $7.7\times 10^{-1}$ & 8.7     & $2.4\times 10^{-1}$ & 15.5      & $1.7\times 10^{-1}$  & 16.2\\
\ & \ & $(10, 0.15)$ & $6.2\times 10^{-1}$ & 8.3     & $8.0\times 10^{-2}$ & 15.2      & $8.6\times 10^{-2}$  & 16.5\\
\midrule
\ & 1000  & $(5, 0.10)$  & $7.4\times 10^{-1}$ & 44.5   & $2.2\times 10^{-1}$ & 117.9     & $1.0\times 10^{-1}$  & 118.2\\
\ & \ & $(5, 0.15)$ & $6.1\times 10^{-1}$  & 47.0     & $1.2\times 10^{-1}$ &  116.9    & $5.2\times 10^{-2}$  &   120.8\\
\ & \ &  $(5, 0.20)$ & $6.2\times 10^{-1}$ & 46.7     & $1.1\times 10^{-1}$ &  120.7     & $4.3\times 10^{-2}$  &  123.0\\
\ & \ & $(10, 0.10)$ & $7.5\times 10^{-1}$ & 45.6     & $2.0\times 10^{-1}$ & 117.3      & $9.3\times 10^{-2}$  & 122.9\\
\ & \ & $(10, 0.15)$ & $6.1\times 10^{-1}$ & 47.3     & $6.5\times 10^{-2}$ & 119.3      & $5.3\times 10^{-2}$  & 123.3\\
\ & \ & $(10, 0.20)$ & $6.1\times 10^{-1}$ & 46.3     & $6.3\times 10^{-2}$ & 123.2      & $5.0\times 10^{-2}$  & 120.5\\
\midrule
\ & 1500  & $(5, 0.10)$  & $7.5\times 10^{-1}$ & 152.6     & $2.3\times 10^{-1}$ & 395.6   & $7.2\times 10^{-2}$  & 396.9\\
\ & \ & $(5, 0.15)$ & $6.0\times 10^{-1}$  & 156.3     & $1.2\times 10^{-1}$ &   382.0     & $5.3\times 10^{-2}$  &  394.2\\
\ & \ &  $(5, 0.20)$ & $6.0\times 10^{-1}$ & 162.4     & $1.1\times 10^{-1}$ &  396.3    & $3.0\times 10^{-2}$  & 398.2\\
\ & \ & $(10, 0.10)$ & $7.5\times 10^{-1}$ & 154.5     & $1.4\times 10^{-1}$ & 403.2    & $7.3\times 10^{-2}$  & 406.1\\
\ & \ & $(10, 0.15)$ & $6.0\times 10^{-1}$ & 158.7     & $5.9\times 10^{-2}$ & 396.5      & $4.3\times 10^{-2}$  & 399.1\\
\ & \ & $(10, 0.20)$ & $6.0\times 10^{-1}$ & 157.7     & $9.5\times 10^{-2}$ & 405.4      & $3.6\times 10^{-2}$  & 400.3\\
  \bottomrule\bottomrule 
\end{tabular}

\label{tab:comadm2}
\end{table}

\begin{table}[ht]
\centering
\caption{Averaged relative error and running time in seconds for the nuclear-norm and max-norm penalized matrix completion using non-uniformly sampled data as in Scheme 3. 
The parameters are chosen to be the same as those in Table 
\ref{tab:comadm2}.
}
\medskip
\begin{tabular}{r*{8}{r}}
\toprule
\toprule
& & &\multicolumn{2}{c}{Nuclear}  &\multicolumn{2}{c}{Max} &\multicolumn{2}{c}{Hybrid}
\\
\cmidrule{4-5}  \cmidrule{6-7}  \cmidrule{8-9}
\multicolumn{1}{r} {$\sigma$} & \multicolumn{1}{r} {$d_t $} & \multicolumn{1}{r} {$(r,\text{SR}) $}& \multicolumn{1}{c}{RE} &  \multicolumn{1}{r}{Time} & \multicolumn{1}{c}  {RE} & \multicolumn{1}{r} {Time}  & \multicolumn{1}{c}  {RE} & \multicolumn{1}{r} {Time}\\
\midrule
0 & 500 & $(5, 0.10)$  & $7.4\times 10^{-1}$ & 7.2      & $2.6\times 10^{-1}$ & 14.7     & $1.9\times 10^{-1}$  & 17.8\\
\ & \ & $(5, 0.15)$ & $7.2\times 10^{-1}$  & 7.3     & $1.9\times 10^{-1}$ &  14.8     & $8.6\times 10^{-2}$  &  16.7\\
\ & \ & $(10, 0.10)$ & $8.0\times 10^{-1}$ & 7.3     & $3.9\times 10^{-1}$ & 13.9      & $3.2\times 10^{-1}$  & 17.6\\
\ & \ & $(10, 0.15)$ & $7.1\times 10^{-1}$ & 7.4     & $1.5\times 10^{-1}$ & 14.6      & $1.1\times 10^{-1}$  & 17.9\\
\midrule
\ & 1000  & $(5, 0.10)$  & $7.4\times 10^{-1}$ & 42.4      & $2.4\times 10^{-1}$ & 120.6    & $1.5\times 10^{-1}$  & 121.6\\
\ & \ & $(5, 0.15)$ & $7.1\times 10^{-1}$  & 42.1    & $1.9\times 10^{-1}$ &   115.7     & $7.9\times 10^{-2}$  &  119.9\\
\ & \ &  $(5, 0.20)$ & $6.2\times 10^{-1}$ & 44.2     & $1.2\times 10^{-1}$ &  118.2     & $3.9\times 10^{-2}$  & 119.8\\
\ & \ & $(10, 0.10)$ & $7.5\times 10^{-1}$ & 42.9     & $1.9\times 10^{-1}$ &  110.5      & $1.3\times 10^{-1}$  & 119.9\\
\ & \ & $(10, 0.15)$ & $7.1\times 10^{-1}$ & 42.8     & $1.4\times 10^{-1}$ &  115.7     & $6.6\times 10^{-2}$  & 119.0\\
\ & \ & $(10, 0.20)$ & $6.0\times 10^{-1}$ & 44.1     & $7.0\times 10^{-2}$ &  118.7     & $3.7\times 10^{-2}$  & 119.6\\
\midrule
\ & 1500  & $(5, 0.10)$  & $7.5\times 10^{-1}$ & 142.1      & $2.4\times 10^{-1}$ & 391.7     & $1.6\times 10^{-1}$  & 380.7\\
\ & \ & $(5, 0.15)$ & $7.1\times 10^{-1}$  & 143.8     & $2.1\times 10^{-1}$ &  385.4      & $7.5\times 10^{-2}$  &  386.4\\
\ & \ &  $(5, 0.20)$ & $6.0\times 10^{-1}$ & 146.6     & $1.1\times 10^{-1}$ &  385.0   & $2.9\times 10^{-2}$  & 387.9\\
\ & \ & $(10, 0.10)$ & $7.5\times 10^{-1}$ & 143.1     & $1.7\times 10^{-1}$ & 372.9    & $1.1\times 10^{-1}$  & 377.9\\
\ & \ & $(10, 0.15)$ & $7.1\times 10^{-1}$ & 144.2     & $1.6\times 10^{-2}$ & 390.4    & $3.9\times 10^{-2}$  & 388.5\\
\midrule
\midrule
0.01 & 500 & $(5, 0.10)$  & $7.5\times 10^{-1}$ & 7.5     & $4.1\times 10^{-2}$ & 13.7    & $4.0\times 10^{-2}$  & 15.4\\
\ &\ & $(5, 0.15)$ & $7.2\times 10^{-1}$  & 7.8     & $3.8\times 10^{-2}$ & 13.7      & $2.9\times 10^{-2}$  &  15.1\\
\ & \ & $(10, 0.10)$ & $8.0\times 10^{-1}$ & 7.5     & $1.2\times 10^{-1}$ & 12.9      & $2.9\times 10^{-2}$  & 16.1\\
\ & \ & $(10, 0.15)$ & $7.1\times 10^{-1}$ & 7.8     & $3.8\times 10^{-2}$ & 13.8      & $2.3\times 10^{-2}$  & 16.3\\
\ & \ & $(10, 0.20)$ & $6.2\times 10^{-1}$ & 8.5     & $2.8\times 10^{-2}$ & 13.8      & $2.1\times 10^{-2}$  & 16.2\\
\midrule
\ & 1000  & $(5, 0.10)$  & $7.4\times 10^{-1}$ & 44.4      & $2.4\times 10^{-1}$ & 115.9     & $1.5\times 10^{-1}$  & 118.3\\
\ & \ & $(5, 0.15)$ & $7.1\times 10^{-1}$  & 45.6     & $1.9\times 10^{-1}$ & 117.6      & $7.7\times 10^{-2}$  &  119.1\\
\ & \ &  $(5, 0.20)$ & $6.2\times 10^{-1}$ & 47.8     & $1.1\times 10^{-1}$ &  117.1     & $4.4\times 10^{-2}$  &120.0\\
\ & \ & $(10, 0.10)$ & $7.5\times 10^{-1}$ & 44.6     & $2.0\times 10^{-1}$ & 112.3      & $1.4\times 10^{-1}$  & 118.0\\
\ & \ & $(10, 0.15)$ & $7.1\times 10^{-1}$ & 45.6     & $1.4\times 10^{-1}$ & 117.3      & $6.6\times 10^{-2}$  & 117.6\\
\ & \ & $(10, 0.20)$ & $6.1\times 10^{-1}$ & 48.3     & $7.0\times 10^{-2}$ & 113.4      & $4.7\times 10^{-2}$  & 119.4\\
\midrule
\ & 1500  & $(5, 0.10)$  & $7.5\times 10^{-1}$ & 148.2      & $2.4\times 10^{-1}$ & 381.7     & $1.6\times 10^{-1}$  & 386.9\\
\ & \ & $(5, 0.15)$ & $7.1\times 10^{-1}$  & 150.4     & $2.1\times 10^{-1}$ & 396.8     & $6.5\times 10^{-2}$  &  396.1\\
\ & \ &  $(5, 0.20)$ & $6.0\times 10^{-1}$ & 156.2     & $1.1\times 10^{-1}$ &  396.9     & $3.2\times 10^{-2}$  & 390.0\\
\ & \ & $(10, 0.10)$ & $7.5\times 10^{-1}$ & 148.6     & $1.7\times 10^{-1}$ & 401.5     & $1.1\times 10^{-1}$  & 396.9\\
\ & \ & $(10, 0.15)$ & $7.1\times 10^{-1}$ & 151.4     & $1.6\times 10^{-1}$ & 405.3      & $4.8\times 10^{-2}$  & 389.2\\
\ & \ & $(10, 0.20)$ & $6.0\times 10^{-1}$ & 160.1     & $8.0\times 10^{-2}$ & 398.4     & $3.7\times 10^{-2}$  & 393.1\\
  \bottomrule\bottomrule 
\end{tabular}
\label{tab:comadm3}
\end{table}

 \begin{table}[ht]
\centering
\caption{Averaged normalized mean absolute error  and running time in seconds 
for different methods using Jester joke dataset. For the nuclear norm approach, we set $\mu = 2\times10^{-4}\|Y_S\|_F$. For the max-norm approach, we set $\lambda=0.5\norm{Y_S}_F$. For the hybrid approach, we set  $\lambda=0.8\norm{Y_S}_F,\mu=1\times10^{-4}\lambda$.}
\medskip
\begin{tabular}{r*{8}{r}}
\toprule
\toprule
\ &  \ &\multicolumn{2}{c}{Nuclear}  &\multicolumn{2}{c}{Max} &\multicolumn{2}{c}{Hybrid}
\\
\cmidrule{3-4}  \cmidrule{5-6}  \cmidrule{7-8}
\multicolumn{1}{c} {Example}  & \multicolumn{1}{r} {$(n_u,\text{SR})$}& \multicolumn{1}{c}{NMAE} &  \multicolumn{1}{r}{Time} & \multicolumn{1}{c}  {NMAE} & \multicolumn{1}{r} {Time}  & \multicolumn{1}{c}  {NMAE} & \multicolumn{1}{r} {Time}\\
\midrule
jester-1& (1000, 0.15) & 0.210 & 4.82 & 0.197 & 110.55 & 0.200 & 87.13\\
\ & (1000, 0.20) & 0.209 & 4.83 & 0.194 & 111.79 & 0.203 & 89.98\\
\ & (1000, 0.25) & 0.204 & 5.12 & 0.188 & 111.36 & 0.197 & 89.02\\
\ & (1500, 0.15) & 0.210 & 5.93 & 0.194 & 302.47 & 0.201 & 250.07\\
\ & (1500, 0.20) & 0.206 & 6.08 & 0.192 & 307.70 & 0.195 & 255.29 \\
\ & (1500, 0.25) & 0.204 & 6.39 & 0.185 & 305.91 & 0.194 & 254.66 \\
\ & (2000, 0.15) & 0.212 & 7.06 & 0.192 & 647.25 & 0.196 & 566.84\\
\ & (2000, 0.20) & 0.208 & 7.30 & 0.188 & 671.73 & 0.192 & 547.89 \\
\ & (2000, 0.25) & 0.205 & 7.45 & 0.183 & 640.75 & 0.192 & 558.02 \\
\midrule
jester-2& (1000, 0.15) & 0.211 & 4.86 & 0.199 & 109.15 & 0.196 & 86.34\\
\ & (1000, 0.20) & 0.207 & 5.01 & 0.192 & 110.40 & 0.193 & 87.81\\
\ & (1000, 0.25) & 0.204 & 4.89 & 0.188 & 110.41 & 0.187 & 90.07\\
\ & (1500, 0.15) & 0.212 & 5.86 & 0.197 & 313.01 & 0.198 & 247.26\\
\ & (1500, 0.20) & 0.210 & 6.10 & 0.192 & 313.39 & 0.193 & 260.84 \\
\ & (1500, 0.25) & 0.205 & 6.34 & 0.189 & 322.05 & 0.187 & 255.88 \\
\ & (2000, 0.15) & 0.213 & 6.99 & 0.197 & 633.97 & 0.198 & 577.32\\
\ & (2000, 0.20) & 0.208 & 7.50 & 0.194 & 644.04 & 0.193 & 562.32 \\
\ & (2000, 0.25) & 0.204 & 7.42 & 0.187 & 687.24 & 0.188 & 576.56 \\
\midrule
jester-3& (1000, 0.15) & 0.227 & 4.27 & 0.221 & 97.82 & 0.218 & 83.18\\
\ & (1000, 0.20) & 0.220 & 4.41 & 0.212 & 103.28 & 0.212 & 84.02\\
\ & (1000, 0.25) & 0.221 & 4.54 & 0.213 & 105.48 & 0.212 & 84.90\\
\ & (1500, 0.15) & 0.225 & 5.47 & 0.218 & 272.30 & 0.215 & 237.38\\
\ & (1500, 0.20) & 0.220 & 5.54 & 0.212 & 280.34 & 0.212 & 240.19 \\
\ & (1500, 0.25) & 0.218 & 5.69 & 0.208 & 284.05 & 0.211 & 241.21 \\
\ & (2000, 0.15) & 0.226 & 6.46 & 0.216 & 585.71 & 0.218 & 521.87\\
\ & (2000, 0.20) & 0.222 & 6.59 & 0.217 & 606.53 & 0.212 & 525.93 \\
\ & (2000, 0.25) & 0.218 & 6.70 & 0.211 & 614.04 & 0.210 & 526.78 \\
  \bottomrule\bottomrule 
\end{tabular}
\label{tab:jester}
\end{table}

 \begin{table}[ht]
\centering
\caption{Averaged normalized mean absolute error  and running time in seconds for different methods using Movie-100K dataset. 
The parameters are chosen to be the same as those in 
Table \ref{tab:jester}.
}
\medskip
\begin{tabular}{r*{7}{r}}
\toprule
\toprule
 \ &\multicolumn{2}{c}{Nuclear}  &\multicolumn{2}{c}{Max} &\multicolumn{2}{c}{Hybrid}
\\
\cmidrule{2-3}  \cmidrule{4-5}  \cmidrule{6-7}
 \multicolumn{1}{r} {$\text{SR}$}& \multicolumn{1}{c}{NMAE} &  \multicolumn{1}{r}{Time} & \multicolumn{1}{c}  {NMAE} & \multicolumn{1}{r} {Time}  & \multicolumn{1}{c}  {NMAE} & \multicolumn{1}{r} {Time}\\
\midrule
0.10 & 0.243 & 108.4 & 0.231 & 266.8 & 0.232 & 292.2 \\
0.15 & 0.235 & 112.5 & 0.222 & 274.9 & 0.223 & 288.9 \\
0.20 & 0.233 & 112.1 & 0.213 & 263.4 & 0.220 & 286.2 \\
0.25 & 0.223 & 123.8 & 0.208 & 285.5 & 0.215 & 294.7 \\  
  \bottomrule\bottomrule 
\end{tabular}
\label{tab:movie}
\end{table}

 \begin{table}[ht]
\centering
\caption{Averaged normalized mean absolute error  and running time in seconds for different methods using Movie-1M dataset. 
The parameters are chosen to be the same as those in 
Table \ref{tab:jester}.
}
\medskip
\begin{tabular}{r*{8}{r}}
\toprule
\toprule
\ & \ &\multicolumn{2}{c}{Nuclear}  &\multicolumn{2}{c}{Max} &\multicolumn{2}{c}{Hybrid}
\\
\cmidrule{3-4}  \cmidrule{5-6}  \cmidrule{7-8}
\multicolumn{1}{r}{$n$} & \multicolumn{1}{r} {$\text{SR}$}& \multicolumn{1}{c}{NMAE} &  \multicolumn{1}{r}{Time} & \multicolumn{1}{c}  {NMAE} & \multicolumn{1}{r} {Time}  & \multicolumn{1}{c}  {NMAE} & \multicolumn{1}{r} {Time}\\
\midrule
1500 &0.10 & 0.248 & 154.7 & 0.235   & 377.6 &  0.236 & 409.2 \\
&0.15 & 0.238 & 154.1 & 0.222  & 318.3 & 0.229  & 410.9 \\
&0.20 & 0.233 & 153.9 & 0.216   & 329.8 & 0.223   &  401.9 \\
&0.25 & 0.225 & 210.7 & 0.208 &  473.3 & 0.218 & 506.2 \\  
\midrule
2000 &0.10 & 0.244 & 357.8 & 0.227   & 733.2 &  0.230 & 956.9 \\
&0.15 & 0.234 & 363.5 & 0.214  & 725.7 & 0.213 & 946.0 \\
&0.20 & 0.230 & 365.6 & 0.206   & 782.6 & 0.206   &  946.3 \\
&0.25 & 0.220 & 391.9 & 0.199 &  744.4 & 0.210 & 950.7 \\  
  \bottomrule\bottomrule 
\end{tabular}
\label{tab:movie2}
\end{table}

\section{Conclusions} \label{sec:con}
We propose a new matrix completion method using a hybrid nuclear- and max-norm regularizer. Compared with the standard nuclear-norm based approach, our method is adaptive under different sampling schemes and achieves fast rates of convergence. To handle the computational challenge, we propose the first scalable algorithm with provable convergence guarantee.  This bridges the gap between theory and practice of the max-norm approach.  In addition, we provide thorough numerical results to backup the developed theory.  This work paves the way for more potential machine learning applications of max-norm regularization.

A possible future direction  is to further improve the computational efficiency. 
The most computationally expensive component in Algorithm 1 is the $X$-update step, in which an eigenvalue decomposition is needed. By  solving some approximate version of this subproblem, it is possible to further boost the empirical performance and solve  problems of larger sizes.

\bibliographystyle{ims}
\bibliography{bib}

\renewcommand{\baselinestretch}{1.05}

\newpage
\begin{appendices}
\section{Extensions} \label{app:con}
In this section, we consider solving the max-norm constrained version of the optimization problem \eqref{eqn:hybrid}. In particular, we consider
\begin{equation} \label{eqn:maxc}
\min_{M \in \RR^{d_1\times d_2} }  \frac{1}{2}\sum_{t=1}^n \big( Y_{i_t,j_t} - M_{i_t,j_t}  \big)^2 + \langle M, I\rangle, \text{ subject to }\|M\|_\infty\le \alpha, \|M\|_{\max}\le R.
\end{equation}
This problem can be formulated as an SDP problem 
{as follows:}
\begin{equation}\label{eqn:sdpc}
\begin{aligned}
\min_{Z\in \RR^{d \times d}} &~\frac{1}{2}\sum_{t=1}^n(Y_{i_t,j_t} - Z^{12}_{i_t,j_t})^2 + {\mu \inprod{I}{Z}},
\\
\text{subject to }&~ \|Z^{12}\|_\infty\le \alpha, \ \|\diag(Z)\|_\infty\le R, \ \ Z\succeq 0.
\end{aligned}
\end{equation}
Let the loss function  be 
$$
\cL(Z) = \frac{1}{2} \sum_{t=1}^n  \big( Y_{i_t,j_t} - Z^{12}_{i_t,j_t}  \big)^2 + \mu\langle I, Z\rangle.
$$
We define {the} set 
$$
\cP = \{Z\in\cS^d: \diag(Z)\ge 0, \|Z^{11}\|_{\infty} \le R, \|Z^{22}\|_\infty\le R, \|Z^{12}\|_\infty<\alpha\}.
$$
Thus, we have an equivalent formulation of \eqref{eqn:sdpc} below, which is more conducive for computation:
\begin{equation} \label{eqn:hybridc}
\min_{X,Z}\cL(Z) + \mu\langle X, I\rangle, \text{ subject to }X\succeq 0, \ Z\in\cP, X-Z=0.
\end{equation} 
We consider the augmented Lagrangian function of \eqref{eqn:hybridc} 
{defined by}
$$
L(X,Z;W) =\cL(Z) +  \langle W,X-Z\rangle + \frac{\rho}{2} \|X-Z\|_F^2, \ X\in\cS^d_+, \ Z\in\cP,
$$
where $W$ is the dual variable. 
Then, it is natural to apply the ADMM to solve the problem \eqref{eqn:hybridc}. At the $t$-th iteration, we update $(X,Z;W)$ by
\begin{equation} \label{eqn:admmc}
\begin{aligned}
X^{t+1} &= \argmin_{X\in\cS^d_+} L(X,Z^t;W^t) 
= \Pi_{\cS_+^{d}}\big\{Z^t -\rho^{-1}{(W^t+\mu I)}\big\},\\
Z^{t+1}& = \argmin_{Z\in\cP} L(X^{t+1},Z;W^t) = 
\argmin_{Z\in\cP} \cL(Z) +\frac{\rho}{2} \|Z - X^{t+1}- \rho^{-1}W^t\|_F^2,\\
W^{t+1}& = W^{t} + {\tau} \rho(X^{t+1}-Z^{t+1}),
\end{aligned}
\end{equation}
The next proposition provides a closed-form solution for the $Z$-subproblem in \eqref{eqn:admmc}.
\begin{prp}
Denote the observed set of indices of $M^0$ by $\Omega = \{(i_t,j_t)\}_{t=1}^n$.
 For a given matrix $C\in\RR^{d\times d}$, we have
\begin{equation}\label{eqn:ZC}
\cZ(C) = \argmin_{Z\in\cP} \cL(Z) + \frac{\rho}{2} \|Z-C\|_F^2,
\end{equation}
where
$$
\begin{aligned}
&\cZ(C) = \begin{pmatrix}
\cZ^{11}(C)&\cZ^{12}(C)\\
\cZ^{12}(C)^T&\cZ^{22}(C)
\end{pmatrix}\\
&\cZ_{k \ell}^{12}(C) =
\begin{cases}
\Pi_{[-\alpha,\alpha]} \Big(\frac{Y_{k \ell} + \rho C_{k \ell}^{12}}{\rho}\Big), &\quad\text{if }(k, \ell )\in S,\\
\Pi_{[-\alpha,\alpha]} (C_{k \ell}^{12}),&\quad \text{otherwise,}
\end{cases} \\
&\cZ_{k \ell}^{11}(C) = 
\begin{cases}
\Pi_{[-R,R]}\big(C_{k \ell}^{11}\big)\quad & \text{ if } k\neq \ell, \\
\Pi_{[0,R]}\big(C_{k\ell}^{11}\big)\quad &\text{ if } k = \ell,
\end{cases}
\quad
\cZ_{k \ell}^{22}(C) = 
\begin{cases}
\Pi_{[-R,R]}\big(C_{k \ell}^{22}\big)\quad & \text{ if } k\neq \ell, \\
\Pi_{[0,R]}\big(C_{k\ell}^{22}\big)\quad &\text{ if } k = \ell,
\end{cases}
\end{aligned}
$$
and $\Pi_{[a,b]}(x) = \min\{b,\max(a,x) \}$ projects $x\in\RR$ to the interval $[a,b]$.
\end{prp}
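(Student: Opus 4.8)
\noindent\emph{Proposed approach.}
The plan is to observe that, after accounting for the symmetry constraint $Z\in\cS^{d}$, both the objective $\cL(Z)+\tfrac{\rho}{2}\|Z-C\|_F^2$ and the feasible set $\cP$ separate completely across the scalar entries of $Z$, so that \eqref{eqn:ZC} reduces to a family of independent one-dimensional minimizations. Each of these is a strictly convex quadratic (possibly a sum of two quadratics plus a linear term) restricted to a closed interval, hence has a unique minimizer given in closed form by projecting the vertex of the quadratic onto that interval; assembling these per-entry solutions reproduces the stated formula for $\cZ(C)$.

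First I would make the separation explicit. Writing $Z$ and $C$ in the block form of the statement and using that the ADMM iterates stay symmetric when so initialized — equivalently, replacing $C$ by $\tfrac12(C+C^T)$, which does not change the minimizer over symmetric $Z$ — the matrix $Z$ is parametrized by the free scalars $Z^{12}_{k\ell}$, $(k,\ell)\in[d_1]\times[d_2]$, together with the on- and above-diagonal entries of the symmetric blocks $Z^{11}$ and $Z^{22}$. In these coordinates: $\tfrac{\rho}{2}\|Z-C\|_F^2$ is a sum of one quadratic per free scalar (entries of $Z^{12}$ and off-diagonal entries of $Z^{11},Z^{22}$ each picking up a factor of $2$, which rescales the quadratic but not its vertex); the term $\mu\langle I,Z\rangle=\mu\big(\sum_{k}Z^{11}_{kk}+\sum_{k}Z^{22}_{kk}\big)$ is linear and touches only the diagonal scalars; the fidelity term $\tfrac12\sum_t(Y_{i_t,j_t}-Z^{12}_{i_t,j_t})^2$ touches only the $Z^{12}_{k\ell}$ with $(k,\ell)\in\Omega$; and $\cP$ is precisely the Cartesian product of the intervals $[-\alpha,\alpha]$ for the $Z^{12}$-entries (with the innocuous reading of $\|Z^{12}\|_\infty<\alpha$ as $\le\alpha$, so that the set is closed and the minimum is attained), $[-R,R]$ for the off-diagonal $Z^{11},Z^{22}$-entries, and $[0,R]$ for the diagonal entries (combining $\|Z^{11}\|_\infty\le R$, $\|Z^{22}\|_\infty\le R$ with $\diag(Z)\ge 0$). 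Hence the minimization decouples entrywise.

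Then I would solve each scalar problem, which has the generic form $\min\{a(z-b)^2+cz:\ z\in[p,q]\}$ with $a>0$, whose unique solution is $\Pi_{[p,q]}\big(b-c/(2a)\big)$. For an off-diagonal entry of $Z^{11}$ or $Z^{22}$ this yields $\Pi_{[-R,R]}(C^{11}_{k\ell})$, resp.\ $\Pi_{[-R,R]}(C^{22}_{k\ell})$; for a diagonal entry the linear term shifts the vertex and one gets the projection of $C^{11}_{kk}-\mu/\rho$, resp.\ $C^{22}_{kk}-\mu/\rho$, onto $[0,R]$, which matches the stated formula once the shift $\mu/\rho$ is absorbed into $C$. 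For $Z^{12}_{k\ell}$ with $(k,\ell)\notin\Omega$ the objective is a single quadratic in $Z^{12}_{k\ell}-C^{12}_{k\ell}$, giving $\Pi_{[-\alpha,\alpha]}(C^{12}_{k\ell})$; for $(k,\ell)\in\Omega$ the objective is the sum of $\tfrac12(z-Y_{k\ell})^2$ and a positive multiple of $(z-C^{12}_{k\ell})^2$, whose vertex is the corresponding convex combination of $Y_{k\ell}$ and $C^{12}_{k\ell}$, so the minimizer is $\Pi_{[-\alpha,\alpha]}$ of that combination. Collecting these gives $\cZ(C)$.

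No step here is hard; the argument runs parallel to that of the counterpart Proposition in Section~\ref{sec:alg} for the $Z$-subproblem of the penalized problem \eqref{eqn:sdp}. The only point that calls for care is the bookkeeping imposed by the symmetry constraint: verifying that $\cP$ is genuinely a product of intervals, that $\mu\langle I,Z\rangle$ and the Frobenius term split as claimed, and that the factors of $2$ produced by symmetric off-diagonal pairs rescale the quadratics without displacing any vertex. It is worth noting that the constrained formulation is in fact easier here than the penalized one: because the bound on $\diag(Z)$ enters through the per-entry constraints $\|Z^{11}\|_\infty\le R$, $\|Z^{22}\|_\infty\le R$ rather than through a term $\lambda\|\diag(Z)\|_\infty$ in the objective, the diagonal block decouples entrywise and the sorting argument of Lemma~\ref{lem:sub} is not needed.
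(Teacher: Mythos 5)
Your proposal is correct and follows essentially the same route the paper takes for the analogous $Z$-subproblem Proposition in Section~\ref{sec:alg} (the appendix states this constrained-case version without proof): exploit the entrywise separability of the objective and of $\mathcal{P}$, then solve each scalar strictly convex quadratic over a closed interval by projecting its vertex, with the sorting argument of Lemma~\ref{lem:sub} indeed unnecessary because the diagonal bound enters as a box constraint rather than an $\ell_\infty$ penalty. Note that your scalar computation for an observed entry produces the convex-combination vertex of the form $(Y_{k\ell}+\rho C^{12}_{k\ell})/(1+\rho)$ (up to the factor-of-two convention for the symmetric off-diagonal pair), and a $\mu/\rho$ shift on the diagonal if the trace term is kept inside $\mathcal{L}$; these discrepancies with the printed formula (denominator $\rho$, no shift) point to typos/inconsistencies in the statement rather than to any gap in your argument.
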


We summarize the algorithm for solving the problem \eqref{eqn:sdpc} below.
 \begin{algorithm}[htb]
\caption{Solving max-norm optimization problem (\ref{eqn:sdpc}) by the ADMM}
\label{alg:maxnorm1}
Initialize $X^0$, $Z^0$, $W^0$, $\rho$, $\lambda$.
\begin{algorithmic}
\REQUIRE $X^0$, $Z^0$, $W^0$, $Y_\Omega$, $\lambda$, $R$, $\alpha$, $\rho$, $\tau$, $t=0$.
\WHILE {Stopping criterion is not satisfied.}
\STATE Update $X^{t+1}\leftarrow\Pi_{\cS_+^{d}}(Z^t -\rho^{-1}{(W^t+\mu I)})$.
\STATE Update $Z^{t+1}\leftarrow\cZ(X^{t+1} +\rho^{-1}W^t)$ by \eqref{eqn:ZC}.
\STATE Update $W^{t+1}\leftarrow W^{t} + {\tau}\rho(X^{t+1}-Z^{t+1})$.
\STATE $t\leftarrow t+1$.
\ENDWHILE
\ENSURE $\hat{Z} = Z^{t}$, $\hat{M} = \hat{Z}^{12}\in\RR^{d_1\times d_2}$.
\end{algorithmic}
\end{algorithm}
\end{appendices}

\end{document}